\documentclass{article}

% Bigger margins for comments on the side.
\usepackage[
  margin=1in,               % or whatever text margins you want
  includefoot, heightrounded,
  marginparwidth=3.5cm,     % width for todo notes
  marginparsep=2mm          % gap between text and notes
]{geometry}
\usepackage{times} % (optional; but don't load fullpage)

%  original page layout, revert for camera ready.
% \usepackage{fullpage, times}

\usepackage[parfill]{parskip}

\newcommand{\stopgrad}{\textbf{sg}}

\newcommand{\fetchhead}{\mathrm{Fetch}}
\newcommand{\hidden}{\mathbf{h}}

\usepackage[utf8]{inputenc}  
\usepackage[T1]{fontenc}    
\usepackage{url}           
\usepackage{booktabs}     
\usepackage{amsfonts,bm}     
\usepackage{nicefrac}      
\usepackage{microtype}       
\usepackage{tikz}
\usepackage{svg}
\usetikzlibrary{fit,positioning,calc,shapes,arrows.meta}
\usepackage{subcaption}
\usepackage{soul}
\usepackage{csquotes}

\usepackage{natbib,hyperref}

\usepackage{color,xcolor}
\definecolor{darkblue}{rgb}{0, 0, 0.5}
\definecolor{darkgreen}{rgb}{0, 0.6, 0}
\definecolor{darkred}{rgb}{0.6, 0, 0}
\hypersetup{colorlinks=true, citecolor=darkblue, linkcolor=darkblue, urlcolor=darkblue}

\usepackage[colorinlistoftodos]{todonotes} 

\setlength{\marginparwidth}{3.5cm}

% For theorems and such
\usepackage{amsmath}
\usepackage{amssymb}
\usepackage{mathtools} 
\usepackage{amsthm}

\usepackage{amssymb}
\usepackage{xcolor}
\usepackage{wrapfig}

\usepackage{enumitem}

\setlist{nosep,leftmargin=0.3in} 
% Writing pseudocode
\usepackage{multirow}
\usepackage[noend]{algpseudocode}
\usepackage{algorithm}

\usepackage{graphicx}  
\usepackage{xurl}

% if you use cleveref..
\usepackage[capitalize,noabbrev,nameinlink]{cleveref}

\usepackage{mdframed}
\mdfsetup{
linecolor=black!40,
linewidth=0.5pt,
innerleftmargin = 2pt,
innertopmargin = 4pt,
innerrightmargin= 2pt,
innerbottommargin = 4pt,
}

\usepackage{listings}
\definecolor{codegreen}{rgb}{0,0.6,0}
\definecolor{codegray}{rgb}{0.5,0.5,0.5}
\definecolor{codepurple}{rgb}{0.58,0,0.82}
\definecolor{backcolour}{rgb}{0.95,0.95,0.92}
\lstdefinestyle{mystyle}{
    backgroundcolor=\color{backcolour},
    commentstyle=\color{codegreen},
    keywordstyle=\color{blue},
    numberstyle=\tiny\color{codegray},
    stringstyle=\color{codepurple},
    basicstyle=\ttfamily\small,
    breakatwhitespace=false,
    breaklines=true,
    captionpos=b,
    keepspaces=true,
    numbers=left,
    numbersep=5pt,
    showspaces=false,
    showstringspaces=false,
    showtabs=false,
    tabsize=2
}
\lstset{style=mystyle, language=Python}

\definecolor{lightgray}{gray}{0.9}
\newcommand{\inlinecode}[2]{\colorbox{lightgray}{\lstinline[language=#1]$#2$}}

\usepackage{tcolorbox}
\creflabelformat{equation}{#2\textup{#1}#3}

 \newtcolorbox[crefname={note}{notes}]{notebox}[1][]{%
    colback=blue!5!white,
    colframe=blue!10!white,
    sharp corners,
    before upper={{\bfseries Note~\thetcbcounter}.\ },
    #1
}
\newtcolorbox[auto counter, number within=section,crefname={insight}{insights}]{insightbox}[1][]{%
    colback=blue!5!white,
    colframe=blue!10!white,
    sharp corners,
    before upper={{\bfseries Insight~\thetcbcounter}.\ },
    #1
}

\newtcolorbox[auto counter, number within=section,crefname={insight}{insights}]{recommendationbox}[1][]{%
    colback=green!5!white,
    colframe=green!15!white,
    sharp corners,
    before upper={{\bfseries Recommendation~\thetcbcounter}.\ },
    #1
}

%%%%%%%%%%%%%%%%%%%%%%%%%%%%%%%%
% THEOREMS
%%%%%%%%%%%%%%%%%%%%%%%%%%%%%%%%
\theoremstyle{plain}
\newtheorem{theorem}{Theorem}[section]
\newtheorem{proposition}[theorem]{Proposition}

\theoremstyle{definition}
\newtheorem{definition}[theorem]{Definition}

\theoremstyle{remark}

\newlength\savewidth
\usepackage{xcolor}
\usepackage{tabulary}
\newcolumntype{x}[1]{>{\centering\arraybackslash}p{#1pt}}

 % comment for submission

 % comment for submission

\title{Next-Latent Prediction Transformers Learn Compact\\World Models}
% \title{Next-Latent Prediction Transformers}

\author{Jayden Teoh\footnotemark[1]
  \and Manan Tomar
  \and Kwangjun Ahn
  \and Edward S.~Hu
  \and Tim Pearce
  \and Pratyusha Sharma
  \and Akshay Krishnamurthy
  \and Riashat Islam
  \and Alex Lamb
  \and John Langford
}

\date{Microsoft Research}

\begin{document}

\maketitle 
{\renewcommand{\thefootnote}{\fnsymbol{footnote}}%
 \footnotetext[1]{Correspondence to: jayden\_t[at]mit[dot]edu}
}
% {\renewcommand{\thefootnote}{\fnsymbol{footnote}}%
%  \footnotetext[2]{Equal supervision.}
% }

\begin{abstract}
Transformers replace recurrence with a memory that grows with sequence length and self-attention that enables ad-hoc lookups over past tokens. Consequently, they lack an inherent incentive to compress history into compact latent states with consistent transition rules. This often leads to learning solutions that generalize poorly. 
We introduce \textbf{Next-Latent Prediction (NextLat)}, which extends standard next-token training with \emph{self-supervised} predictions in the latent space. Specifically, NextLat trains a transformer to learn latent representations that are predictive of the next latent state given the next token. Theoretically, we show that these latents provably converge towards \emph{belief states}, compressed information about the history necessary to predict the future. This simple auxiliary objective injects a recurrent inductive bias into transformers while leaving their architecture, parallel training efficiency, and inference unchanged. NextLat effectively encourages transformers to form compact internal world models with coherent belief states and transition dynamics---crucial properties not guaranteed by standard next-token prediction alone. Empirically, across benchmarks in world modeling, reasoning, planning, and language modeling, NextLat demonstrates significant gains over standard next-token prediction and other baselines in downstream accuracy, representation compression, and lookahead planning. Furthermore, NextLat enables \emph{variable-length self-speculative decoding}, accelerating inference by up to $3.3\times$ in language modeling. NextLat offers a simple yet effective paradigm for learning compact, predictive representations in transformers that generalize better. Our code is available at \url{https://github.com/JaydenTeoh/NextLat}. 
\end{abstract}

\begin{figure}[H]
  \centering
  \includegraphics[width=1.0\linewidth]{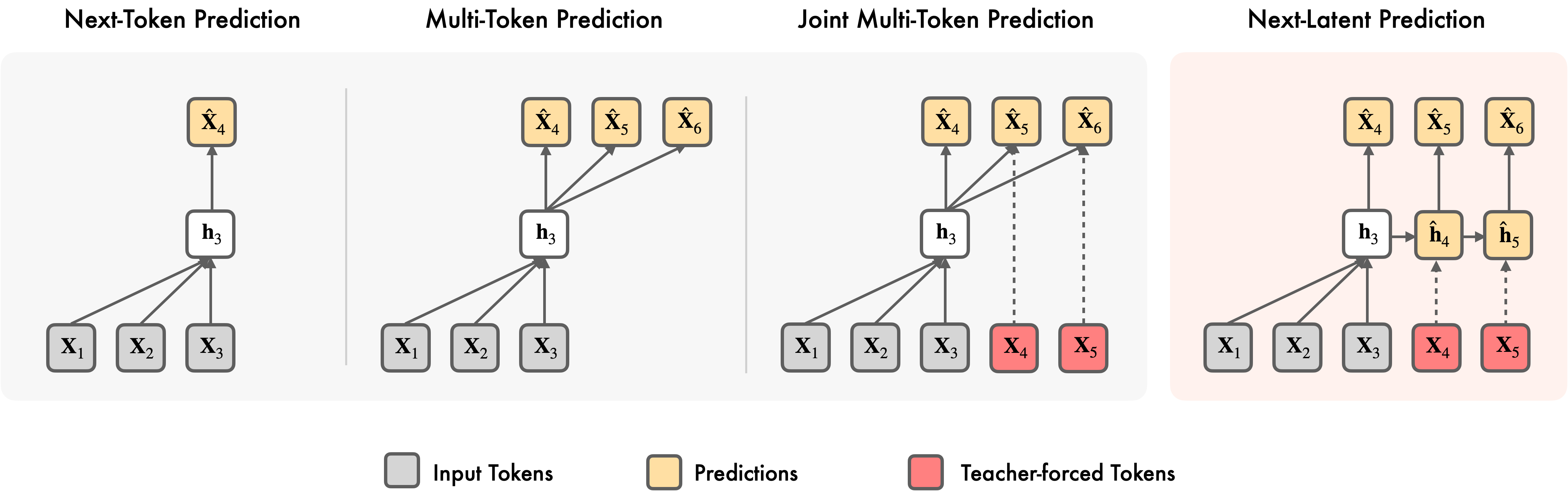}
  \caption{\textbf{Illustration of different predictive mechanisms.} Other methods supervise only in token space, leaving intermediate latent representations implicit. In contrast, NextLat explicitly trains the model to predict the next latent state ($h_{t+1}$) given the current latent state ($h_t$) and next token ($X_{t+1}$). Token supervision is then applied to the latent prediction ($\hat{h}_{t+1}$). Therefore, accurate multi-token predictions emerge from faithful latent dynamics modeling.}
  \label{fig:models}
\end{figure}

\section{Introduction}

Ptolemy's geocentric model was able to accurately predict observations of the solar system from Earth's viewpoint, yet it was structurally convoluted: at times implying that the Moon came twice as close to Earth as other times. It was later supplanted by Copernicus’s simpler, more compact heliocentric model, which provided accurate predictions that generalized beyond Earth’s perspective. In learning theory, it is well known that simpler explanations of training observations tend to generalize better \citep{blumer1986classifying, langford2005tutorial}.

Modern transformers \citep{transformers2017} stand in contrast to this principle.  By replacing recurrence with a memory that scales with sequence length and self-attention that enables flexible lookups over past tokens, they achieve exceptional parallelization and predictive power. Yet, this very capability removes any inherent pressure to compress history into compact latent representations with consistent update rules. As a result, transformers often learn complex, task-specific shortcuts that fit the training data well but generalize poorly \citep{anil2022lengthgeneralization,dziri2023faith,liu2023shortcuts,wu2024reasoning}. How can we encourage transformers to form simpler, more principled explanations that avoid such shortcuts? A natural approach is to reinstate a key property of recurrent models: the ability to learn \emph{compact world models} that channel future prediction through compressed representations of the past. We will show that this inductive bias can be reintroduced while also retaining the parallel training efficiency of transformers.

In this paper, we introduce \textbf{Next-Latent Prediction (NextLat)}, which extends the standard next-token prediction objective with self-supervised predictions in latent space. NextLat jointly trains a transformer and a latent dynamics model: the transformer learns to encode past tokens into compact latent summaries such that the dynamics model can predict the transformer’s next latent state given only the current latent state and the next token (i.e., the ``action''). This objective encourages the transformer to form a compact internal world model with coherent recurrent-like dynamics, while avoiding the sequential processing overhead of recurrent architectures. Importantly, NextLat leaves the transformer’s architecture, inference procedure, and parallel training efficiency unchanged, adding only a lightweight auxiliary loss on latent representations during training. By augmenting sparse one-hot token targets with dense latent-state supervision, NextLat provides richer learning signals and improves data efficiency compared to next-token prediction and other token-level supervision methods. Our approach is inspired by the \emph{self-predictive learning} paradigm in reinforcement learning (RL), a family of algorithms that learn representations by minimizing the prediction error of their own future latent states \citep{tang2023understanding,ni2024bridging}. 

Beyond representation learning, latent dynamics also provides an inference advantage: future states can be recursively predicted directly in latent space without invoking the main transformer. This recursive multi-step lookahead using the lightweight dynamics model enables \emph{variable-length self-speculative decoding}: the model can speculate a flexible number of future tokens and accelerate inference without requiring separate multi-token prediction heads.

The core contributions of this paper are threefold.
First, we establish a theoretical foundation showing that NextLat provably shapes transformer representations into \emph{belief states}---compact summaries of past information sufficient for predicting future observations. Such representations are important for planning and generalization, yet are not guaranteed to emerge from next-token prediction alone. 
Second, we present a practical implementation of NextLat that preserves the transformer’s architecture, inference procedure, and parallel training efficiency. 
% By minimizing next-latent prediction error during training, we impose a structural preference that promotes compression and generalization, without adding parameters or computation at inference time. 
Finally, we empirically demonstrate NextLat’s effectiveness across diverse domains spanning world modeling, reasoning, planning, and language modeling. Our results show that NextLat improves representation compactness, lookahead planning, and downstream accuracy over standard next-token prediction and other baselines. In language modeling, NextLat achieves up to $3.3\times$ faster inference through variable-length self-speculative decoding. Together, these results position NextLat as an efficient framework for learning compact, predictive, and generalizable representations in transformers.

% Using an induction-based argument, we show that the pressure to form such concise latent summaries arises naturally from the next-latent prediction objective.
\section{Related Work}
We are motivated by a long line of prior works in representation learning for prediction and control. Of close interest are self-supervised learning methods, belief states for decision making, and world models.

\paragraph{Self-Supervised Learning.} Self-supervised learning (SSL) is a framework for learning from unlabeled data, where a model generates its own supervisory signals from the structure of raw inputs. Across modalities such as vision, audio, and time series, SSL has proven highly effective for pretraining useful features, enabling downstream transfer that rivals, or even surpasses, models trained on labeled data \citep{liu2022audioselfsupervisedlearningsurvey,balestriero2023cookbookselfsupervisedlearning,zhang2024selfsupervisedlearningtimeseries}.
There are several approaches to SSL. 
Our method falls under \emph{self-predictive representation learning}, which jointly learns latent representations and a transition function that models how these representations evolve over a sequence. Self-prediction has driven state-of-the-art advances in RL \citep{gelada2019deepmdp,zhang2020learning,ye2021mastering,schwarzer2021dataefficientreinforcementlearningselfpredictive,hansen2022temporaldifferencelearningmodel}. However, latent-space SSL remains underexplored in language modeling. A recent effort, LLM-JEPA \citep{huang2025llm}, minimizes distances between embeddings of paired text--code data, but relies on manually curated pairs and therefore does not generalize to raw text.  In contrast, our method introduces a fully self-supervised latent prediction objective requiring no paired data, making it broadly applicable for training transformers across arbitrary sequence modeling domains and data sources.

\paragraph{Belief States.} In both sequence modeling and RL, models must reason over long histories of observations. 
% The information bottleneck principle \citep{tishby2015deep} suggests that constraining representational capacity encourages learning minimal sufficient representations that retain only information relevant for predicting the future. 
To mitigate this curse of dimensionality, prior work focuses on compressing history into latent representations that capture all information necessary for future prediction. In RL, this latent summary is formalized by \citet{kaelbling1998planning} as a \emph{belief state}, defined as: \emph{``a sufficient statistic for the past history \ldots\ no additional data about its past actions or observations would supply any further information about the current state of the world''}. In stochastic control, the same notion of sufficient statistics appears as ``information state'' \citep{striebel1965sufficient}. The idea of sufficient statistics is also key to learning state abstractions in RL \citep{li2006towards}. While recurrent neural networks naturally enforce such compression, transformers have no such constraint---their internal state, or memory, grows linearly with sequence length. Recently, Belief State Transformers (BST; \citep{hu2025the}) extended the notion of belief states to transformers, and demonstrated benefits in planning tasks. 
Compared to BST, NextLat learns belief states without requiring a separate transformer and is much more computationally efficient. We compare these methods further in \cref{section:discussions}.

\paragraph{World Models.} 
Loosely, a world model is an internal predictive model of how the world works, with varying interpretations across cognitive science \citep{craik1967nature,johnson1983mental}, neuroscience \citep{miall1996forward, friston2010free}, control theory \citep{francis1976internal,conant1970every} and reinforcement learning \citep{sutton1991dyna,schmidhuber1990making,ha2018world}. 
Whether transformer language models implicitly learn world models remains a debate; some studies report emergent world understanding \citep{patel2022mapping,li2023emergent,gurnee2024language}, while others find incoherent world structure \citep{vafa2024evaluating,vafa2025has}. However, successful world modeling approaches such as \emph{MuZero} \citep{schrittwieser2020mastering} for achieving superhuman performance in video and board games, 
\emph{Dreamer} \citep{hafner2019dreamerv1,hafner2021dreamerv2,hafner2025dreamerv3} for model-based RL, and \emph{Genie} \citep{bruce2024geniev1} for interactive video generation share a common principle: they learn a latent dynamics model that takes a latent state (an encoding of past observations), an action, and predicts the next latent state. Yet, learning such latent dynamics for transformer-based language modeling remains underexplored. NextLat addresses this gap by explicitly learning a latent dynamics model that governs how the transformer's latent states evolve given new tokens (i.e., ``actions''), enabling the transformer to learn compact latent abstractions of the world with consistent dynamics.

\paragraph{Beyond Next-Token Prediction.} In the domain of language modeling, a growing body of work has highlighted the myopic nature of the next-token prediction objective, which limits the model's capability in downstream tasks such as planning and reasoning \citep{bachmann24a,nagarajan2025roll}. Recent works have also found improvements from richer supervision signals that predict further into the future \citep{gloeckle2024,liu2024deepseek,hu2025the,ahn2025jtp}. However, these approaches operate predominantly in the token space. NextLat takes a different approach: it shifts prediction into the latent space, enforcing coherent dynamics over the model’s latent representations rather than its token outputs. As we discuss later in \cref{subsection:why_nextlat}, this latent-space supervision provides richer gradient signals than token-level supervision.

\paragraph{Speculative Decoding.} Speculative decoding \citep{leviathan2022fast, chen2023acceleratinglargelanguagemodel} accelerates inference by using a lightweight draft model to propose multiple tokens, which are then verified in parallel by a target model.
Closely related, \citet{li2024eagle1, li2024eagle2} train a draft model on top of a frozen language model to predict its high-level latent features. Unlike our approach, this is done post-hoc, whereas we study latent prediction as a pretraining objective to directly shape representations. Moreover, their draft model relies on full attention over past features and therefore does not learn belief states, i.e., compact summaries of past tokens.
\emph{Self-speculative decoding} removes the need for a separate draft model by having a single model act as both draft and verifier. A common implementation uses the multi-token prediction (MTP) objective, where the model is trained to predict multiple future tokens that can be verified in parallel \citep{gloeckle2024}. However, MTP operates in token space and is therefore \emph{typically} constrained to a fixed speculative horizon determined during training. In contrast, NextLat learns a latent dynamics model that can be recursively composed in latent space, enabling \emph{variable-length self-speculative decoding} and faster inference.

\section{Methodology: Next-Latent Prediction}
\label{sec:NH} 

In this section, we introduce a simple, yet powerful, method for learning belief states in transformers via next-latent prediction (or more specifically, via \emph{next-hidden state prediction}\footnote{In the sequence modeling literature, intermediate latent representations are often referred to as ``hidden states''. To disambiguate, we use the term ``latent state'' to broadly refer to learned representations within the transformer's residual stream, and ``hidden state'' to refer to a subset of this representation---specifically, the final layer’s output at each time step (i.e., the pre-logit activations).}). We begin by defining belief states in sequence modeling.
\begin{definition}[Belief states in sequence modeling] \label{def:belief_states}
Let $X_{1:T}$ denote a token sequence $X_1, \dots, X_T$. A random variable $\mathbf{b}_t = g(X_{1:t})$ is a \emph{belief state} for the history $X_{1:t}$ if, for every bounded measurable function $f$ of the future,
\begin{align*}
\mathbb{E}[f(X_{t+1:T}) \mid \mathbf{b}_t] = \mathbb{E}[f(X_{t+1:T}) \mid X_{1:t}] \quad \text{a.s.}
\end{align*}
\end{definition}
Equivalently, $\mathbf{b}_t$ is a \emph{sufficient statistic}~\citep{striebel1965sufficient} of the history $X_{1:t}$ for predicting the future tokens, i.e., from which we can sample from the distribution $\mathbb{P}(X_{t+1:T}\mid X_{1:t})$. Next, we describe how next-latent prediction enables transformers to learn belief states and improves data efficiency.

\subsection{Why Next-Latent Prediction?}
\label{subsection:why_nextlat}

Here we analyze an idealized next-latent prediction transformer which successfully optimizes both next-token prediction and next-latent prediction with respect to an underlying data distribution.
\begin{theorem}
\label{theorem:nhs_belief_states}
Consider the joint learning of three components:
\begin{enumerate}
    \item a transformer with parameters $\theta$ that produces hidden states $\hidden_t$ at each time step $t$,
    \item an output head $p_\theta$ modeling the next-token distribution, and
    \item a latent dynamics model $p_\psi$ modeling the transition dynamics of the transformer’s hidden states.
\end{enumerate}

If NextLat successfully optimizes the following objectives:
\begin{align}
\textbf{(Next-Token Consistency):} \quad &p_\theta(X_{t+1}\mid \hidden_t) =\mathbb{P}(X_{t+1}\mid X_{1:t}),\label{eq:emission_correctness} \\
\textbf{(Transition Consistency):} \quad &p_\psi(\hidden_{t+1} \mid \hidden_t, X_{t+1}) = \mathbb{P}(\hidden_{t+1} \mid  X_{1:t+1}), \label{eq:transition_correctness}
\end{align}
then $\hidden_t$ must be a belief state for the sequence $X_{1:t}$. Note that the right-hand side of \cref{eq:transition_correctness} is the transition law induced by the transformer's weights\footnote{We adopt a probabilistic formulation to retain generality with respect to stochastic transformer models, e.g. \citet{fleuret2025freetransformer}.}. 
\end{theorem}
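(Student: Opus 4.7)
The plan is to prove, by induction on the horizon $k$, that $\mathbb{P}(X_{t+1:t+k}\mid X_{1:t})$ depends on $X_{1:t}$ only through $\hidden_t$. Because $\hidden_t$ is (almost surely) a measurable function of $X_{1:t}$, this reduces to the belief-state identity $\mathbb{P}(X_{t+1:T}\mid \hidden_t)=\mathbb{P}(X_{t+1:T}\mid X_{1:t})$; integrating both sides against any bounded measurable $f$ then yields \cref{def:belief_states}. The base case $k=1$ is immediate from Next-Token Consistency \cref{eq:emission_correctness}, which equates $\mathbb{P}(X_{t+1}\mid X_{1:t})$ with $p_\theta(X_{t+1}\mid\hidden_t)$.

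For the inductive step, I factor via the chain rule and marginalize over the intermediate hidden state:
\begin{align*}
\mathbb{P}(X_{t+1:t+k+1}\mid X_{1:t})
&= p_\theta(X_{t+1}\mid\hidden_t)\cdot\mathbb{P}(X_{t+2:t+k+1}\mid X_{1:t+1}) \\
&= p_\theta(X_{t+1}\mid\hidden_t)\int \mathbb{P}(X_{t+2:t+k+1}\mid\hidden_{t+1})\,p_\psi(\hidden_{t+1}\mid \hidden_t, X_{t+1})\,d\hidden_{t+1}.
\end{align*}
The first line uses Next-Token Consistency together with the usual chain rule for $X_{t+1:t+k+1}$. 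The second line performs two replacements simultaneously: first, the inductive hypothesis applied at time $t+1$ with horizon $k$ lets me replace $\mathbb{P}(X_{t+2:t+k+1}\mid X_{1:t+1})$ by $\mathbb{P}(X_{t+2:t+k+1}\mid\hidden_{t+1})$ after marginalizing over $\hidden_{t+1}\sim\mathbb{P}(\cdot\mid X_{1:t+1})$; second, Transition Consistency \cref{eq:transition_correctness} replaces $\mathbb{P}(\hidden_{t+1}\mid X_{1:t+1})$ by $p_\psi(\hidden_{t+1}\mid\hidden_t, X_{t+1})$. The right-hand side is now a function of $(\hidden_t, X_{t+1:t+k+1})$; marginalizing the future tokens shows $\mathbb{P}(X_{t+1:t+k+1}\mid X_{1:t})$ depends on $X_{1:t}$ only through $\hidden_t$, closing the induction.

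The main obstacle is the careful order-of-conditioning argument used to pass from $\mathbb{P}(X_{t+2:t+k+1}\mid\hidden_{t+1}, X_{1:t+1})$ to $\mathbb{P}(X_{t+2:t+k+1}\mid\hidden_{t+1})$ in the marginalization step. This is trivial in the deterministic case where $\hidden_{t+1}$ is a function of $X_{1:t+1}$; in the stochastic setting it relies on the fact that the noise generating $\hidden_{t+1}$ from $X_{1:t+1}$ is independent of the future tokens, combined with the inductive hypothesis. Beyond this, the argument is purely structural: Next-Token Consistency certifies correct one-step emissions, Transition Consistency certifies correct one-step latent updates, and the induction chains these into an arbitrary-horizon latent rollout seeded by $\hidden_t$ alone---which is precisely what it means for $\hidden_t$ to be a sufficient statistic in the sense of \cref{def:belief_states}.
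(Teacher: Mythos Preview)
Your proposal is correct and follows essentially the same route as the paper: both arguments marginalize over the intermediate hidden state $\hidden_{t+1}$, invoke \cref{eq:emission_correctness} and \cref{eq:transition_correctness} to strip away the full-history dependence, and close via induction---including the same delicate conditional-independence step you flag, where $\mathbb{P}(X_{t+2:\cdot}\mid \hidden_{t+1}, X_{1:t+1})$ is reduced to $\mathbb{P}(X_{t+2:\cdot}\mid \hidden_{t+1})$. The only organizational difference is that the paper runs \emph{backward} induction on the time index $t$ (base case $t=T-1$, showing each $\hidden_k$ is a belief state given that $\hidden_{k+1}$ is), whereas you run \emph{forward} induction on the prediction horizon $k$ uniformly in $t$ (base case $k=1$); the underlying chain of equalities is the same, read in opposite directions.
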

\textit{Proof Sketch.} A formal proof by backward induction is provided in \cref{pf:nhs_belief_states}. Intuitively, optimizing for next-token (\cref{eq:emission_correctness}) and transition (\cref{eq:transition_correctness}) consistency ensures existence of measurable maps, i.e., $p_\theta$ and $p_\psi$, that allow recursive decoding of future tokens from $\hidden_t$:
    \begin{align*}
        \hidden_t 
        &\xrightarrow[\text{decode token}]{p_\theta} X_{t+1} 
        \xrightarrow[\text{update state}]{p_\psi} \hidden_{t+1} 
        \xrightarrow[\text{decode token}]{p_\theta} X_{t+2} 
        \xrightarrow[\text{update state}]{p_\psi} \hidden_{t+2} 
        \;\cdots\;
        \xrightarrow[]{p_\theta} X_T.
    \end{align*}
For these maps to exist, and be learned, $\hidden_t$ must jointly optimize toward a belief state---a sufficient statistic for the history to predict the future. 

\textit{Remark.} Optimizing only next-token consistency (i.e., \cref{eq:emission_correctness}) in standard autoregressive transformers does not guarantee that $\hidden_t$ forms a belief state (see Theorem 3 in \citet{hu2025the}). Intuitively, self-attention enables ad-hoc lookup of past tokens, so there is no pressure to compress all necessary information about the past into compact latent summaries at every time step.

\paragraph{Better Data Efficiency.}
A distinctive feature of NextLat is the richness of its learning signal. In prior methods that supervise in token space (see \cref{fig:models}), the learning signal is only anchored to the next token, or multiple tokens. NextLat additionally supervises in latent space. Specifically, the model is trained to predict its own next hidden state $\hidden_{t+1}$, which parameterizes the full predictive distribution over $X_{t+2}$. This shifts supervision from individual one-hot token labels to distribution-level alignment. Moreover, because the latent dynamics compose recursively---each latent is trained to predict the next---$\hidden_{t+1}$ implicitly carries information about future states $\hidden_{t+2}, \hidden_{t+3}, \dots$. As a result, NextLat not only provides learning signals that are dense in the vocabulary space, but also propagates information about future tokens into earlier representations. By augmenting sparse one-hot targets with dense latent-state supervision, NextLat extracts more learning signal from each training sequence, leading to improved data efficiency.

\subsection{Learning to Predict Next-Latent States}
We now describe the practical implementation of NextLat, which augments standard next-token prediction with a self-supervised predictions in the latent space. Our NextLat implementation operates primarily on the hidden states (i.e., the final-layer outputs) as they provide compact, fixed-dimensional vectors through which gradients can be propagated through the entire transformer efficiently. As usual, we optimize the transformer and output head for next-token prediction (\cref{eq:emission_correctness}) using the cross-entropy loss:
\begin{align*}
\mathcal{L}_\text{next-token}(\theta) 
= \mathbb{E}_{t < T} \big[- \log p_\theta (X_{t+1} \mid \hidden_{t}) \big].
\end{align*}
NextLat additionally enforces transition consistency (\cref{eq:transition_correctness}) of the hidden states by introducing a latent dynamics model $p_\psi$ that predicts the next hidden state $\hidden_{t+1}$ directly from $(\hidden_t, X_{t+1})$. For a deterministic transformer model, $\mathbb{P}(\hidden_{t+1} \mid \hidden_t, X_{t+1})$ is a Dirac distribution, and we can optimize $p_\psi$ via regression\footnote{If considering a stochastic transformer model, $p_\psi$ can be optimized through variational inference.}. Moreover, observe that an ideal latent dynamics model should admit recursive consistency: its one-step map should compose correctly across multiple steps. Let $\hat{\hidden}_{t+d} = p_\psi(\hidden_t, X_{t+1:t+d})$ denote the recursive rollout of $p_\psi$ over a $d$-step horizon using teacher-forced tokens $X_{t+1:t+d}$. We supervise all $d$ intermediate rollouts using the Smooth L1 loss:
\begin{align}\label{eq:loss_next_hidden}
\mathcal{L}_\text{next-h} (\theta,\psi;d)
= \mathbb{E}_{t} \Big[\frac{1}{d}\sum_{i=1}^d 
\mathrm{SmoothL1Loss}\big(\stopgrad[\hidden_{t+i}], \hat{\hidden}_{t+i}\big) \Big],
\end{align}
where $\stopgrad[\cdot]$ denotes the stop-gradient operator, used to prevent representational collapse in self-predictive learning \citep{ni2024bridging}\footnote{Technically speaking, the next-token prediction objective already provides grounding against representational collapse. However, in our ablations, we empirically observe better performance when applying the stop-gradient.}. \textbf{Note that belief state convergence (i.e., \cref{theorem:nhs_belief_states}) already holds for $d=1$. Multi-step supervision serves only to provide richer learning signal.} 
% Empirically, including this operator improves performance.
To further align the semantics of predicted states $\hat{\hidden}$ with true states, we introduce a complementary KL objective enforcing agreement in token-prediction space:
\begin{align}\label{eq:loss_kl}
\mathcal{L}_\mathrm{KL} (\theta,\psi;d)
= \mathbb{E}_{t } \Big[\frac{1}{d}\sum_{i=1}^d 
D_{\mathrm{KL}}\!\left(
p_\theta^{\stopgrad}(\cdot \mid{\stopgrad[\hidden_{t+i}]})
\;\|\;
p_\theta^{\stopgrad}(\cdot \mid{\hat{\hidden}_{t+i}})
\right) \Big],
\end{align}
where the output head $p_\theta^{\stopgrad}(\cdot)$ is frozen so that gradients flow only through the latent dynamics model. This KL acts similarly to \emph{knowledge distillation}~\citep{hinton2015distilling}, providing soft supervision that guides learning of $p_\psi$. It also resembles \emph{observation reconstruction} in self-predictive RL \citep{subramanian2022approximate,ni2024bridging}, encouraging $\hat{\hidden}_{t+i}$ to reproduce the distribution over next observations (i.e., the output head's logits).

\textbf{Overall Objective.}  
The final NextLat objective combines all components, minimizing the following loss:
\begin{align}\label{eq:all_losses}
\mathcal{L}_\text{NextLat}(\theta, \psi;d, \lambda_\text{next-h}, \lambda_\mathrm{KL})
= \mathcal{L}_\text{next-token}(\theta)
+ \lambda_\text{next-h}\, \mathcal{L}_\text{next-h} (\theta,\psi;d)
+ \lambda_\mathrm{KL}\, \mathcal{L}_\mathrm{KL} (\theta,\psi;d),
\end{align}
where $\lambda_\text{next-h}, \lambda_\mathrm{KL} > 0$ are scalar coefficients. Importantly, during inference, the learned transformer can decode independently; $p_\psi$ is only needed during training to shape the transformer representations. 

In the experiments that follow, we parameterize $p_\psi$ using simple MLPs as our goal is to demonstrate that NextLat yields significant performance gains over baselines even without sophisticated latent dynamics architectures. Additional implementation details and ablations of key NextLat design choices are provided in \cref{section:more_details_nextlat,app:ablations}, respectively. To illustrate the simplicity of our approach, we also include a PyTorch-style pseudocode of the NextLat objective in \autoref{alg:pytorch-code}.

\subsection{Variable-Length Self-Speculative Decoding}
\begin{figure}[htbp]
  \vspace{0.5em}
  \centering
  \includegraphics[width=1\linewidth]{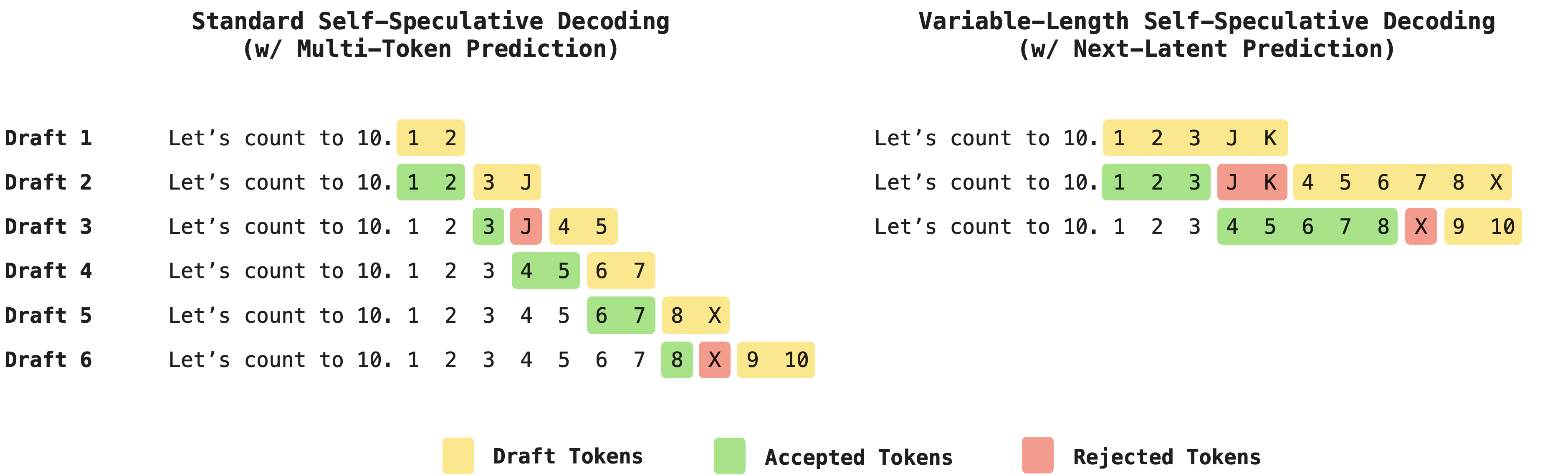} 
  \caption{Illustration comparing self-speculative decoding with multi-token prediction (MTP) vs. NextLat. MTP uses a fixed draft length per step (e.g., $d=2$ tokens here), whereas NextLat enables \emph{variable-length self-speculative decoding}, reducing draft-verification cycles and accelerating inference.}
  \label{fig:flexible_spec_decoding}
\end{figure}
In \cref{fig:flexible_spec_decoding}, we illustrate the difference in self-speculative decoding capabilities between MTP methods and NextLat\footnote{The figure illustrates the potential for adaptive draft lengths with NextLat. In the scope of this work, however, we do not implement adaptive drafting strategies; once selected, the draft length remains static throughout decoding. Instead, we exploit the ability to \emph{vary} these fixed draft lengths to extend beyond the training horizon $d$ in order to maximize inference speedup.}. MTP models are trained to predict the next $d$ tokens and are therefore limited to drafting at most $d$ tokens per draft–verification cycle. In contrast, even when trained with just $d=1$ (i.e., no multi-step supervision), NextLat can recursively compose predictions via its latent dynamics: 
\begin{align*}
    \hidden_t  \xrightarrow[\text{decode token}]{p_\theta} X_{t+1} \xrightarrow[\text{update state}]{p_\psi} \hidden_{t+1} \xrightarrow[\text{decode token}]{p_\theta} X_{t+2} \xrightarrow[\text{update state}]{p_\psi} \hidden_{t+2} \;\cdots\;,
\end{align*}
enabling variable-length drafting during self-speculative decoding. We demonstrate in \cref{subsection:finewebedu} that, even with shallow training horizons ($d=1, 2$) in the language domain, NextLat’s latent dynamics remain coherent far beyond the training horizon, enabling longer drafts and much faster inference than MTP baselines.

\section{Experiments}

Modeling coherent latent dynamics and compact beliefs about the underlying data-generating process is fundamental to both algorithmic and human reasoning. Therefore, in this section, we evaluate NextLat on four key axes where such capabilities matter most: world modeling, reasoning, planning, and language modeling.

Our baseline comparisons include transformer-based belief-learning methods, i.e., BST~\citep{hu2025the} and JTP~\citep{ahn2025jtp}. Further discussions and detailed comparisons with these methods are provided in \autoref{section:discussions} and \autoref{app:belief-state}. For completeness, we also report the performances of standard next-token prediction (GPT) and multi-token prediction (MTP). The MTP baseline follows the implementation of \citet{gloeckle2024}, and we follow the \texttt{nanoGPT} codebase \citep{Karpathy2022} for our decoder-only transformer implementations. Hereafter, we use the term ``horizon'' to refer to the multi-step prediction horizon $d$ in JTP, MTP and NextLat, and we match horizon across these methods in all experiments to ensure fair comparisons. For specific experiment details such as hyperparameters, evaluation procedure, etc., please refer to \cref{section:experiment_details}.

\subsection{World Modeling}
\label{subsection:manhattan_exp}

\begin{table}[htbp]
  \centering
  \small
  \begin{tabular}{lccccc}
    \toprule
    & \begin{tabular}{@{}c@{}}Next-Token \\ Test\end{tabular} $(\uparrow)$ & \begin{tabular}{@{}c@{}}Valid \\ Trajectories\end{tabular} $(\uparrow)$ & \begin{tabular}{@{}c@{}}Sequence \\ Compression\end{tabular} $(\uparrow)$ & \begin{tabular}{@{}c@{}}Effective \\ Latent Rank\end{tabular} $(\downarrow)$ & \begin{tabular}{@{}c@{}}Detour \\ Robustness\end{tabular} $(\uparrow)$ \\
    \midrule
    GPT & 100\% & 97.0\% & 0.65 & 160.1 & 85.0\% \\
    MTP &  100\% & 98.1\% & 0.64 & 57.7  & \textbf{95.0\%} \\
    JTP & 100\% & 97.1\% & 0.32 & 215.8 & 87.0\% \\
    NextLat & 100\% & \textbf{98.7\%} & \textbf{0.71} & \textbf{52.7} & \textbf{95.0\%} \\
    \midrule
    True world model & 100\% & 100\% & 1.00 & --- & 100\% \\
    \bottomrule
  \end{tabular}
  % participation ratios: GPT: 14.20, MTP: 102.86, JTP: 26.53, NextLat: 6.63
  \caption{Comparison of GPT, MTP, JTP, and NextLat trained on Manhattan taxi rides against the true world model across several metrics.} 
  % \vspace{-1em}
  \label{tab:manhattan_results}
\end{table}
\citet{vafa2024evaluating} introduced a dataset of turn-by-turn taxi rides in Manhattan, where the true world model (i.e., the city’s street map) is visually interpretable. Their study revealed that transformers trained on such trajectories can achieve near-perfect next-token accuracy, yet their internal maps remain incoherent; they reconstruct streets with impossible orientations and even flyovers above other roads.

\paragraph{Setup.} We use the \emph{random walks} dataset from \citet{vafa2024evaluating}, which consists of random Manhattan traversals (91M sequences, 4.7B tokens) between taxi pickup and dropoff points. 
Models are trained for 6 epochs (vs. 1 epoch in their study) as we observe performance does not converge within a single epoch. Due to its high computational cost, BST is excluded from this benchmark. For JTP, MTP, and NextLat, we set the multi-step prediction horizon at $d=8$. 
We evaluate world-modeling performance using five comprehensive metrics:

\begin{enumerate}
    \item Next-Token Test: Percentage of top-1 token predictions corresponding to legal turns under teacher-forcing on in-distribution validation sequences.
    \item Valid Trajectories: Percentage of valid traversals for out-of-distribution (OOD) pickup--dropoff pairs.
    \item Sequence Compression: Percentage of cases where the model produces identical continuations when prompted with two different traversals arriving at the same state and sharing the same destination.
    \item Effective Latent Rank: Effective rank/dimension of hidden states measured as the exponentiated Shannon entropy of the normalized singular values \citep{roy2007effective}; lower values indicate better compression.
    \item Detour Robustness: Percentage of valid traversals for OOD pickup--dropoff pairs when we substitute the model’s top-1 prediction with random detours (legal turns) 75\% of the time.
\end{enumerate}

The results are shown in \cref{tab:manhattan_results}. More details on training and evaluation are provided in \cref{section:experiment_details}. We also refer motivated readers to \citet{vafa2024evaluating} for further explanations of the next-token test, valid trajectories, sequence compression, and detour robustness metrics.

\begin{figure}[htbp]
     \centering
     \begin{subfigure}[b]{0.23\textwidth}
         \centering
         \includegraphics[width=\textwidth, keepaspectratio]{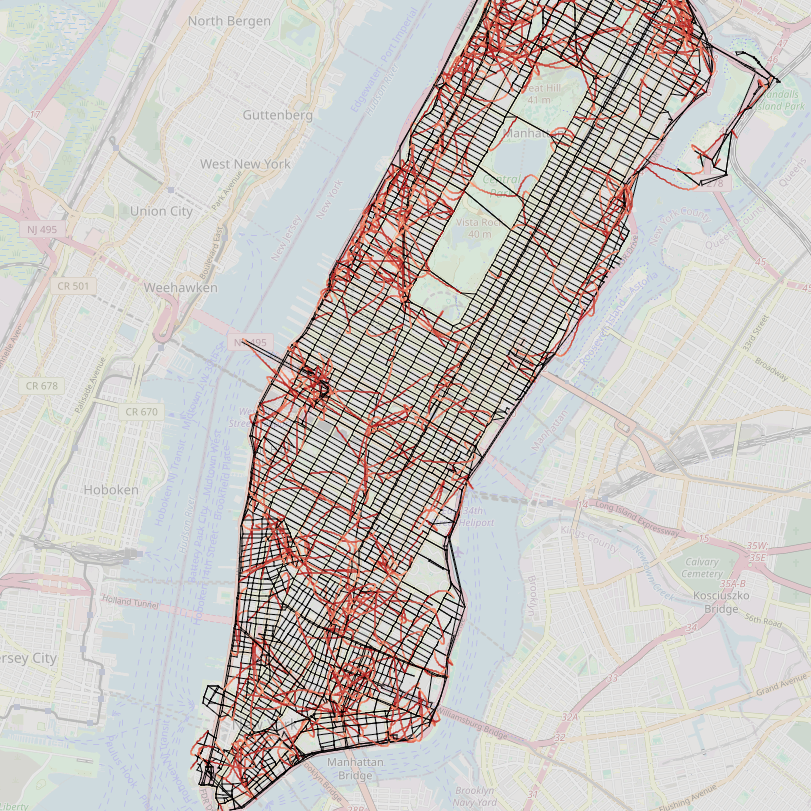}
         \caption{GPT}
         \label{fig:GPT_manhattan}
     \end{subfigure}
     \hfill
     \begin{subfigure}[b]{0.23\textwidth}
         \centering
         \includegraphics[width=\textwidth, keepaspectratio]{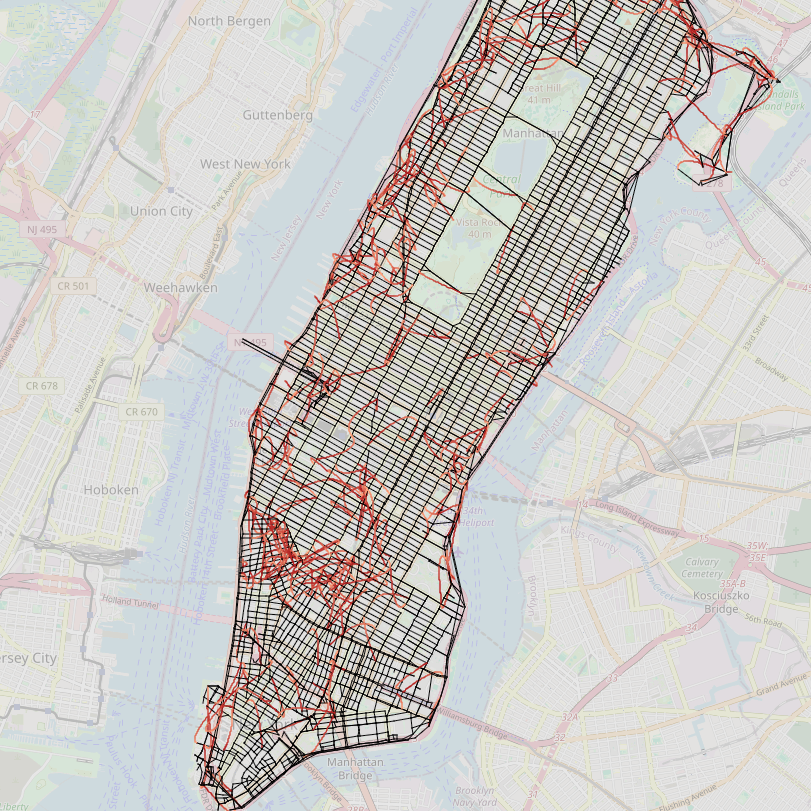}
         \caption{MTP}
         \label{fig:MTP_manhattan}
     \end{subfigure}
     \hfill
     \begin{subfigure}[b]{0.23\textwidth}
         \centering
         \includegraphics[width=\textwidth, keepaspectratio]{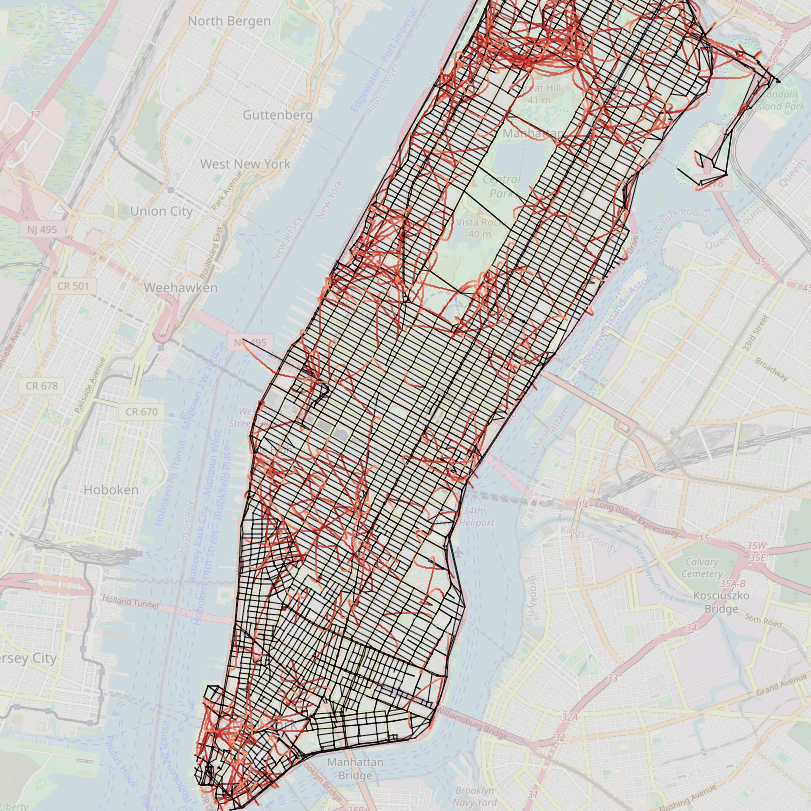}
         \caption{JTP}
         \label{fig:JTP_manhattan}
     \end{subfigure}
     \hfill
     \begin{subfigure}[b]{0.23\textwidth}
         \centering
         \includegraphics[width=\textwidth, keepaspectratio]{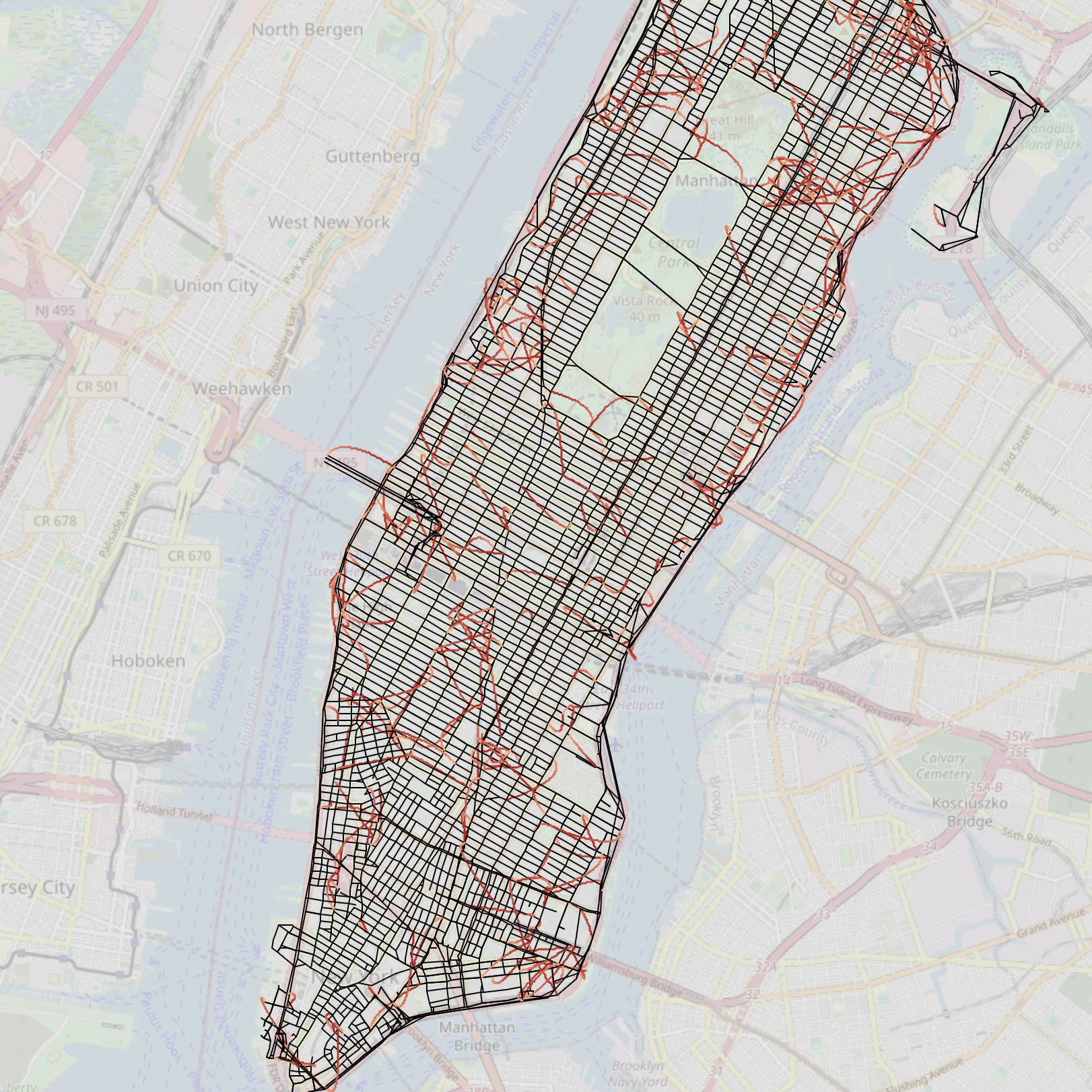}
         \caption{NextLat (ours)}
         \label{fig:SPLAT_manhattan}
     \end{subfigure}
     \caption{Reconstructed maps from transformers trained on Manhattan taxi rides using different objectives.}
     \vspace{-1em}
     \label{fig:manhattan_maps_full}
\end{figure} 

\paragraph{Results.} Similar to the original study, all models achieved 100\% accuracy on the next-token test. However, next-token accuracy is a limited diagnostic and cannot meaningfully assess the quality of a model’s learned world model. In \cref{fig:manhattan_maps_full}, we visualize each model's internal map using the reconstruction algorithm proposed by \citet{vafa2024evaluating}. Visibly, the transformer trained with NextLat exhibits an internal map more consistent with the true world model. Although not perfect, its inconsistencies (red edges) are sparse and mostly local. Beyond this qualitative evidence, NextLat consistently outperforms all baselines across all metrics. On the trajectory validity and detour robustness metrics, NextLat demonstrates the \textbf{strongest generalization to OOD pickup--dropoff pairs}, even when random detours are introduced.

Next, we analyze the compactness of the learned world models using two compression metrics. A model that accurately captures the underlying states and transitions should assign identical continuations to trajectories that end in the same state (i.e., intersection in Manhattan). By this criterion, NextLat achieves the \textbf{highest sequence compression of 0.71}. The true Manhattan graph comprises only 4,580 intersections and 9,846 edges, and therefore an effective world model should require only a modest latent dimensionality. Indeed, NextLat has the \textbf{lowest effective latent rank of 52.7---over 3x smaller than GPT's}. The combination of stronger planning performance and more compact latent representations reinforces the view that NextLat, by promoting belief state representations and coherent latent dynamics, enables transformers to learn substantially better world models---ones that are both accurate in their predictive structure and efficient in their internal representation of the environment.

\subsection{Reasoning}
\label{subsection:countdown_exp}

\begin{wrapfigure}{r}{0.4\textwidth}
      \centering
      % \small
      % \vspace{-1em}
      % \begin{minipage}[b]{\linewidth}
      \begin{tabular}{llc}
        \toprule
        \textbf{Model} & \textbf{Horizon ($\mathbf{d}$)} & \textbf{Accuracy (\%)} \\ 
        \midrule
        GPT   & -- &    33.1   \\
        \midrule
        BST   & -- &   42.3    \\
        \midrule
        \multirow{3}{*}{MTP} & 1 &  39.2  \\
                             & 4 &  49.7   \\
                             & 8 &  57.3    \\
        \midrule
        \multirow{3}{*}{JTP} & 1 &   39.0    \\
                             & 4 &   49.4    \\
                             & 8 &   55.0    \\
        \midrule
        \multirow{3}{*}{NextLat} & 1 &   54.8   \\
                             & 4 &   \underline{57.6}   \\
                             & 8 &   \textbf{58.7}  \\
        \bottomrule
      \end{tabular}
      \caption{Performance on Countdown. Best result is \textbf{bolded}, and second best is \underline{underlined}.}
      \label{tab:countdown_results}
      % \end{minipage}
      % \begin{minipage}[b]{\linewidth}
      %     \begin{figure}[H]
      %         \centering
      %           \vspace{-1em}
      %           \includegraphics[width=1.0\linewidth]{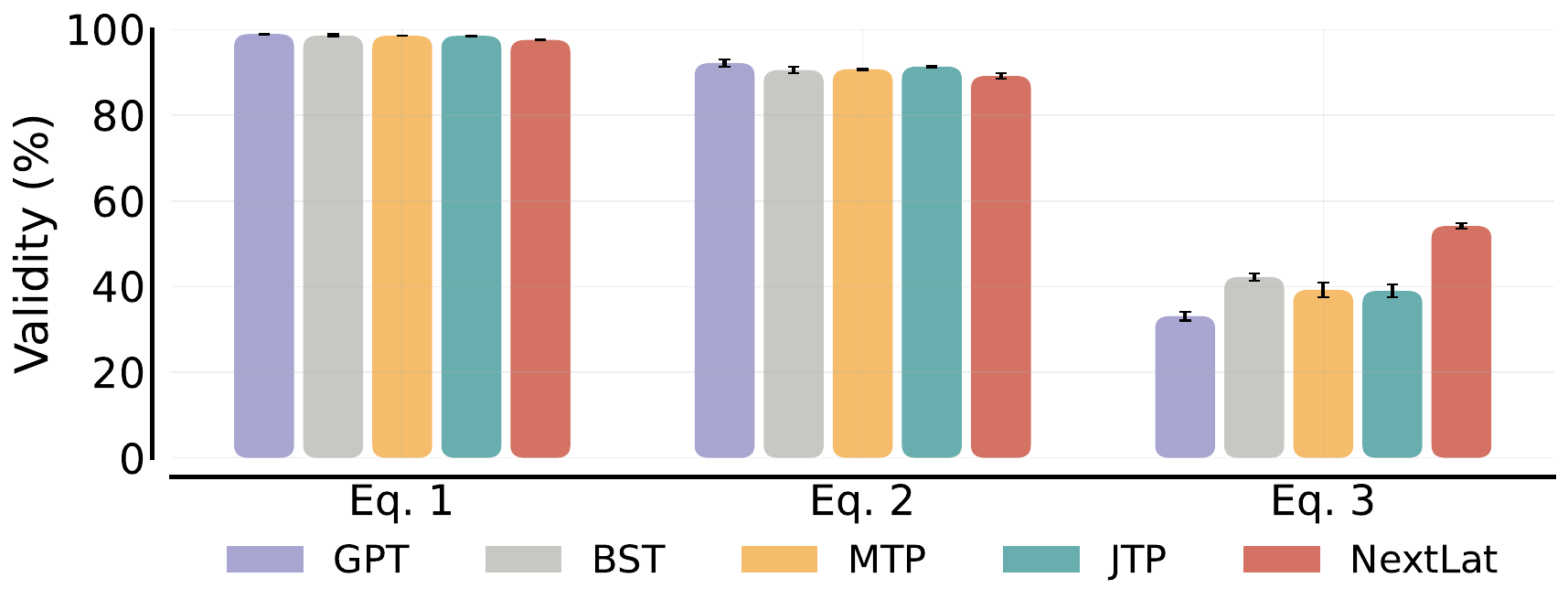}
      %           \caption{Validity of equations (i.e., LHS = RHS) generated on Countdown. All models in this plot use $d=1$.}
      %           % \vspace{-1em}
      %           \label{fig:countdown_validity}
      %     \end{figure}
      % \end{minipage}
  % \vspace{-4em}
  % \vspace{-2em}
\end{wrapfigure}

Countdown \citep{countdown2025} is a mathematical reasoning task and a generalized version of the Game of 24, which even frontier models such as GPT-4 \citep{achiam2023gpt} have struggled with, achieving 4\% by default \citep{gameof242023}. The goal of the task is to combine a set of given numbers with basic arithmetic operations $(+, -, \times, \div)$ to obtain a target number. For example, given the numbers $\{90,8,20,50\}$, the target number $24$ can be obtained using the following sequence of equations: $90\times8=720,\:50-20=30,\:720\div30=24$. Countdown poses a difficult combinatorial search problem due to its large branching factor and the need to efficiently explore the solution space to reach the target number.

\paragraph{Setup.} Following \citet{gandhi2024stream}, we generate 500k training problems with target numbers ranging from 10 to 100 and reserve 10\% of the targets for out-of-distribution evaluation. During both training and testing, we insert eight `pause tokens' \citep{goyal2023think} after the target number, allowing models additional computation to plan before generating a solution. Performance is measured as the percentage of 10k test problems for which a model produces a valid sequence of equations that correctly reaches the target number. The reported results average over three random seeds per baseline. 

\begin{wrapfigure}{r}{0.4\textwidth}
  \centering
  \vspace{1em}
  \includegraphics[width=\linewidth]{images/countdown_validity_new.pdf}
  \caption{Validity of equations (i.e., LHS = RHS) generated on Countdown. All models in this plot use $d=1$.}
  \label{fig:countdown_validity}
  % \vspace{-1em}
\end{wrapfigure}
\paragraph{Results.} As shown in \cref{tab:countdown_results}, NextLat consistently outperforms all baselines in Countdown. Notably, even with a shallow supervision horizon of $d=1$, NextLat substantially surpasses MTP and JTP trained with the same horizon (\textbf{>35.7\% improvement}). To better understand this gap, we analyzed the equations generated by each model and evaluated their equation validity, i.e., whether the computed left-hand side equals the right-hand side. As shown in \cref{fig:countdown_validity}, most calculation errors occur in the final equation (Eq. 3). This indicates that the model’s lack of planning capability results in its realization of being unable to achieve the goal only at the end. Unable to revise earlier missteps, it forces an invalid final equation to match the desired outcome---a behavior termed \emph{the regretful compromise} by \citet{ye2025beyond}. NextLat demonstrates \textbf{stronger lookahead planning}: even with $d=1$, it achieves substantially higher mean validity in the final equation ($54.8$\%) compared to the next best baseline ($42.3$\%). This suggests that the latent-state prediction objective may help the model anticipate long-range dependencies and form globally consistent plans, reducing the tendency to make myopic errors. NextLat also achieves leading performance with horizons $d=4$ and $8$.

\subsection{Planning}
\label{section:planning}

\begin{figure}[htbp]
    \centering
    \vspace{-1em}
    \begin{minipage}[b]{0.7\linewidth}
        \begin{figure}[H]
            \centering
            \includegraphics[width=1.0\linewidth]{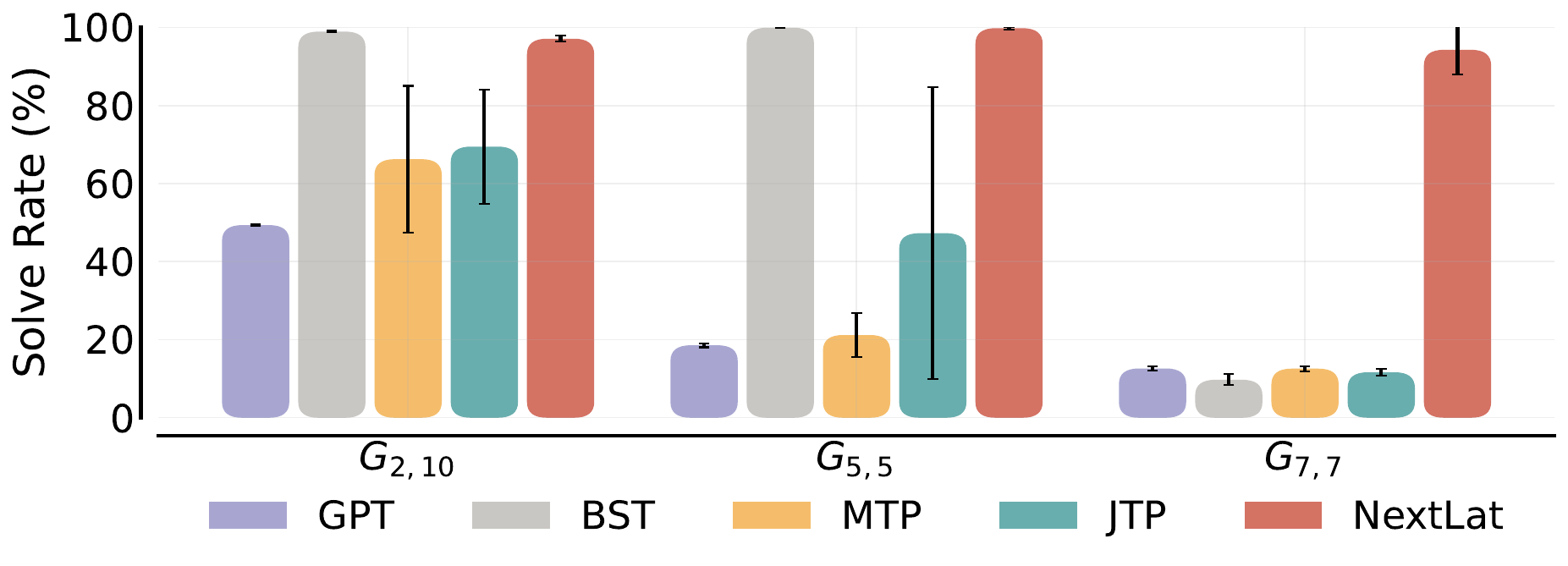}
            \caption{Accuracy on Path-Star graph task. Unlike the baselines, NextLat maintains close to 100\% solve rate for all graphs.}
            \label{fig:stargraph_result}
        \end{figure}
    \end{minipage}
    \hfill
    \begin{minipage}[b]{0.25\linewidth}
        \hfill
        \begin{figure}[H]
            \centering
            \includegraphics[width=1.0\textwidth]{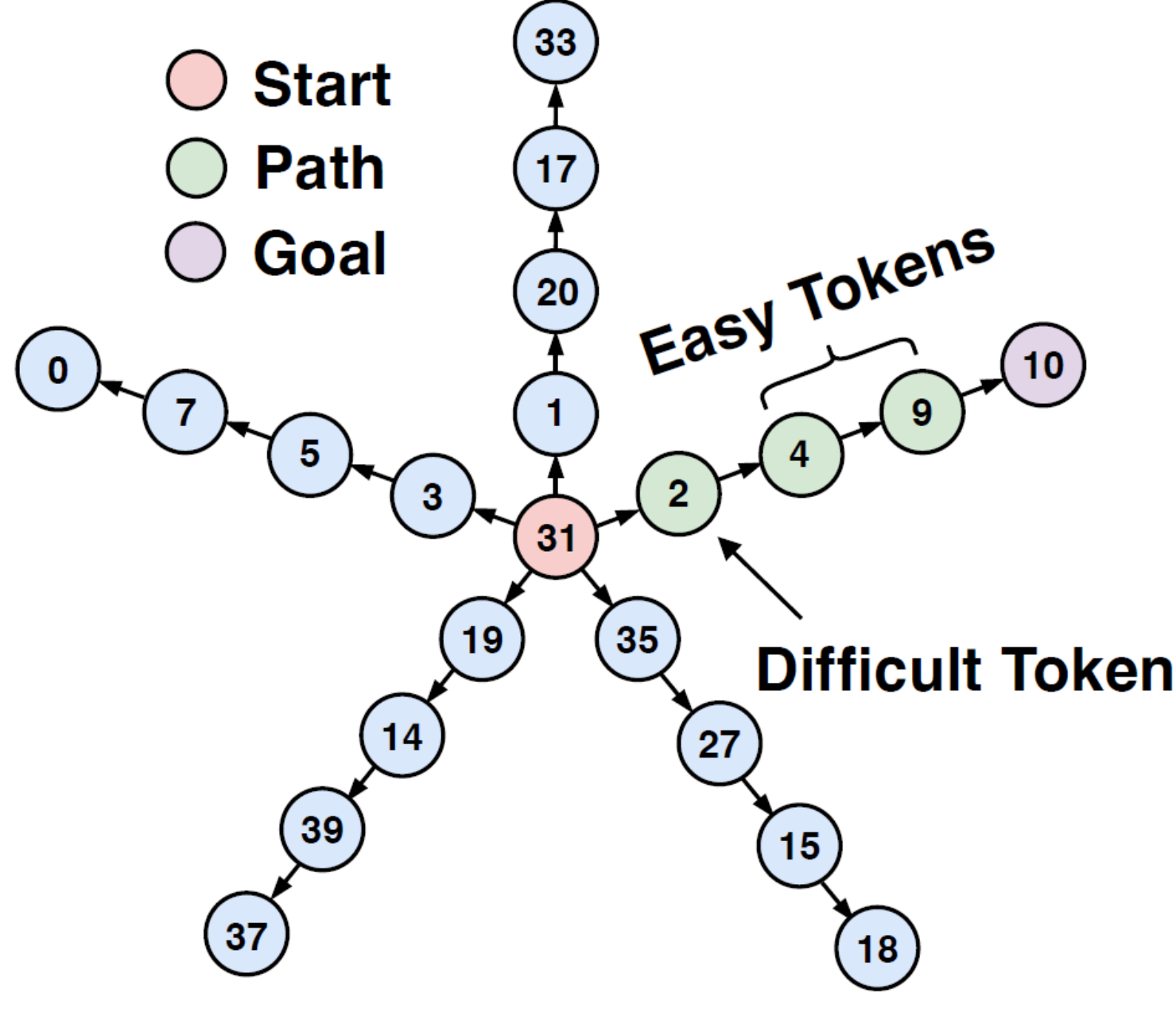}
            \caption{Illustration of a $G_{5,5}$ Path-Star graph \citep{bachmann24a}.}
            \label{fig:stargraph}
        \end{figure}
    \end{minipage}
\end{figure}

A Path-Star graph \citep{bachmann24a} $G_{d,\ell}$ consists of a center node and $d$ disjoint arms, each consisting of $\ell-1$ nodes. \cref{fig:stargraph} depicts an instantiation of the $G_{5,5}$ topology. A training instance is a tokenized sequence that contains the edge list, the start and end nodes, and the correct path from start to end. This task represents a minimal instance of lookahead planning, a core capability underlying more complex behaviors such as storytelling. Yet, despite its apparent simplicity, next-token prediction models struggle to solve it.

\paragraph{Setup.} Following \citet{bachmann24a}, we generate 200k training samples and set $N=100$, such that node values in each graph are randomly drawn from ${1, \dots, 100}$. For MTP, JTP, and NextLat, we set the multi-step prediction horizon to $d = \ell - 2$, ensuring that the target (end) node lies within the multi-step prediction horizon of the center node. We evaluate performance across three graph configurations: $G_{2,10}$, $G_{5,5}$, and $G_{7,7}$ across five random seeds per baseline.

\paragraph{Results.} As shown in \cref{fig:stargraph_result}, NextLat maintains close to 100\% solve rate for all topologies of the Path-Star graphs. BST, while able to solve $G_{2,10}$ and $G_{5,5}$, begins to fail at the larger graph $G_{7,7}$. Note that our results differ from that presented in \citet{hu2025the} (BST) and \citet{ahn2025jtp} (JTP). In their setup, they use a much smaller problem settings of $N=50$, and generate a fresh batch of graphs every iteration. On the other hand, we use the original (and more difficult) setup which has a fixed sample size of 200k and $N=100$. 

The Path-Star graph task is specifically designed to reveal the myopic behavior of teacher-forced next-token prediction models, which tend to exploit local shortcuts instead of learning to perform lookahead planning necessary to solve the task. This phenomenon, termed the \emph{Clever Hans cheat} \citep{bachmann24a}, is related to the difficulty of learning parity \citep{hu2025the} and has motivated methods such as BST and JTP that attempt to mitigate shortcut learning through multi-token predictions. However, these approaches operate in token space, making them still susceptible to local $n$-gram regularities that do not capture the underlying transition structure required for long-horizon planning. In contrast, NextLat performs prediction in latent space, enforcing recurrent transition consistency at the representation level. NextLat’s success across all graph configurations, unlike MTP, JTP, and BST, suggests that latent-space prediction better avoids shortcut learning and yields more generalizable solutions. Furthermore, given the limited training samples (i.e., 200k) in this task, data efficiency is crucial. NextLat's strong performance in this low-data regime suggests that it extracts more useful supervision from each training sequence than token-prediction baselines, consistent with our analysis in \cref{subsection:why_nextlat}.

\begin{figure}[htbp]
    \centering
    \includegraphics[width=1.0\linewidth]{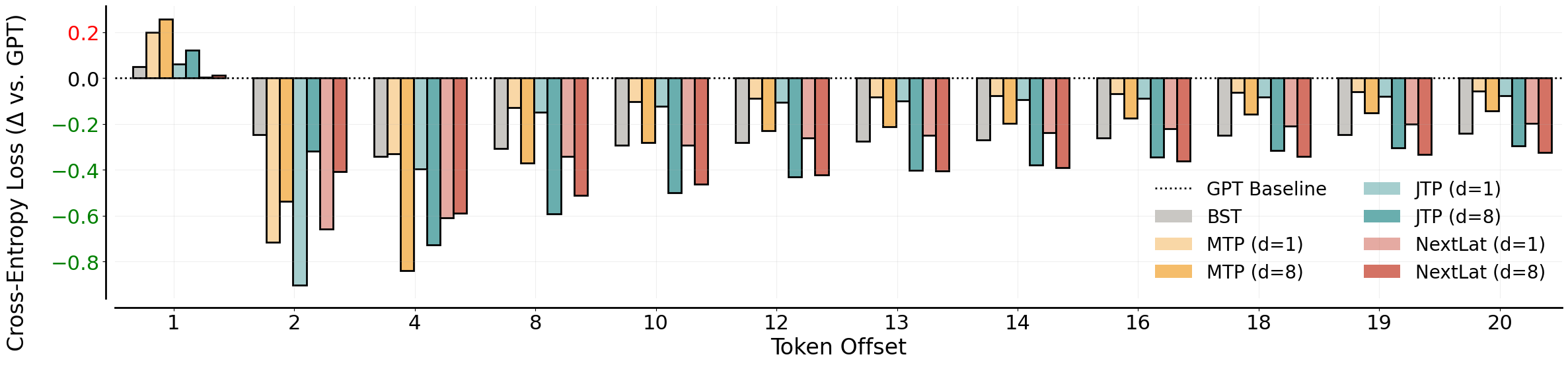}
    \caption{Cross-entropy loss difference relative to GPT, obtained from linear probes trained on frozen hidden states to predict tokens at varying token offsets (x-axis values) ahead. Lower values indicate better predictive performance.}
    \label{fig:tinystories_result}
\end{figure}
Next, we compare the models on TinyStories \citep{eldan2023tinystories}, a dataset consisting of synthetic short stories. Storytelling is inherently a long-horizon planning problem; a coherent narrative requires maintaining persistent entities, tracking causal relationships, resolving conflicts and delayed resolutions, and satisfying narrative constraints across many timesteps. Generating such sequences therefore depends not only on next-token prediction, but on belief-state--like abstractions that encode information predictive of future story trajectories.  

\paragraph{Setup.} Following \citet{hu2025the}, we tokenize the dataset of 2.7 million stories into a vocabulary of 1,000 tokens and construct training sequences of length 256. All models are trained for 100k steps, which is sufficient for convergence. We include comparisons of transformers trained using MTP, JTP, and NextLat with multi-step prediction horizons $d \in \{1, 8\}$. After training, we freeze the model parameters and train 20 independent linear probes, one per token offset, to predict tokens at offsets $1,\dots,20$ steps ahead from the hidden states of the frozen models using the same dataset. This allows us to assess whether the models’ representations encode information predictive of future tokens, or just local token correlations.

\paragraph{Results.} We plot the difference in probe performance relative to probes trained on GPT’s hidden states in \cref{fig:tinystories_result}. For clarity, we display only selected token offsets here (see \cref{fig:tinystories_results_full20} in the appendix for full results). Observe that the additional token-level prediction objectives in BST, MTP, and JTP consistently cause significant degradation in next-token prediction (i.e., token offset = 1). Moreover, probe performance on JTP and MTP representations declines sharply with increasing token offset. This indicates that these multi-token prediction models, lacking guarantees of learning belief states, could encode information useful only for short-horizon prediction. In contrast, NextLat matches GPT’s next-token performance across both $d \in \{1, 8\}$ and exhibits the strongest long-range predictive capability (up to 20 tokens ahead) for both $d=1$ and $d=8$. These results suggest that NextLat’s latent-state objective induces belief-like representations that encode predictive information about future events---an ability essential for maintaining coherence in long-range narrative generation tasks like TinyStories.

\subsection{Language Modeling}
\label{subsection:finewebedu}
\begin{table}[H]
\centering
\resizebox{\columnwidth}{!}{%
\begin{tabular}{l|ccc|cccccccccc}
\toprule
\textbf{Model} 
& \begin{tabular}{@{}c@{}}\textbf{FW-Edu} \\ ppl $\downarrow$\end{tabular} 
& \begin{tabular}{@{}c@{}}\textbf{Wiki.} \\ ppl $\downarrow$\end{tabular} 
& \begin{tabular}{@{}c@{}}\textbf{LAMB.} \\ ppl $\downarrow$\end{tabular}  
& \begin{tabular}{@{}c@{}}\textbf{LAMB.} \\ acc $\uparrow$\end{tabular} 
& \begin{tabular}{@{}c@{}}\textbf{PIQA} \\ acc $\uparrow$\end{tabular}
& \begin{tabular}{@{}c@{}}\textbf{HellaS.} \\ acc $\uparrow$\end{tabular}
& \begin{tabular}{@{}c@{}}\textbf{Wino.} \\ acc $\uparrow$\end{tabular}
& \begin{tabular}{@{}c@{}}\textbf{ARC-e} \\ acc $\uparrow$\end{tabular}
& \begin{tabular}{@{}c@{}}\textbf{ARC-c} \\ acc $\uparrow$\end{tabular}
& \begin{tabular}{@{}c@{}}\textbf{SIQA} \\ acc $\uparrow$\end{tabular}
& \begin{tabular}{@{}c@{}}\textbf{SciQ} \\ acc $\uparrow$\end{tabular}
& \textbf{Avg.} \\
\midrule

GPT 
& \textbf{10.52} & \textbf{17.93} & 20.26 & 42.07 & 73.45 & \textbf{58.79} & \underline{60.46}
& 68.18 & 39.16 & 42.32 
& 86.10
& 58.82 \\

JTP (d=1) 
& 11.08 & 19.28 & 21.88 & 41.35 & \textbf{74.92} & 57.43 & 58.64
& 68.73 & 39.25 & 42.99 
& \underline{87.30}
& \underline{58.83} \\

JTP (d=2) 
& 11.18 & 19.60 & 22.11 & 41.37 & 73.34 & 56.84 & 59.98
& 68.86 & 38.57 & \textbf{43.35}
& 86.70
& 58.63 \\

MTP (d=1) 
& 10.90 & 18.82 & 20.23 & 41.26 & \underline{74.32} & 58.05 & \textbf{60.54}
& 68.52 & 38.91 & 42.84 
& 85.40
& 58.76 \\

MTP (d=2) 
& 11.00 & 18.61 & \underline{18.34} & \underline{43.43} & 72.80 & 57.92 & 59.35
& 68.35 & 39.08 & 41.97 
& 86.60
& 58.69 \\

\midrule
\textbf{NextLat (d=1)} 
& \underline{10.83} & \underline{18.39} & 19.77 & 41.08 & 73.07 & \underline{58.35} & 59.27
& \underline{69.65} & \underline{39.68} & \underline{43.24} 
& 86.00
& 58.79  \\

\textbf{NextLat (d=2)} 
& 10.88 & 18.44 & \textbf{17.83} & \textbf{43.86} & 73.61 & 57.79 & 59.20
& \textbf{69.74} & \textbf{40.10} & 41.91 
& \textbf{87.50}
& \textbf{59.21} \\

\bottomrule
\end{tabular}
}
\caption{Downstream language modeling evaluation on 1.3B-parameter models trained on 100B FineWeb-Edu tokens. Best scores are in bold and second-best are underlined.}
\label{tab:lm_eval_results}
\end{table}

\paragraph{Setup.} We pretrain 1.3B-parameter models on 100B tokens from the FineWeb-Edu dataset \citep{penedo2024fineweb}, excluding BST due to its high computational cost. After pretraining, we use the LM Evaluation Harness \citep{eval-harness} to evaluate the zero-shot accuracy of the models on multiple-choice language modeling benchmarks. We also evaluate the self-speculative decoding performance of the multi-step prediction models, i.e., JTP, MTP, and NextLat, across Wikipedia, Books, Code, and Math domains. For each dataset, we sample 1024 prompts of length 512 tokens and generate 512-token continuations using the speculative sampling algorithm of \citet{leviathan2022fast}. For each model, we report the average number of accepted tokens per drafting step, as well as the inference speedup relative to standard autoregressive sampling from the base transformer, measured on $8\times$NVIDIA B200 GPUs. For multi-step prediction models, we primarily focus our analysis on training with horizons of $d=1$ and $d=2$. MTP and JTP operate in token space and are therefore usually limited to speculative decoding within the fixed multi-token horizon used during training. Our aim is to highlight the advantages of NextLat’s variable-length self-speculative decoding, which enables drafting beyond the training horizon. To this end, we vary NextLat’s speculative draft length between 2 and 10 tokens and report the highest inference speedup achieved for each domain.

\begin{table}[ht]
\centering
\resizebox{\columnwidth}{!}{%
\begin{tabular}{lcc|cc|cc|cc}
\toprule
 & \multicolumn{2}{c}{\textbf{Wikipedia}} 
 & \multicolumn{2}{c}{\textbf{Books}} 
 & \multicolumn{2}{c}{\textbf{Code}}
 & \multicolumn{2}{c}{\textbf{Math}} \\
\cmidrule(lr){2-3} \cmidrule(lr){4-5} \cmidrule(lr){6-7} \cmidrule(lr){8-9}
\textbf{Model} 
& Speedup & Accepted Tokens 
& Speedup & Accepted Tokens 
& Speedup & Accepted Tokens
& Speedup & Accepted Tokens \\
\midrule
JTP (d=1) & 1.46 & 0.96 & 1.47 & 0.97 & 1.47 & 0.98 & 1.46 & 0.97  \\
JTP (d=2) & 1.88 & 1.84 & 1.90 & 1.89 & 1.88 & 1.85 & 1.89 & 1.86 \\
MTP (d=1) & 1.38 & 0.91 & 1.39 & 0.95 & 1.40 & 0.97 & 1.39 & 0.95 \\
MTP (d=2) & 1.68 & 1.72 & 1.72 & 1.83 & 1.75 & 1.91 & 1.72 & 1.84 \\
\midrule
\textbf{NextLat (d=1)} & \underline{2.68} & \underline{3.52} & \underline{2.72} & \underline{3.64} & \underline{2.29} & \underline{2.66} & \underline{2.30} & \underline{2.72} \\
\textbf{NextLat (d=2)} & \textbf{3.21} & \textbf{4.59} & \textbf{3.32} & \textbf{4.86} & \textbf{2.38} & \textbf{2.83} & \textbf{2.87} & \textbf{3.94} \\
\bottomrule
\end{tabular}%
}
\caption{Relative speedup and average accepted tokens per drafting steps over diverse domains. Note that ``Accepted Tokens" excludes the next-token prediction which is always accepted.}
\label{tab:spec_eval_results}
\end{table}

\paragraph{Results.} In \cref{tab:lm_eval_results}, we present the language modeling perplexity (ppl) and zero-shot accuracy (acc) of the pretrained models across several benchmarks. Consistent with observations in \citet{gloeckle2024}, we do not observe significant improvements in multiple-choice task accuracy over standard next-token training (GPT) when using multi-token prediction objectives. NextLat ($d=2$) does show a modest gain in average accuracy over GPT (59.21 vs.\ 58.82), but these improvements are not consistent across tasks. Larger model sizes might be necessary to see more significant improvements. Notably, NextLat \textbf{better preserves next-token perplexity compared to MTP and JTP} across FineWebEdu, Wikitext, and LAMBADA. This is consistent with our earlier observations on TinyStories in \cref{section:planning}. Preserving high fidelity in next-token prediction is arguably important, as prior work shows that lower pretraining perplexity correlates with improved aggregate downstream and post-fine-tuning performance at larger model scales \citep{gadre2024languagemodelsscalereliably,zhang2026trainbeforetest}.

Next, in \cref{tab:spec_eval_results}, we show the self-speculative decoding results. Observe that the average accepted tokens per drafting steps for NextLat far exceeds its training horizon $d$, indicating that the learned latent dynamics remains coherent over extended rollouts. This further highlights the strong long-range predictive capability of the induced belief state representations. Crucially, this ability to support variable-length self-speculative decoding enables NextLat to achieve \textbf{substantially higher inference speedups than MTP and JTP across all domains}. \cref{fig:fineweb_val_spec_results} further shows speedup and cumulative acceptance rate versus draft length on the FineWeb-Edu validation set. Speedup increases sublinearly with draft length, reaching up to $3.3\times$, and fully valid (i.e., all tokens accepted) drafts persist even at length 10. This demonstrates a clear benefit to drafting tokens beyond the training horizon with NextLat. In \cref{subsubsection:d4_results}, we extend our comparisons to include JTP and MTP trained with larger horizons ($d=4$). Note that the training cost of JTP and MTP increases substantially with larger $d$ (see \cref{tab:fineweb_compute}), making JTP ($d=4$) and MTP ($d=4$) significantly more expensive to train than NextLat ($d=1,2$). Still, even with longer multi-token supervision, these baselines still fail to surpass NextLat in speculative decoding performance. This highlights a key advantage of NextLat's variable-length speculative decoding: it enables long speculative drafts while requiring training only at shallow, computationally efficient horizons.

\begin{figure}[H]
     \centering
     \begin{subfigure}[b]{0.27\textwidth}
         \centering
         \includegraphics[width=1.0\textwidth, keepaspectratio]{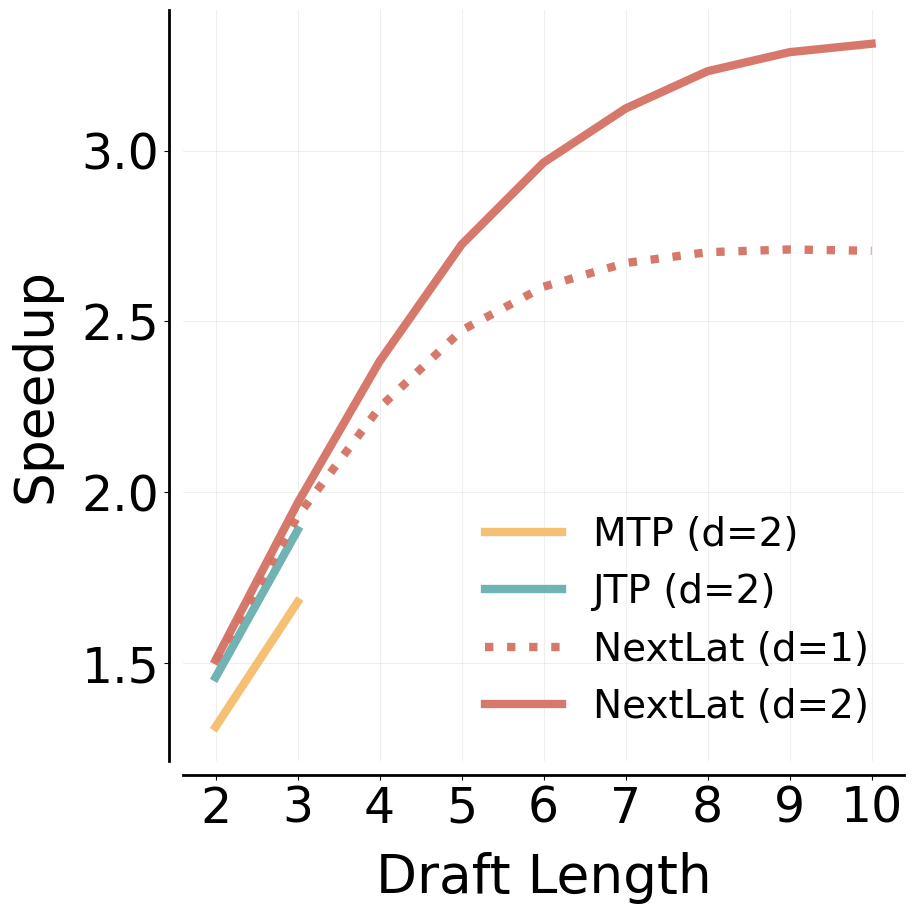}
         \caption{}
     \end{subfigure}
     \hfill
     \begin{subfigure}[b]{0.63\textwidth}
         \centering
         \includegraphics[width=1.0\textwidth, keepaspectratio]{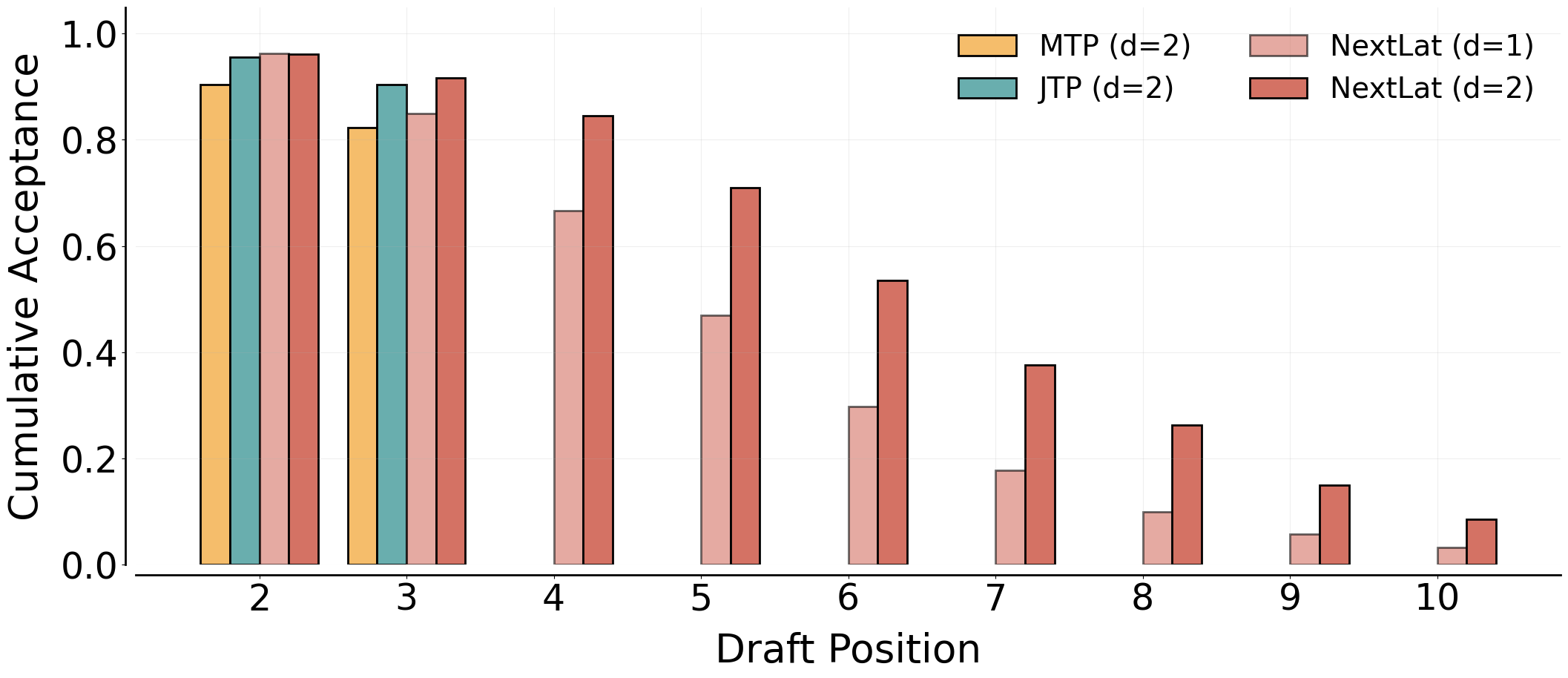}
         \caption{}
     \end{subfigure}
     \caption{(a) Inference speedup ratio of JTP, MTP, and NextLat on the FineWeb-Edu validation set. (b) Cumulative acceptance rate of draft tokens (product of per-position acceptance probabilities), indicating how likely the entire speculative draft remains valid (i.e., all tokens accepted) as draft length increases.}
     \label{fig:fineweb_val_spec_results}
\end{figure}

\section{Discussions}
\label{section:discussions}

\subsection{Comparison Against Current Approaches}
\begin{table}[htbp]
  \vspace{0.5em}
  \centering
  \begin{tabular}{lccccc}
    \toprule
    & GPT & BST  & MTP & JTP & NextLat \\
    \midrule
    \begin{tabular}{@{}l@{}}Training parameters \\ ($d=1, 2, 8$) \end{tabular} & 1.32B & 2.57B & 1.37B / 1.42B / 1.72B & 1.34B & 1.40B \\
    Inference parameters & 1.32B & 1.32B / 2.57B & 1.32B & 1.34B & 1.32B \\
    \begin{tabular}{@{}l@{}}Training steps/second \\ ($d=1, 2, 8$) \end{tabular} & 3.09 & 0.89 & 2.80 / 2.58 / 1.70 & 3.15 / 2.92 / 2.02 & 3.09 / 2.79 / 1.73  \\
    Gradients & $O(T)$ & $O(T^2)$ & $O(Td)$ & $O(Td)$ & $O(Td)$ \\
    \bottomrule
  \end{tabular}
  \caption{Comparison of training speed, parameter count and gradient signals provided on FineWeb-Edu pretraining. For training parameter numbers and training iterations/second, we report values for prediction horizons $d=1, 2, 8$. If only a single value is shown, it means that the values are the same for all horizons. All training speeds are measured on a single NVIDIA B200 GPU with batch size of 33k tokens.} 
  \vspace{0.5em}
  \label{tab:fineweb_compute}
\end{table}
% \begin{table}[htbp]
%   \centering
%   \begin{tabular}{lccc}
%     \toprule
%     & Next-Token   & Multi-Token   & NextLat \\
%     \midrule
%     \begin{tabular}{@{}l@{}}Training parameters \\ ($d=1, 2, 8$) \end{tabular} & 1.32B  & 1.37B / 1.42B / 1.72B  & 1.40B \\
%     \begin{tabular}{@{}l@{}}Training steps/second \\ ($d=1, 2, 8$) \end{tabular} & 3.09 & 2.80 / 2.58 / 1.70 & 3.09 / 2.79 / 1.73  \\
%     \bottomrule
%   \end{tabular}
%   \caption{Comparison of training speed and parameter.} 
%   \label{tab:fineweb_compute}
% \end{table}
In this section, we discuss the advantages of NextLat compared to prior transformer learning approaches, focusing discussions on belief-learning methods, i.e., BST and JTP. We provide brief descriptions of these methods and their training objectives in \cref{app:belief-state}. Motivated readers interested in further details beyond the scope of this work may refer to \citet{hu2025the} and \citet{ahn2025jtp}. \cref{tab:fineweb_compute} summarizes the training and inference parameters, training speed (in iterations per second), and gradient signal characteristics for each method on FineWeb-Edu pretraining, providing context for the discussions that follow.

\paragraph{Computational Costs.} While BST benefits from $O(T^2)$ gradient signals per token sequence, this is a double-edged sword. Even with the optimized implementation of \citet{hu2025the}, training remains extremely costly because gradients must be accumulated over all $O(T^2)$ predictions of different prefix–suffix pairs. Moreover, BST trains two transformer encoders, further increasing compute and parameter cost. During inference, BST also uses both transformer encoders, one for generation and the other for scoring the likelihood of the generated sequence. On FineWeb-Edu pretraining, BST is over $3\times$ slower than NextLat in training speed (see \cref{tab:fineweb_compute}). Note that this result already uses 10\% subsampling of prefix-suffix pairs to prevent out-of-memory issues; the training speed of BST would be much slower if all pairs were used for training. In contrast, NextLat with $d=1$ incurs negligible overhead relative to GPT while achieving the same belief-state learning guarantees as BST. 

We next compare the compute costs of the multi-step prediction methods, i.e., MTP, JTP, and NextLat. The MTP implementation of \citet{gloeckle2024}, as well as other variants such as the one introduced in \citet{liu2024deepseek}, require additional transformer layers as the prediction horizon $d$ increases, whereas JTP and NextLat keep parameter counts fixed across horizons. JTP and NextLat exhibit similar training speeds for $d=1$\footnote{JTP exhibiting a slightly faster training speed than GPT at $d=1$ is likely an artifact of \texttt{torch.compile()} due to differences in the computation graph.}, while MTP lags behind. At $d=8$, JTP is substantially faster than NextLat because NextLat sequentially unrolls its latent dynamics model $p_\psi$ to compute multi-step losses, while JTP computes them in parallel. Nonetheless, this modest increase in computation for NextLat yields substantially better performance than JTP across all benchmarks. Importantly, NextLat’s sequential computation remains far more efficient than that of recurrent neural networks (RNN), which we discuss further in \cref{section:nextlat_vs_rnn}.

\paragraph{Belief State Learning.} GPT and MTP lack any theoretical learning pressure to form belief-state representations, which means that they do not necessarily learn sufficient representations predictive of future observations. JTP can learn belief states but only under the restrictive condition that the prediction horizon satisfies $d \ge k$, where $k$ denotes the observability horizon of the underlying data-generating process (see \cref{def:k_observability} in appendix). In long-context sequence modeling, however, the underlying process is often $k$-observable for a very large $k$, rendering this condition impractical. NextLat, on the other hand, learns belief-state representations independently of $d$ and larger multi-step prediction horizons are used only to provide richer gradient signals. It also avoids the expensive $O(T^2)$ gradient computation required by BST to learn belief states. As such, NextLat stands as a simple, computationally efficient, and theoretically grounded alternative to existing belief-state learning approaches (i.e., BST and JTP) in autoregressive sequence modeling.

\paragraph{Myopic Nature of Token-level Predictions.} Token-level supervision is often myopic. Next-token prediction transformers tend to prioritize short-term dependencies, and studies have shown that early training can often resemble $n$-gram modeling, which can delay or even prevent learning of the true Markov kernel \citep{edelman2024,makkuva2025attention}. \citet{bachmann24a} further showed that the myopic nature of next-token prediction training can trap models in suboptimal local minima, undermining long-horizon planning. In our experiments, we find that adding additional token-level prediction objectives, as in BST, JTP, and MTP, not only degrades next-token performance but also fails to yield consistent gains in generalization. In contrast, NextLat, which emphasizes latent transition modeling as its primary objective, minimizes degradation in next-token performance and improves downstream generalization by encouraging the learning of predictive representations rather than shallow token-level correlations. Moreover, as discussed in \cref{subsection:why_nextlat}, latent-space supervision provides denser gradient signals than token-level supervision. Overall, NextLat offers a compelling alternative to conventional token-level prediction objectives for transformer training.

\subsection{NextLat vs. Recurrent Neural Networks}
\label{section:nextlat_vs_rnn}
Algorithmic reasoning requires capabilities most naturally understood through recurrent models of computation, like the Turing machine. However, strict recurrence imposes a sequential computation bottleneck at training time. NextLat introduces a \emph{recurrent inductive bias} for the learned representations via latent transition prediction without turning the transformer into a strictly sequential model.

\begin{wrapfigure}{r}{0.48\textwidth}
    \centering
    \vspace{-1.5em}
    \includegraphics[width=1.0\linewidth]{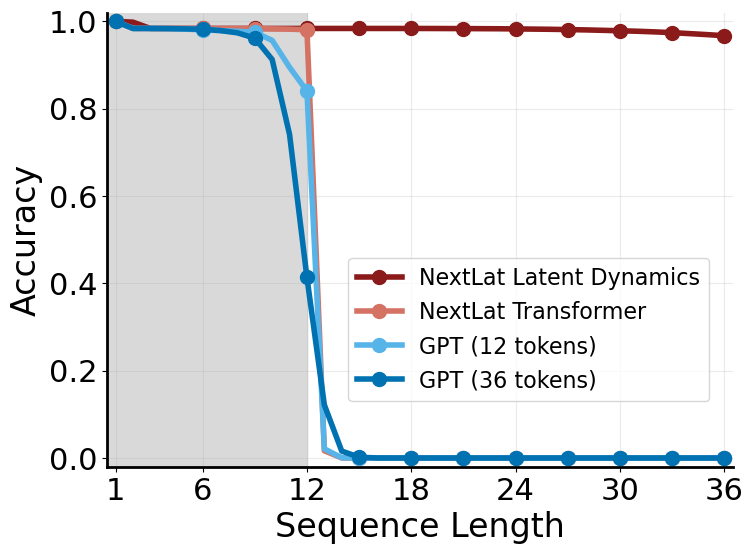}
    \caption{Length Generalization for $A_5$ word problem.}
    \vspace{-1em}
    \label{fig:a5_length_gen}
\end{wrapfigure}

\paragraph{Training RNNs without recurrence.}  Intuitively, NextLat ($d=1$) can be viewed as \emph{fully parallel} co-training of a transformer and a recurrent neural network (RNN). The transformer forecasts latent states across the sequence in parallel, while the latent dynamics model (the RNN) is trained using only one-step latent predictions. This co-training process does not require any sequential computation across time.

To study this interaction, we consider the $A_5$ word problem, a difficult \emph{state-tracking} task involving even permutations of five elements. The problem is $\text{NC}^1$-complete and therefore
inexpressible by constant depth, fixed-precision transformers, which are restricted to the $\text{TC}^0 \subset \text{NC}^1$ circuit complexity class \citep{merrill2023parallelismtradeofflimitationslogprecision}\footnote{This assumes the widely held conjecture that $\text{TC}^0 \neq \text{NC}^1$.}. We train 2-layer transformers using next-token prediction (GPT) and next-latent prediction (NextLat, $d=1$) on the $A_5$ word problem with a sequence length of 12 tokens (permutations). We then tested the transformers and the latent dynamics model (RNN) on their length generalization to sequences of length 36. Note that during inference, the RNN only uses the transformer backbone to initialize the hidden state for the first token, after which subsequent hidden state transitions and token predictions are generated independently by the RNN without invoking the transformer. Importantly, the RNN also has substantially fewer parameters than the transformer (2.62M vs.\ 6.43M parameters).

\cref{fig:a5_length_gen} shows interesting results. First, the transformer trained with \textbf{NextLat exhibits better state-tracking performance than GPT} within the 12-token training horizon. More surprisingly, while the transformer itself fails to generalize beyond the 12-token horizon, the learned latent dynamics model successfully generalizes to 36-token sequences ($>95$\% accuracy). This is particularly striking because even a GPT model trained directly on 36-token sequences (``GPT (36 tokens)'') is unable to solve the task, consistent with previous results \citep{merrill2025illusionstatestatespacemodels}. In other words, although the transformer itself cannot solve the 36-token $A_5$ problem, it is nevertheless able to train an RNN that can.

From a circuit complexity perspective, solving problems in $\text{NC}^1$ requires transformers with $O(\log T)$ depth \citep{merrill2023parallelismtradeofflimitationslogprecision,liu2023shortcuts}, where $T$ is the sequence length. This explains why the shallow 2-layer transformers (both GPT and NextLat) are unable to solve problems beyond 12 tokens. What is surprising, however, is that under NextLat training, \textbf{the co-trained RNN can generalize far beyond the expressivity limits of the underlying transformer, despite being trained entirely in parallel}. This raises an intriguing question:
\begin{displayquote}
\emph{Given inputs of length $T$, can a transformer with $O(\log T)$ depth co-train (in parallel) an RNN that generalizes to sequences of length $\gg T$? More specifically, can the co-trained RNN learn $\text{NC}^1$ computations, even though it is supervised using representations from a transformer in $\text{TC}^0$?}
\end{displayquote}
Intuitively, this seems plausible. As long as the transformer has sufficient depth to express and forecast hidden states within the $T$-length training context window, it makes sense for the latent dynamics model (RNN) to learn reusable computational circuits that extrapolate far beyond it, as empirically suggested by \cref{fig:a5_length_gen}. In this view, NextLat may offer a mechanism for partially escaping the practical limitations imposed by the \emph{parallelism tradeoff} \citep{merrill2023parallelismtradeofflimitationslogprecision}: the transformer provides efficient parallel representation learning, while the RNN focuses on learning generalizable sequential update rules.

\paragraph{Computational Efficiency and Relation to Backpropagation Through Time.} Now, we further compare NextLat against traditional RNNs trained using backpropagation through time.
Tranditional RNNs incur $O(T)$ sequential dependence during training. In contrast, NextLat adds only an additional sequential cost proportional to the rollout horizon $d \ll T$, corresponding to unrolling the latent dynamics model $p_\psi$ for $d$ steps. This allows NextLat to largely retain the transformer’s parallel training efficiency while also inheriting a recurrent inductive bias.

Conceptually, the one-step and multi-step prediction in NextLat resembles truncated backpropagation through time (TBPTT) in RNNs, with truncation windows of $1$ and $d$, respectively. However, a key distinction lies in how gradients are propagated. In TBPTT for RNNs, gradient computation is truncated beyond the chosen window, yielding \emph{biased} gradient estimates that lack convergence guarantees. In contrast, the \textbf{theoretical convergence of belief-state learning in NextLat, as shown in \cref{theorem:nhs_belief_states}, is irrespective of $d$}, since it requires only one-step prediction optimality. Intuitively, NextLat performs full backpropagation through the transformer’s computation graph, where self-attention connects all past tokens. The latent dynamics model operates in an ``outer loop'', aligning the transformer's hidden states to be temporally consistent across steps. Crucially, this outer-loop supervision does not truncate gradient flow within the transformer, and therefore avoids bias with respect to $d$. Larger prediction horizons simply provide richer supervision and faster empirical convergence.

\paragraph{Expressivity of the Recurrence.} 
Modern state-space models (SSMs), such as S4 and Mamba, implement efficient linear recurrent updates while remaining highly parallelizable~\citep{gu2022efficientlymodelinglongsequences,gu2024mambalineartimesequencemodeling}. In contrast to SSMs, the latent dynamics model $p_\psi$ in NextLat can express nonlinear transitions. Moreover, NextLat does not explicitly perform recurrence in the forward computation. Instead, recurrent-like dynamics emerge implicitly through one-step or multi-step unrolling of the $p_\psi$ and aligning successive hidden states via regression. This induces temporal consistency within the latent space without altering the transformer architecture. However, it is important to note that NextLat modifies the learned representations, not the underlying circuit complexity. The overall computational expressivity of the fixed-depth transformer trained with NextLat remains bounded by that of constant-depth threshold circuits, i.e., within the TC$^0$ complexity class \citep{merrill2022saturated}. However, earlier results in \cref{fig:a5_length_gen} question whether the co-trained latent dynamics model (RNN) is subject to the same $\text{TC}^0$ limitations.

\paragraph{Hybrids Attention Models.}
NextLat differs fundamentally from hybrid Transformer–SSM architectures~\citep{park2024mambalearnlearncomparative,lieber2024jambahybridtransformermambalanguage,ren2025sambasimplehybridstate}, which explicitly interleave SSM and attention layers. NextLat requires no architectural changes and operates purely as an auxiliary training objective on the model’s latent representations. As such, it is broadly compatible with diverse sequence-modeling architectures. In fact, NextLat could complement hybrid models by encouraging compression and consistency in the residual attention pathways that fall outside the SSM’s recurrent structure, potentially improving representation efficiency and generalization.

\section{Limitations and Future Work}
While NextLat shows strong empirical performance, several limitations remain in our work. First, we do not explore the design space of the latent dynamics model; all experiments use simple MLPs to isolate and demonstrate the effectiveness of the core NextLat approach. More expressive architectures may further improve performance. We also do not study how the width of the hidden layers in the latent dynamics MLP affects learning, even though it effectively acts as a bottleneck that constrains belief-state capacity and may influence performance across tasks. Empirically, we observe that using smaller latent dimensions is beneficial on tasks such as Path-Star graph and Countdown.

Second, the design of the NextLat objective (e.g., stop-gradients, KL self-distillation, Smooth L1 loss) is guided largely by small-scale ablations in \cref{app:ablations} and empirical intuition. It remains unclear whether multi-step supervision ($d>1$) and KL token-level supervision are even necessary at larger model and data scales. More systematic studies are needed to better understand how these components interact. Third, due to computational constraints, we did not evaluate against more recent or specialized MTP variants such as the one introduced in \citet{liu2024deepseek}. Finally, we did not fully exploit the variable-length nature of NextLat’s speculative decoding. In our experiments, the draft length remained fixed throughout decoding for each prompt; we only varied the static draft length between 2 and 10 tokens to identify the configuration with the highest inference speedup. We leave the exploration of more creative adaptive-length speculative decoding strategies for NextLat to future work.

On the analysis side, we do not study the structure of the learned representations under NextLat, leaving open questions about how the method shapes latent spaces. In \cref{section:nextlat_quirks}, we also highlight several quirks observed during pretraining with NextLat, such as increases in Smooth L1 loss ($\mathcal{L}_\text{next-h}$) during training and differing loss trajectories across optimizers. These observations suggest that NextLat can be sensitive to optimization dynamics. Better understanding of how to scale and parameterize the NextLat objective remains an important direction for future work.

This work represents only an preliminary study of next-latent prediction, leaving many promising directions for future research. Since the method requires no architectural changes beyond a lightweight latent dynamics model for shaping representations, an interesting direction is to apply it as a post-hoc finetuning objective for pretrained transformers. This could potentially improve reasoning, planning, and world-modeling capabilities of existing models without retraining from scratch. Moreover, because NextLat effectively organizes latent representations with recurrent-like dynamics, an interesting question is whether transformers trained with NextLat are better suited for RL post-training, where value estimation (be it implicit or explicit) benefits from such recursive ``Bellman-like" latent structure. Finally, it would be valuable to explore richer latent architectures, such as higher-dimensional or hierarchical belief states spanning multiple layers or tokens, which may further improve long-horizon reasoning and planning.

\section{Conclusion}
In this paper, we introduced Next-Latent Prediction (NextLat), a simple yet powerful framework that augments next-token training with self-supervised latent-space prediction, enabling transformers to learn belief-state representations. Theoretically, we show that the pressure to form concise latent summaries of past information sufficient to predict the future arises naturally from the NextLat objective. Empirically, NextLat yields more compact, predictive, and generalizable representations across tasks in world modeling, reasoning, planning, and language modeling---all without changing the transformer's architecture, parallel training efficiency, or inference procedure. By reintroducing a recurrent inductive bias through self-predictive latent dynamics, NextLat unifies the inherent bias toward compact and temporally consistent representations of recurrent models with the scalability and parallelism of transformers. Our method is broadly applicable for training transformers in autoregressive sequence modeling domains. Looking ahead, we view NextLat as a step toward training objectives that endow autoregressive sequence models with simpler, more compact, and therefore more generalizable representations of complex systems. 

\section*{Acknowledgements}
We thank Jason Eisner, Jordan T. Ash, Pradeep Varakantham, Andrea Zanette, Michael C. Mozer, Ying Fan, and the MSR AI Frontiers team for their valuable discussions and support.

% \clearpage
\bibliography{ref} 

@article{kaelbling1998planning,
  title={Planning and acting in partially observable stochastic domains},
  author={Kaelbling, Leslie Pack and Littman, Michael L and Cassandra, Anthony R},
  journal={Artificial intelligence},
  volume={101},
  number={1-2},
  pages={99--134},
  year={1998},
  publisher={Elsevier}
}

@article{striebel1965sufficient,
  title={Sufficient statistics in the optimum control of stochastic systems},
  author={Striebel, Charlotte},
  journal={Journal of Mathematical Analysis and Applications},
  volume={12},
  number={3},
  pages={576--592},
  year={1965},
  publisher={Academic Press}
}

@article{li2006towards,
  title={Towards a unified theory of state abstraction for MDPs.},
  author={Li, Lihong and Walsh, Thomas J and Littman, Michael L},
  journal={AI\&M},
  volume={1},
  number={2},
  pages={3},
  year={2006}
}

@inproceedings{blumer1986classifying,
  title={Classifying learnable geometric concepts with the Vapnik-Chervonenkis dimension},
  author={Blumer, Anselm and Ehrenfeucht, Andrzej and Haussler, David and Warmuth, Manfred},
  booktitle={Proceedings of the eighteenth annual ACM symposium on Theory of computing},
  pages={273--282},
  year={1986}
}

@article{langford2005tutorial,
  title={Tutorial on practical prediction theory for classification.},
  author={Langford, John and Schapire, Robert},
  journal={Journal of machine learning research},
  volume={6},
  number={3},
  year={2005}
}

@misc{hu2025the,
      title={The Belief State Transformer}, 
      author={Edward S. Hu and Kwangjun Ahn and Qinghua Liu and Haoran Xu and Manan Tomar and Ada Langford and Jayden Teoh and Bryon Xu and David Yan and Dinesh Jayaraman and Alex Lamb and John Langford},
      year={2025},
      eprint={2410.23506},
      archivePrefix={arXiv},
      primaryClass={cs.LG},
      url={https://arxiv.org/abs/2410.23506}, 
}

@inproceedings{transformers2017,
 author = {Vaswani, Ashish and Shazeer, Noam and Parmar, Niki and Uszkoreit, Jakob and Jones, Llion and Gomez, Aidan N and Kaiser, \L ukasz and Polosukhin, Illia},
 booktitle = {Advances in Neural Information Processing Systems},
 editor = {I. Guyon and U. Von Luxburg and S. Bengio and H. Wallach and R. Fergus and S. Vishwanathan and R. Garnett},
 pages = {},
 publisher = {Curran Associates, Inc.},
 title = {Attention is All you Need},
 url = {https://proceedings.neurips.cc/paper_files/paper/2017/file/3f5ee243547dee91fbd053c1c4a845aa-Paper.pdf},
 volume = {30},
 year = {2017}
}

@article{hinton2015distilling,
  title={Distilling the knowledge in a neural network},
  author={Hinton, Geoffrey and Vinyals, Oriol and Dean, Jeff},
  journal={arXiv preprint arXiv:1503.02531},
  year={2015}
}

@misc{ahn2025jtp,
  title={Efficient Joint Prediction of Multiple Future Tokens}, 
  author={Kwangjun Ahn and Alex Lamb and John Langford},
  year={2025},
  eprint={2503.21801},
  archivePrefix={arXiv},
  primaryClass={cs.LG},
  url={https://arxiv.org/abs/2503.21801}, 
}

@article{conant1970every,
  title={Every good regulator of a system must be a model of that system},
  author={Conant, Roger C and Ross Ashby, W},
  journal={International journal of systems science},
  volume={1},
  number={2},
  pages={89--97},
  year={1970},
  publisher={Taylor \& Francis}
}

@article{li2023emergent,
  title={Emergent world representations: Exploring a sequence model trained on a synthetic task},
  author={Li, Kenneth and Hopkins, Aspen K and Bau, David and Vi{\'e}gas, Fernanda and Pfister, Hanspeter and Wattenberg, Martin},
  journal={ICLR},
  year={2023},
  publisher={ICLR}
}

@InProceedings{bachmann24a,
  title = 	 {The Pitfalls of Next-Token Prediction},
  author =       {Bachmann, Gregor and Nagarajan, Vaishnavh},
  booktitle = 	 {Proceedings of the 41st International Conference on Machine Learning},
  pages = 	 {2296--2318},
  year = 	 {2024},
  editor = 	 {Salakhutdinov, Ruslan and Kolter, Zico and Heller, Katherine and Weller, Adrian and Oliver, Nuria and Scarlett, Jonathan and Berkenkamp, Felix},
  volume = 	 {235},
  series = 	 {Proceedings of Machine Learning Research},
  month = 	 {21--27 Jul},
  publisher =    {PMLR},
  url = 	 {https://proceedings.mlr.press/v235/bachmann24a.html},
}

@inproceedings{
ni2024bridging,
title={Bridging State and History Representations: Understanding Self-Predictive {RL}},
author={Tianwei Ni and Benjamin Eysenbach and Erfan SeyedSalehi and Michel Ma and Clement Gehring and Aditya Mahajan and Pierre-Luc Bacon},
booktitle={The Twelfth International Conference on Learning Representations},
year={2024},
url={https://openreview.net/forum?id=ms0VgzSGF2}
}

@inproceedings{tang2023understanding,
author = {Tang, Yunhao and Guo, Zhaohan Daniel and Richemond, Pierre Harvey and Pires, Bernardo \'{A}vila and Chandak, Yash and Munos, R\'{e}mi and Rowland, Mark and Azar, Mohammad Gheshlaghi and Lan, Charline Le and Lyle, Clare and Gy\"{o}rgy, Andr\'{a}s and Thakoor, Shantanu and Dabney, Will and Piot, Bilal and Calandriello, Daniele and Valko, Michal},
title = {Understanding self-predictive learning for reinforcement learning},
year = {2023},
publisher = {JMLR.org},
booktitle = {Proceedings of the 40th International Conference on Machine Learning},
articleno = {1401},
numpages = {25},
location = {Honolulu, Hawaii, USA},
series = {ICML'23}
}

@inproceedings{gloeckle2024,
author = {Gloeckle, Fabian and Idrissi, Badr Youbi and Rozi\`{e}re, Baptiste and Lopez-Paz, David and Synnaeve, Gabriel},
title = {Better \& faster large language models via multi-token prediction},
year = {2024},
publisher = {JMLR.org},
booktitle = {Proceedings of the 41st International Conference on Machine Learning},
articleno = {629},
numpages = {29},
location = {Vienna, Austria},
series = {ICML'24}
}

@misc{Karpathy2022,
  author = {Andrej Karpathy},
  title = {\text{NanoGPT}},
  year = {2022},
  publisher = {GitHub},
  journal = {GitHub repository},
  howpublished = {\url{https://github.com/karpathy/nanoGPT}},
}

@misc{hendrycks2023gaussianerrorlinearunits,
      title={Gaussian Error Linear Units (GELUs)}, 
      author={Dan Hendrycks and Kevin Gimpel},
      year={2023},
      eprint={1606.08415},
      archivePrefix={arXiv},
      primaryClass={cs.LG},
      url={https://arxiv.org/abs/1606.08415}, 
}

@misc{ba2016layernormalization,
      title={Layer Normalization}, 
      author={Jimmy Lei Ba and Jamie Ryan Kiros and Geoffrey E. Hinton},
      year={2016},
      eprint={1607.06450},
      archivePrefix={arXiv},
      primaryClass={stat.ML},
      url={https://arxiv.org/abs/1607.06450}, 
}

@inproceedings{edelman2024,
 author = {Edelman, Ezra and Tsilivis, Nikolaos and Edelman, Benjamin L. and Malach, Eran and Goel, Surbhi},
 booktitle = {Advances in Neural Information Processing Systems},
 editor = {A. Globerson and L. Mackey and D. Belgrave and A. Fan and U. Paquet and J. Tomczak and C. Zhang},
 pages = {64273--64311},
 publisher = {Curran Associates, Inc.},
 title = {The Evolution of Statistical Induction Heads: In-Context Learning Markov Chains},
 url = {https://proceedings.neurips.cc/paper_files/paper/2024/file/75b0edb869e2cd509d64d0e8ff446bc1-Paper-Conference.pdf},
 volume = {37},
 year = {2024}
}

@inproceedings{
makkuva2025attention,
title={Attention with Markov: A Curious Case of Single-layer Transformers},
author={Ashok Vardhan Makkuva and Marco Bondaschi and Adway Girish and Alliot Nagle and Martin Jaggi and Hyeji Kim and Michael Gastpar},
booktitle={The Thirteenth International Conference on Learning Representations},
year={2025},
url={https://openreview.net/forum?id=SqZ0KY4qBD}
}

@misc{countdown2025,
  author = {Countdown},
  title = "Countdown (game show)",
  year = "2025",
  url = "https://en.wikipedia.org/wiki/Countdown_(game_show)",
  note = "[Online; accessed 12-October-2025]"
}

@inproceedings{gameof242023,
 author = {Yao, Shunyu and Yu, Dian and Zhao, Jeffrey and Shafran, Izhak and Griffiths, Tom and Cao, Yuan and Narasimhan, Karthik},
 booktitle = {Advances in Neural Information Processing Systems},
 editor = {A. Oh and T. Naumann and A. Globerson and K. Saenko and M. Hardt and S. Levine},
 pages = {11809--11822},
 publisher = {Curran Associates, Inc.},
 title = {Tree of Thoughts: Deliberate Problem Solving with Large Language Models},
 url = {https://proceedings.neurips.cc/paper_files/paper/2023/file/271db9922b8d1f4dd7aaef84ed5ac703-Paper-Conference.pdf},
 volume = {36},
 year = {2023}
}

@article{achiam2023gpt,
  title={Gpt-4 technical report},
  author={Achiam, Josh and Adler, Steven and Agarwal, Sandhini and Ahmad, Lama and Akkaya, Ilge and Aleman, Florencia Leoni and Almeida, Diogo and Altenschmidt, Janko and Altman, Sam and Anadkat, Shyamal and others},
  journal={arXiv preprint arXiv:2303.08774},
  year={2023}
}

@article{gandhi2024stream,
  title={Stream of search (sos): Learning to search in language},
  author={Gandhi, Kanishk and Lee, Denise and Grand, Gabriel and Liu, Muxin and Cheng, Winson and Sharma, Archit and Goodman, Noah D},
  journal={arXiv preprint arXiv:2404.03683},
  year={2024}
}

@article{goyal2023think,
  title={Think before you speak: Training language models with pause tokens},
  author={Goyal, Sachin and Ji, Ziwei and Rawat, Ankit Singh and Menon, Aditya Krishna and Kumar, Sanjiv and Nagarajan, Vaishnavh},
  journal={arXiv preprint arXiv:2310.02226},
  year={2023}
}

@inproceedings{
ye2025beyond,
title={Beyond Autoregression: Discrete Diffusion for Complex Reasoning and Planning},
author={Jiacheng Ye and Jiahui Gao and Shansan Gong and Lin Zheng and Xin Jiang and Zhenguo Li and Lingpeng Kong},
booktitle={The Thirteenth International Conference on Learning Representations},
year={2025},
url={https://openreview.net/forum?id=NRYgUzSPZz}
}

@article{eldan2023tinystories,
  title={Tinystories: How small can language models be and still speak coherent english?},
  author={Eldan, Ronen and Li, Yuanzhi},
  journal={arXiv preprint arXiv:2305.07759},
  year={2023}
}

@article{schrittwieser2020mastering,
  title={Mastering atari, go, chess and shogi by planning with a learned model},
  author={Schrittwieser, Julian and Antonoglou, Ioannis and Hubert, Thomas and Simonyan, Karen and Sifre, Laurent and Schmitt, Simon and Guez, Arthur and Lockhart, Edward and Hassabis, Demis and Graepel, Thore and others},
  journal={Nature},
  volume={588},
  number={7839},
  pages={604--609},
  year={2020},
  publisher={Nature Publishing Group UK London}
}

@article{hafner2019dreamerv1,
  title={Dream to control: Learning behaviors by latent imagination},
  author={Hafner, Danijar and Lillicrap, Timothy and Ba, Jimmy and Norouzi, Mohammad},
  journal={arXiv preprint arXiv:1912.01603},
  year={2019}
}

@article{hafner2025dreamerv3,
  title = {Mastering Diverse Control Tasks through World Models},
  author = {Hafner, Danijar and Pasukonis, Jurgis and Ba, Jimmy and Lillicrap, Timothy},
  year = {2025},
  date = {2025-04-01},
  journaltitle = {Nature},
  shortjournal = {Nature},
  volume = {640},
  number = {8059},
  pages = {647--653},
  issn = {1476-4687},
  doi = {10.1038/s41586-025-08744-2},
  url = {https://doi.org/10.1038/s41586-025-08744-2},
}

@inproceedings{hafner2021dreamerv2,
  author       = {Danijar Hafner and
                  Timothy P. Lillicrap and
                  Mohammad Norouzi and
                  Jimmy Ba},
  title        = {Mastering Atari with Discrete World Models},
  booktitle    = {9th International Conference on Learning Representations, {ICLR} 2021,
                  Virtual Event, Austria, May 3-7, 2021},
  publisher    = {OpenReview.net},
  year         = {2021},
  url          = {https://openreview.net/forum?id=0oabwyZbOu},
  timestamp    = {Wed, 23 Jun 2021 17:36:39 +0200},
  biburl       = {https://dblp.org/rec/conf/iclr/HafnerL0B21.bib},
  bibsource    = {dblp computer science bibliography, https://dblp.org}
}

@article{nagarajan2025roll,
  title={Roll the dice \& look before you leap: Going beyond the creative limits of next-token prediction},
  author={Nagarajan, Vaishnavh and Wu, Chen Henry and Ding, Charles and Raghunathan, Aditi},
  journal={arXiv preprint arXiv:2504.15266},
  year={2025}
}

@article{subramanian2022approximate,
  title={Approximate information state for approximate planning and reinforcement learning in partially observed systems},
  author={Subramanian, Jayakumar and Sinha, Amit and Seraj, Raihan and Mahajan, Aditya},
  journal={Journal of Machine Learning Research},
  volume={23},
  number={12},
  pages={1--83},
  year={2022}
}

@article{huang2025llm,
  title={LLM-JEPA: Large Language Models Meet Joint Embedding Predictive Architectures},
  author={Huang, Hai and LeCun, Yann and Balestriero, Randall},
  journal={arXiv preprint arXiv:2509.14252},
  year={2025}
}

@misc{balestriero2023cookbookselfsupervisedlearning,
      title={A Cookbook of Self-Supervised Learning}, 
      author={Randall Balestriero and Mark Ibrahim and Vlad Sobal and Ari Morcos and Shashank Shekhar and Tom Goldstein and Florian Bordes and Adrien Bardes and Gregoire Mialon and Yuandong Tian and Avi Schwarzschild and Andrew Gordon Wilson and Jonas Geiping and Quentin Garrido and Pierre Fernandez and Amir Bar and Hamed Pirsiavash and Yann LeCun and Micah Goldblum},
      year={2023},
      eprint={2304.12210},
      archivePrefix={arXiv},
      primaryClass={cs.LG},
      url={https://arxiv.org/abs/2304.12210}, 
}

@book{schmidhuber1990making,
  title = {Making the World Differentiable: On Using Self Supervised Fully Recurrent Neural Networks for Dynamic Reinforcement Learning and Planning in Non-Stationary Environments},
  author = {Schmidhuber, J.},
  year = {1990},
  series = {Forschungsberichte K{\"u}nstliche Intelligenz},
  publisher = {Inst. f{\"u}r Informatik}
}

@misc{zhang2024selfsupervisedlearningtimeseries,
      title={Self-Supervised Learning for Time Series Analysis: Taxonomy, Progress, and Prospects}, 
      author={Kexin Zhang and Qingsong Wen and Chaoli Zhang and Rongyao Cai and Ming Jin and Yong Liu and James Zhang and Yuxuan Liang and Guansong Pang and Dongjin Song and Shirui Pan},
      year={2024},
      eprint={2306.10125},
      archivePrefix={arXiv},
      primaryClass={cs.LG},
      url={https://arxiv.org/abs/2306.10125}, 
}

@misc{liu2022audioselfsupervisedlearningsurvey,
      title={Audio Self-supervised Learning: A Survey}, 
      author={Shuo Liu and Adria Mallol-Ragolta and Emilia Parada-Cabeleiro and Kun Qian and Xin Jing and Alexander Kathan and Bin Hu and Bjoern W. Schuller},
      year={2022},
      eprint={2203.01205},
      archivePrefix={arXiv},
      primaryClass={cs.SD},
      url={https://arxiv.org/abs/2203.01205}, 
}

@misc{hansen2022temporaldifferencelearningmodel,
      title={Temporal Difference Learning for Model Predictive Control}, 
      author={Nicklas Hansen and Xiaolong Wang and Hao Su},
      year={2022},
      eprint={2203.04955},
      archivePrefix={arXiv},
      primaryClass={cs.LG},
      url={https://arxiv.org/abs/2203.04955}, 
}

@InProceedings{gelada2019deepmdp,
  title = 	 {{D}eep{MDP}: Learning Continuous Latent Space Models for Representation Learning},
  author =       {Gelada, Carles and Kumar, Saurabh and Buckman, Jacob and Nachum, Ofir and Bellemare, Marc G.},
  booktitle = 	 {Proceedings of the 36th International Conference on Machine Learning},
  pages = 	 {2170--2179},
  year = 	 {2019},
  editor = 	 {Chaudhuri, Kamalika and Salakhutdinov, Ruslan},
  volume = 	 {97},
  series = 	 {Proceedings of Machine Learning Research},
  month = 	 {09--15 Jun},
  publisher =    {PMLR},
  pdf = 	 {http://proceedings.mlr.press/v97/gelada19a/gelada19a.pdf},
  url = 	 {https://proceedings.mlr.press/v97/gelada19a.html},
}

@misc{schwarzer2021dataefficientreinforcementlearningselfpredictive,
      title={Data-Efficient Reinforcement Learning with Self-Predictive Representations}, 
      author={Max Schwarzer and Ankesh Anand and Rishab Goel and R Devon Hjelm and Aaron Courville and Philip Bachman},
      year={2021},
      eprint={2007.05929},
      archivePrefix={arXiv},
      primaryClass={cs.LG},
      url={https://arxiv.org/abs/2007.05929}, 
}

@article{zhang2020learning,
  title={Learning invariant representations for reinforcement learning without reconstruction},
  author={Zhang, Amy and McAllister, Rowan and Calandra, Roberto and Gal, Yarin and Levine, Sergey},
  journal={arXiv preprint arXiv:2006.10742},
  year={2020}
}

@article{ye2021mastering,
  title={Mastering atari games with limited data},
  author={Ye, Weirui and Liu, Shaohuai and Kurutach, Thanard and Abbeel, Pieter and Gao, Yang},
  journal={Advances in neural information processing systems},
  volume={34},
  pages={25476--25488},
  year={2021}
}

@article{vafa2024evaluating,
  title={Evaluating the world model implicit in a generative model},
  author={Vafa, Keyon and Chen, Justin Y and Rambachan, Ashesh and Kleinberg, Jon and Mullainathan, Sendhil},
  journal={Advances in Neural Information Processing Systems},
  volume={37},
  pages={26941--26975},
  year={2024}
}

@inproceedings{vafa2025has,
  title={What Has a Foundation Model Found? Using Inductive Bias to Probe for World Models},
  author={Vafa, Keyon and Chang, Peter G and Rambachan, Ashesh and Mullainathan, Sendhil},
  booktitle={International Conference on Machine Learning},
  pages={60727--60747},
  year={2025},
  organization={PMLR}
}

@inproceedings{
gurnee2024language,
title={Language Models Represent Space and Time},
author={Wes Gurnee and Max Tegmark},
booktitle={The Twelfth International Conference on Learning Representations},
year={2024},
url={https://openreview.net/forum?id=jE8xbmvFin}
}

@inproceedings{patel2022mapping,
  title={Mapping language models to grounded conceptual spaces},
  author={Patel, Roma and Pavlick, Ellie},
  booktitle={International conference on learning representations},
  year={2022}
}

@inproceedings{bruce2024geniev1,
author = {Bruce, Jake and Dennis, Michael and Edwards, Ashley and Parker-Holder, Jack and Shi, Yuge (Jimmy) and Hughes, Edward and Lai, Matthew and Mavalankar, Aditi and Steigerwald, Richie and Apps, Chris and Aytar, Yusuf and Bechtle, Sarah and Behbahani, Feryal and Chan, Stephanie and Heess, Nicolas and Gonzalez, Lucy and Osindero, Simon and Ozair, Sherjil and Reed, Scott and Zhang, Jingwei and Zolna, Konrad and Clune, Jeff and De Freitas, Nando and Singh, Satinder and Rockt\"{a}schel, Tim},
title = {Genie: generative interactive environments},
year = {2024},
publisher = {JMLR.org},
booktitle = {Proceedings of the 41st International Conference on Machine Learning},
articleno = {185},
numpages = {21},
location = {Vienna, Austria},
series = {ICML'24}
}

@inproceedings{anil2022lengthgeneralization,
  title = {Exploring Length Generalization in Large Language Models},
  booktitle = {Advances in Neural Information Processing Systems},
  author = {Anil, Cem and Wu, Yuhuai and Andreassen, Anders and Lewkowycz, Aitor and Misra, Vedant and Ramasesh, Vinay and Slone, Ambrose and {Gur-Ari}, Guy and Dyer, Ethan and Neyshabur, Behnam},
  editor = {Koyejo, S. and Mohamed, S. and Agarwal, A. and Belgrave, D. and Cho, K. and Oh, A.},
  year = {2022},
  volume = {35},
  pages = {38546--38556},
  publisher = {Curran Associates, Inc.}
}

@inproceedings{
dziri2023faith,
title={Faith and Fate: Limits of Transformers on Compositionality},
author={Nouha Dziri and Ximing Lu and Melanie Sclar and Xiang Lorraine Li and Liwei Jiang and Bill Yuchen Lin and Sean Welleck and Peter West and Chandra Bhagavatula and Ronan Le Bras and Jena D. Hwang and Soumya Sanyal and Xiang Ren and Allyson Ettinger and Zaid Harchaoui and Yejin Choi},
booktitle={Thirty-seventh Conference on Neural Information Processing Systems},
year={2023},
url={https://openreview.net/forum?id=Fkckkr3ya8}
}

@inproceedings{
liu2023shortcuts,
title={Transformers Learn Shortcuts to Automata},
author={Bingbin Liu and Jordan T. Ash and Surbhi Goel and Akshay Krishnamurthy and Cyril Zhang},
booktitle={The Eleventh International Conference on Learning Representations },
year={2023},
url={https://openreview.net/forum?id=De4FYqjFueZ}
}

@inproceedings{wu2024reasoning,
    title = "Reasoning or Reciting? Exploring the Capabilities and Limitations of Language Models Through Counterfactual Tasks",
    author = {Wu, Zhaofeng  and
      Qiu, Linlu  and
      Ross, Alexis  and
      Aky{\"u}rek, Ekin  and
      Chen, Boyuan  and
      Wang, Bailin  and
      Kim, Najoung  and
      Andreas, Jacob  and
      Kim, Yoon},
    editor = "Duh, Kevin  and
      Gomez, Helena  and
      Bethard, Steven",
    booktitle = "Proceedings of the 2024 Conference of the North American Chapter of the Association for Computational Linguistics: Human Language Technologies (Volume 1: Long Papers)",
    month = jun,
    year = "2024",
    address = "Mexico City, Mexico",
    publisher = "Association for Computational Linguistics",
    url = "https://aclanthology.org/2024.naacl-long.102/",
    doi = "10.18653/v1/2024.naacl-long.102",
    pages = "1819--1862",
}

@misc{fleuret2025freetransformer,
      title={The Free Transformer}, 
      author={François Fleuret},
      year={2025},
      eprint={2510.17558},
      archivePrefix={arXiv},
      primaryClass={cs.LG},
      url={https://arxiv.org/abs/2510.17558}, 
}

@inproceedings{roy2007effective,
  title={The effective rank: A measure of effective dimensionality},
  author={Roy, Olivier and Vetterli, Martin},
  booktitle={2007 15th European signal processing conference},
  pages={606--610},
  year={2007},
  organization={IEEE}
}

@article{ha2018world,
  title={World models},
  author={Ha, David and Schmidhuber, J{\"u}rgen},
  journal={arXiv preprint arXiv:1803.10122},
  volume={2},
  number={3},
  year={2018}
}

@book{johnson1983mental,
  title={Mental models: Towards a cognitive science of language, inference, and consciousness},
  author={Johnson-Laird, Philip Nicholas},
  number={6},
  year={1983},
  publisher={Harvard University Press}
}

@book{craik1967nature,
  title={The nature of explanation},
  author={Craik, Kenneth James Williams},
  volume={445},
  year={1967},
  publisher={CUP Archive}
}

@article{francis1976internal,
  title={The internal model principle of control theory},
  author={Francis, Bruce A and Wonham, Walter Murray},
  journal={Automatica},
  volume={12},
  number={5},
  pages={457--465},
  year={1976},
  publisher={Elsevier}
}

@article{miall1996forward,
  title={Forward models for physiological motor control},
  author={Miall, R Chris and Wolpert, Daniel M},
  journal={Neural networks},
  volume={9},
  number={8},
  pages={1265--1279},
  year={1996},
  publisher={Elsevier}
}

@article{friston2010free,
  title={The free-energy principle: a unified brain theory?},
  author={Friston, Karl},
  journal={Nature reviews neuroscience},
  volume={11},
  number={2},
  pages={127--138},
  year={2010},
  publisher={Nature publishing group}
}

@article{sutton1991dyna,
  title={Dyna, an integrated architecture for learning, planning, and reacting},
  author={Sutton, Richard S},
  journal={ACM Sigart Bulletin},
  volume={2},
  number={4},
  pages={160--163},
  year={1991},
  publisher={ACM New York, NY, USA}
}

@misc{power2022grokkinggeneralizationoverfittingsmall,
      title={Grokking: Generalization Beyond Overfitting on Small Algorithmic Datasets}, 
      author={Alethea Power and Yuri Burda and Harri Edwards and Igor Babuschkin and Vedant Misra},
      year={2022},
      eprint={2201.02177},
      archivePrefix={arXiv},
      primaryClass={cs.LG},
      url={https://arxiv.org/abs/2201.02177}, 
}

@misc{loshchilov2019decoupledweightdecayregularization,
      title={Decoupled Weight Decay Regularization}, 
      author={Ilya Loshchilov and Frank Hutter},
      year={2019},
      eprint={1711.05101},
      archivePrefix={arXiv},
      primaryClass={cs.LG},
      url={https://arxiv.org/abs/1711.05101}, 
}

@misc{gadre2024languagemodelsscalereliably,
      title={Language models scale reliably with over-training and on downstream tasks}, 
      author={Samir Yitzhak Gadre and Georgios Smyrnis and Vaishaal Shankar and Suchin Gururangan and Mitchell Wortsman and Rulin Shao and Jean Mercat and Alex Fang and Jeffrey Li and Sedrick Keh and Rui Xin and Marianna Nezhurina and Igor Vasiljevic and Jenia Jitsev and Luca Soldaini and Alexandros G. Dimakis and Gabriel Ilharco and Pang Wei Koh and Shuran Song and Thomas Kollar and Yair Carmon and Achal Dave and Reinhard Heckel and Niklas Muennighoff and Ludwig Schmidt},
      year={2024},
      eprint={2403.08540},
      archivePrefix={arXiv},
      primaryClass={cs.CL},
      url={https://arxiv.org/abs/2403.08540}, 
}

@article{liu2024deepseek,
  title={Deepseek-v3 technical report},
  author={Liu, Aixin and Feng, Bei and Xue, Bing and Wang, Bingxuan and Wu, Bochao and Lu, Chengda and Zhao, Chenggang and Deng, Chengqi and Zhang, Chenyu and Ruan, Chong and others},
  journal={arXiv preprint arXiv:2412.19437},
  year={2024}
}

@misc{gu2024mambalineartimesequencemodeling,
      title={Mamba: Linear-Time Sequence Modeling with Selective State Spaces}, 
      author={Albert Gu and Tri Dao},
      year={2024},
      eprint={2312.00752},
      archivePrefix={arXiv},
      primaryClass={cs.LG},
      url={https://arxiv.org/abs/2312.00752}, 
}

@misc{gu2022efficientlymodelinglongsequences,
      title={Efficiently Modeling Long Sequences with Structured State Spaces}, 
      author={Albert Gu and Karan Goel and Christopher Ré},
      year={2022},
      eprint={2111.00396},
      archivePrefix={arXiv},
      primaryClass={cs.LG},
      url={https://arxiv.org/abs/2111.00396}, 
}

@misc{merrill2023parallelismtradeofflimitationslogprecision,
      title={The Parallelism Tradeoff: Limitations of Log-Precision Transformers}, 
      author={William Merrill and Ashish Sabharwal},
      year={2023},
      eprint={2207.00729},
      archivePrefix={arXiv},
      primaryClass={cs.CC},
      url={https://arxiv.org/abs/2207.00729}, 
}

@article{merrill2022saturated,
    title = "Saturated Transformers are Constant-Depth Threshold Circuits",
    author = "Merrill, William  and
      Sabharwal, Ashish  and
      Smith, Noah A.",
    editor = "Roark, Brian  and
      Nenkova, Ani",
    journal = "Transactions of the Association for Computational Linguistics",
    volume = "10",
    year = "2022",
    address = "Cambridge, MA",
    publisher = "MIT Press",
    url = "https://aclanthology.org/2022.tacl-1.49/",
    doi = "10.1162/tacl_a_00493",
    pages = "843--856",
}

@misc{merrill2025illusionstatestatespacemodels,
      title={The Illusion of State in State-Space Models}, 
      author={William Merrill and Jackson Petty and Ashish Sabharwal},
      year={2025},
      eprint={2404.08819},
      archivePrefix={arXiv},
      primaryClass={cs.LG},
      url={https://arxiv.org/abs/2404.08819}, 
}

@misc{lieber2024jambahybridtransformermambalanguage,
      title={Jamba: A Hybrid Transformer-Mamba Language Model}, 
      author={Opher Lieber and Barak Lenz and Hofit Bata and Gal Cohen and Jhonathan Osin and Itay Dalmedigos and Erez Safahi and Shaked Meirom and Yonatan Belinkov and Shai Shalev-Shwartz and Omri Abend and Raz Alon and Tomer Asida and Amir Bergman and Roman Glozman and Michael Gokhman and Avashalom Manevich and Nir Ratner and Noam Rozen and Erez Shwartz and Mor Zusman and Yoav Shoham},
      year={2024},
      eprint={2403.19887},
      archivePrefix={arXiv},
      primaryClass={cs.CL},
      url={https://arxiv.org/abs/2403.19887}, 
}

@misc{ren2025sambasimplehybridstate,
      title={Samba: Simple Hybrid State Space Models for Efficient Unlimited Context Language Modeling}, 
      author={Liliang Ren and Yang Liu and Yadong Lu and Yelong Shen and Chen Liang and Weizhu Chen},
      year={2025},
      eprint={2406.07522},
      archivePrefix={arXiv},
      primaryClass={cs.CL},
      url={https://arxiv.org/abs/2406.07522}, 
}

@misc{park2024mambalearnlearncomparative,
      title={Can Mamba Learn How to Learn? A Comparative Study on In-Context Learning Tasks}, 
      author={Jongho Park and Jaeseung Park and Zheyang Xiong and Nayoung Lee and Jaewoong Cho and Samet Oymak and Kangwook Lee and Dimitris Papailiopoulos},
      year={2024},
      eprint={2402.04248},
      archivePrefix={arXiv},
      primaryClass={cs.LG},
      url={https://arxiv.org/abs/2402.04248}, 
}

@article{leviathan2022fast,
  title={Fast inference from transformers via speculative decoding, 2023},
  author={Leviathan, Yaniv and Kalman, Matan and Matias, Yossi},
  journal={URL https://arxiv. org/abs/2211.17192},
  volume={1},
  number={2},
  year={2022}
}

@misc{chen2023acceleratinglargelanguagemodel,
      title={Accelerating Large Language Model Decoding with Speculative Sampling}, 
      author={Charlie Chen and Sebastian Borgeaud and Geoffrey Irving and Jean-Baptiste Lespiau and Laurent Sifre and John Jumper},
      year={2023},
      eprint={2302.01318},
      archivePrefix={arXiv},
      primaryClass={cs.CL},
      url={https://arxiv.org/abs/2302.01318}, 
}

@article{li2024eagle1,
  title={Eagle: Speculative sampling requires rethinking feature uncertainty},
  author={Li, Yuhui and Wei, Fangyun and Zhang, Chao and Zhang, Hongyang},
  journal={arXiv preprint arXiv:2401.15077},
  year={2024}
}

@inproceedings{li2024eagle2,
  title={Eagle-2: Faster inference of language models with dynamic draft trees},
  author={Li, Yuhui and Wei, Fangyun and Zhang, Chao and Zhang, Hongyang},
  booktitle={Proceedings of the 2024 conference on empirical methods in natural language processing},
  pages={7421--7432},
  year={2024}
}

@article{penedo2024fineweb,
  title={The fineweb datasets: Decanting the web for the finest text data at scale},
  author={Penedo, Guilherme and Kydl{\'\i}{\v{c}}ek, Hynek and Lozhkov, Anton and Mitchell, Margaret and Raffel, Colin and Von Werra, Leandro and Wolf, Thomas and others},
  journal={Advances in Neural Information Processing Systems},
  volume={37},
  pages={30811--30849},
  year={2024}
}

@article{hu2024minicpm,
  title={Minicpm: Unveiling the potential of small language models with scalable training strategies},
  author={Hu, Shengding and Tu, Yuge and Han, Xu and He, Chaoqun and Cui, Ganqu and Long, Xiang and Zheng, Zhi and Fang, Yewei and Huang, Yuxiang and Zhao, Weilin and others},
  journal={arXiv preprint arXiv:2404.06395},
  year={2024}
}

@misc{eval-harness,
  author       = {Gao, Leo and Tow, Jonathan and Abbasi, Baber and Biderman, Stella and Black, Sid and DiPofi, Anthony and Foster, Charles and Golding, Laurence and Hsu, Jeffrey and Le Noac'h, Alain and Li, Haonan and McDonell, Kyle and Muennighoff, Niklas and Ociepa, Chris and Phang, Jason and Reynolds, Laria and Schoelkopf, Hailey and Skowron, Aviya and Sutawika, Lintang and Tang, Eric and Thite, Anish and Wang, Ben and Wang, Kevin and Zou, Andy},
  title        = {The Language Model Evaluation Harness},
  month        = 07,
  year         = 2024,
  publisher    = {Zenodo},
  version      = {v0.4.3},
  doi          = {10.5281/zenodo.12608602},
  url          = {https://zenodo.org/records/12608602}
}

@inproceedings{
zhang2026trainbeforetest,
title={Train-before-Test Harmonizes Language Model Rankings},
author={Guanhua Zhang and Ricardo Dominguez-Olmedo and Moritz Hardt},
booktitle={The Fourteenth International Conference on Learning Representations},
year={2026},
url={https://openreview.net/forum?id=ORv3SAzus1}
}

@article{merity2016pointer,
  title={Pointer sentinel mixture models},
  author={Merity, Stephen and Xiong, Caiming and Bradbury, James and Socher, Richard},
  journal={arXiv preprint arXiv:1609.07843},
  year={2016}
}

@inproceedings{paperno2016lambada,
  title={The LAMBADA dataset: Word prediction requiring a broad discourse context},
  author={Paperno, Denis and Kruszewski, Germ{\'a}n and Lazaridou, Angeliki and Pham, Ngoc-Quan and Bernardi, Raffaella and Pezzelle, Sandro and Baroni, Marco and Boleda, Gemma and Fern{\'a}ndez, Raquel},
  booktitle={Proceedings of the 54th annual meeting of the association for computational linguistics (volume 1: Long papers)},
  pages={1525--1534},
  year={2016}
}

@inproceedings{bisk2020piqa,
  title={Piqa: Reasoning about physical commonsense in natural language},
  author={Bisk, Yonatan and Zellers, Rowan and Gao, Jianfeng and Choi, Yejin and others},
  booktitle={Proceedings of the AAAI conference on artificial intelligence},
  volume={34},
  number={05},
  pages={7432--7439},
  year={2020}
}

@inproceedings{zellers2019hellaswag,
  title={Hellaswag: Can a machine really finish your sentence?},
  author={Zellers, Rowan and Holtzman, Ari and Bisk, Yonatan and Farhadi, Ali and Choi, Yejin},
  booktitle={Proceedings of the 57th annual meeting of the association for computational linguistics},
  pages={4791--4800},
  year={2019}
}

@article{sakaguchi2021winogrande,
  title={Winogrande: An adversarial winograd schema challenge at scale},
  author={Sakaguchi, Keisuke and Bras, Ronan Le and Bhagavatula, Chandra and Choi, Yejin},
  journal={Communications of the ACM},
  volume={64},
  number={9},
  pages={99--106},
  year={2021},
  publisher={ACM New York, NY, USA}
}

@article{clark2018think,
  title={Think you have solved question answering? try arc, the ai2 reasoning challenge},
  author={Clark, Peter and Cowhey, Isaac and Etzioni, Oren and Khot, Tushar and Sabharwal, Ashish and Schoenick, Carissa and Tafjord, Oyvind},
  journal={arXiv preprint arXiv:1803.05457},
  year={2018}
}

@inproceedings{sap2019social,
  title={Social IQa: Commonsense reasoning about social interactions},
  author={Sap, Maarten and Rashkin, Hannah and Chen, Derek and Le Bras, Ronan and Choi, Yejin},
  booktitle={Proceedings of the 2019 conference on empirical methods in natural language processing and the 9th international joint conference on natural language processing (EMNLP-IJCNLP)},
  pages={4463--4473},
  year={2019}
}

@inproceedings{welbl2017crowdsourcing,
  title={Crowdsourcing multiple choice science questions},
  author={Welbl, Johannes and Liu, Nelson F and Gardner, Matt},
  booktitle={Proceedings of the 3rd Workshop on Noisy User-generated Text},
  pages={94--106},
  year={2017}
}

@ONLINE{wikidump,
    author = "Wikimedia Foundation",
    title  = "Wikimedia Downloads",
    url    = "https://dumps.wikimedia.org"
}

@misc{bookcorpusopencard,
    title={Dataset Card for BookCorpusOpen},
    author={Hugging Face},
    year={2021},
    url={https://huggingface.co/datasets/bookcorpusopen}
}

@misc{allal2025smollm2smolgoesbig,
      title={SmolLM2: When Smol Goes Big -- Data-Centric Training of a Small Language Model}, 
      author={Loubna Ben Allal and Anton Lozhkov and Elie Bakouch and Gabriel Martín Blázquez and Guilherme Penedo and Lewis Tunstall and Andrés Marafioti and Hynek Kydlíček and Agustín Piqueres Lajarín and Vaibhav Srivastav and Joshua Lochner and Caleb Fahlgren and Xuan-Son Nguyen and Clémentine Fourrier and Ben Burtenshaw and Hugo Larcher and Haojun Zhao and Cyril Zakka and Mathieu Morlon and Colin Raffel and Leandro von Werra and Thomas Wolf},
      year={2025},
      eprint={2502.02737},
      archivePrefix={arXiv},
      primaryClass={cs.CL},
      url={https://arxiv.org/abs/2502.02737}, 
}

@misc{paster2023openwebmath,
      title={OpenWebMath: An Open Dataset of High-Quality Mathematical Web Text},
      author={Keiran Paster and Marco Dos Santos and Zhangir Azerbayev and Jimmy Ba},
      year={2023},
      eprint={2310.06786},
      archivePrefix={arXiv},
      primaryClass={cs.AI}
}

@misc{jordan2024muon,
  author       = {Keller Jordan and Yuchen Jin and Vlado Boza and You Jiacheng and
                  Franz Cesista and Laker Newhouse and Jeremy Bernstein},
  title        = {Muon: An optimizer for hidden layers in neural networks},
  year         = {2024},
  url          = {https://kellerjordan.github.io/posts/muon/}
}

@article{liu2025muon,
  title={Muon is scalable for llm training},
  author={Liu, Jingyuan and Su, Jianlin and Yao, Xingcheng and Jiang, Zhejun and Lai, Guokun and Du, Yulun and Qin, Yidao and Xu, Weixin and Lu, Enzhe and Yan, Junjie and others},
  journal={arXiv preprint arXiv:2502.16982},
  year={2025}
}

@inproceedings{
kazemnejad2023the,
title={The Impact of Positional Encoding on Length Generalization in Transformers},
author={Amirhossein Kazemnejad and Inkit Padhi and Karthikeyan Natesan and Payel Das and Siva Reddy},
booktitle={Thirty-seventh Conference on Neural Information Processing Systems},
year={2023},
url={https://openreview.net/forum?id=Drrl2gcjzl}
}

@misc{su2023roformerenhancedtransformerrotary,
      title={RoFormer: Enhanced Transformer with Rotary Position Embedding}, 
      author={Jianlin Su and Yu Lu and Shengfeng Pan and Ahmed Murtadha and Bo Wen and Yunfeng Liu},
      year={2023},
      eprint={2104.09864},
      archivePrefix={arXiv},
      primaryClass={cs.CL},
      url={https://arxiv.org/abs/2104.09864}, 
}
\bibliographystyle{plainnat}

\clearpage
\appendix

\section{Belief States in Sequence Modeling}
\label{app:belief-state}

Recent work has introduced variants of sequence modeling architectures based on the principle of learning belief states, i.e., BST and JTP. We review these methods here.
%For clarity, we introduce some notations that we will use throughout this section.
Let $\theta$ denote the parameters of a transformer-based model. Let $\hidden_{s:t}$ denote the hidden states produced by the transformer encoder for a token sequence $X_{s:t}$, where $s \leq t$. When we use the notation $\hidden_t$, it is shorthand for $\hidden_{1:t}$. The model's output head produces a categorical distribution over the token vocabulary conditioned on some hidden state representation, i.e., $p_\theta(\cdot \mid \hidden_{s:t})$.

\paragraph{Belief State Transformer.} The Belief State Transformer (BST; \citet{hu2025the}) learns \emph{compact} belief states by jointly training a next-token predictor and a previous-token predictor across all possible prefix–suffix decompositions of a sequence, including cases where either the prefix or suffix is empty. Concretely, given a prefix $X_{1:t}$ and a suffix $X_{t+k:T}$ with $k\geq 1$, BST aims to minimize the cross-entropy loss
\begin{align}
\mathcal{L}_\text{BST}(\theta) =  \mathbb{E}_{t < T}\Big[-\log \underbrace{p_\theta(X_{t+1} \mid \hidden_{1:t}, \hidden_{t+k:T})}_{\text{next-token prediction}} - \log\underbrace{p_\theta(X_{t+k-1} \mid \hidden_{1:t}, \hidden_{t+k:T})}_{\text{previous-token prediction}} \Big], 
\label{eq:bst_objective}
\end{align}
where $\hidden_{1:t}$ and $\hidden_{t+k:T}$ are produced by separate transformers. This bidirectional training shapes the hidden representations of the BST into belief states.

\paragraph{Joint Multi-Token Prediction.} Joint multi-token prediction (JTP; \citet{ahn2025jtp}) aims to learn the \emph{joint} distribution over the next $d+1$ tokens conditioned on $\hidden_{t}$, where $d$ is the multi-step prediction horizon beyond the next token. Specifically, JTP minimizes the loss
\begin{align}
\mathcal{L}_\text{JTP}(\theta;d, \lambda_\text{MTP})  = \mathbb{E}_{t < T} \Big[-\log \underbrace{p_\theta(X_{t+1} \mid \hidden_{t})}_{\text{next-token prediction}} - \lambda_\text{MTP} \cdot \frac{1}{d}\sum_{i=1}^d\log \underbrace{p_\theta(X_{t+i+1} \mid \fetchhead(\hidden_{t}, X_{t+1:t+i}))}_{\text{joint multi-token prediction}} \Big], 
\label{eq:jtp}
\end{align}
where an additional module $\fetchhead(\hidden_t, X_{t+1:t+i})$ is used to create an embedding combining the teacher-forced tokens $X_{t+1:t+i}$ with $\hidden_t$ and $\lambda_\text{MTP} > 0$ balances next- and multi-token prediction losses. Although \citet{ahn2025jtp} suggest that their method learn ``short-horizon belief states'', they do not formally define the conditions under which this occurs. To understand how JTP learns belief states, we start by defining a \emph{$k$-observable} system.
\begin{definition}[$k$-observability for sequences] \label{def:k_observability}
    A system is $k$-observable if for any two sequences $H =X_{1:t}$ and $H' =X_{1:j}$ that induce the same joint distribution over the next-$k$ tokens, i.e., $\mathbb{P}(X_{t+1:t+k} \mid H) = \mathbb{P}(X_{t+1:t+k} \mid H')$, it follows that their \emph{full-horizon} conditionals match:
    \begin{align}
    \mathbb{P}(X_{t+1:T} \mid H) = \mathbb{P}(X_{t+1:T} \mid H').
    \end{align}
    In other words, the distribution of all future observations is determined by the distribution of the next $k$ observations. 
\end{definition}
\begin{proposition}[JTP forms belief states in $k$-observable systems] \label{prop:k_belief_states}
Assume the system is $k$-observable and let $k=d+1$. Suppose the joint multi-token prediction model recovers the true joint next-$k$ conditional, i.e. $p_\theta(X_{t+1} \mid \hidden_t)p_\theta(X_{t+2} \mid \hidden_t, X_{t+1})\dots p_\theta(X_{t+k} \mid \hidden_t, X_{t+1:t+k-1}) =\mathbb{P}(X_{t+1:t+k} \mid X_{1:t})$ a.s. for all $t$, then $\hidden_{t}$ is a belief state for $X_{1:t}$.
\end{proposition}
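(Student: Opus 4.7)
My plan is to show that under the stated assumption, the conditional distribution of the full future $X_{t+1:T}$ given $X_{1:t}$ is $\sigma(\hidden_t)$-measurable, from which the belief-state identity follows by the tower property. The argument has two main steps: first, pass from the factorized next-$k$ assumption to the statement that the next-$k$ joint is a function of $\hidden_t$ alone; second, leverage $k$-observability to extend this from the $k$-step horizon to the full horizon $T$.

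For the first step, I would rewrite the assumed factorization, using the chain rule, as
\begin{equation*}
\mathbb{P}(X_{t+1:t+k} \mid X_{1:t}) \;=\; \prod_{i=1}^{k} p_\theta\!\left(X_{t+i} \,\middle|\, \hidden_t, X_{t+1:t+i-1}\right) \quad \text{a.s.}
\end{equation*}
The right-hand side depends on $X_{1:t}$ only through $\hidden_t = g(X_{1:t})$, so the left-hand side is $\sigma(\hidden_t)$-measurable. Combining this with $\sigma(\hidden_t) \subseteq \sigma(X_{1:t})$ and the tower property yields $\mathbb{P}(X_{t+1:t+k} \mid X_{1:t}) = \mathbb{P}(X_{t+1:t+k} \mid \hidden_t)$ a.s. Consequently, any two histories $H, H'$ with $g(H) = g(H')$ induce the same next-$k$ conditional distribution.

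For the second step, I would invoke $k$-observability (\cref{def:k_observability}): matching next-$k$ conditionals implies matching full-horizon conditionals. This elevates the previous step to the statement that $\mathbb{P}(X_{t+1:T} \mid X_{1:t})$ is $\sigma(\hidden_t)$-measurable, and a second application of the tower property gives $\mathbb{P}(X_{t+1:T} \mid X_{1:t}) = \mathbb{P}(X_{t+1:T} \mid \hidden_t)$ a.s. Integrating both sides against any bounded measurable test function $f$ of the future then recovers the belief-state identity of \cref{def:belief_states}.

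The main technical subtlety I expect to encounter is the transition between the pointwise, sequence-level statement of $k$-observability and the measure-theoretic statement involving almost-sure equalities of conditional distributions. In the discrete-token setting of language modeling this is benign, since ``a.s.'' qualifiers reduce to pointwise equalities on the countable set of histories with positive probability; in greater generality one would need to promote $k$-observability to a statement about regular conditional distributions, which requires mild measurability assumptions on $g$ and the underlying process. Apart from this bookkeeping, all remaining steps are routine applications of the chain rule and the tower property.
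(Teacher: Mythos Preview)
Your proposal is correct and follows essentially the same approach as the paper: both arguments first observe that the assumed factorization makes the next-$k$ conditional a function of $\hidden_t$ alone, then invoke $k$-observability to lift this to the full-horizon conditional. Your version is more explicit about the measure-theoretic bookkeeping (tower property, $\sigma(\hidden_t)$-measurability, the pointwise-versus-a.s.\ subtlety), whereas the paper compresses the same logic into two sentences by phrasing $k$-observability as the existence of a measurable map $G$ from the next-$k$ conditional to the full-horizon conditional.
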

\begin{proof}
By $k$-observability (\cref{def:k_observability}), there exists a measurable map $G$ taking the joint next-$k$ conditional distribution to the full-horizon conditional:
\begin{align*}
    \mathbb{P}(X_{t+1:T} \mid X_{1:t}) = G \Big(\mathbb{P}(X_{t+1:t+k} \mid X_{1:t}) \Big).
\end{align*}
By the premise that JTP recovers the true next-$k$ joint, conditioning on $\hidden_T$ is equivalent to conditioning on $X_{1:t}$ for all bounded measurable functionals of the future. Hence $\hidden_t$ is a belief state for all $t<T$.
\end{proof}
Intuitively, if all possible futures can be distinguished by the next $k$ tokens, then a JTP model that accurately predicts the joint next-$k$ distribution would encode all information necessary to distinguish future trajectories. Note that both next-token prediction and multi-token prediction do not guarantee belief state representations (see \citet{hu2025the}). 

\section{Formal Proof of \autoref{theorem:nhs_belief_states}}
 \label{pf:nhs_belief_states}

The proof follows the intuition illustrated below.  
Optimizing for next-token and transition consistency (\cref{eq:emission_correctness,eq:transition_correctness}) ensures the existence of measurable maps $p_\theta$ and $p_\psi$ that allow recursive decoding of future tokens:
\begin{align*}
    \hidden_t 
    &\xrightarrow[\text{decode token}]{p_\theta} X_{t+1} 
    \xrightarrow[\text{update state}]{p_\psi} \hidden_{t+1} 
    \xrightarrow[\text{decode token}]{p_\theta} X_{t+2} 
    \xrightarrow[\text{update state}]{p_\psi} \hidden_{t+2} 
    \;\cdots\;
    \xrightarrow[]{p_\theta} X_T.
\end{align*}
\begin{proof}
A formal proof proceeds by backward induction on $t$. For the base case  $t = T-1$, the claim follows directly from \cref{eq:emission_correctness}, since $\hidden_{T-1}$ suffices to predict $X_T$.

Now assume $\hidden_{k+1}$ is a belief state for $X_{1:k+1}$. By \cref{def:belief_states}, this implies that $X_{k+2:T}$ is conditionally independent of $X_{1:k+1}$ given $\hidden_{k+1}$.
We will show that $\hidden_k$ is also a belief state for $X_{1:k}$.
From $\hidden_k$, one can generate
\begin{align*}
    X_{k+1} &\sim p_\theta(\cdot \mid \hidden_k), \\
    \hidden_{k+1} &\sim p_\psi(\cdot \mid \hidden_k, X_{k+1}).
\end{align*} 
By next-token and transition consistency (\cref{eq:emission_correctness,eq:transition_correctness}), we have
\begin{align*}
\mathbb{P}(X_{k+1:T}\mid \hidden_k)
&= \mathbb{P}(X_{k+2:T}\mid X_{k+1}, \hidden_k)\,
   \underbrace{\mathbb{P}(X_{k+1}\mid \hidden_k)}_{\text{\cref{eq:emission_correctness}}}
\\
&= \mathbb{P}(X_{k+2:T}\mid X_{k+1}, \hidden_k)\,
   \mathbb{P}(X_{k+1}\mid X_{1:k})
\\
&= \biggl[\int \mathbb{P}(X_{k+2:T},~ \hidden_{k+1}\mid X_{k+1}, \hidden_k) \,d\hidden_{k+1}\biggr]\,
   \mathbb{P}(X_{k+1}\mid X_{1:k})\\
&= \biggl[\int 
      \underbrace{\mathbb{P}(X_{k+2:T}\mid \hidden_{k+1}, X_{k+1}, \hidden_k)} 
      \underbrace{\mathbb{P}(\hidden_{k+1}\mid X_{k+1}, \hidden_k)}_{\text{\cref{eq:transition_correctness}}}
      \, d\hidden_{k+1} \biggr]\,
   \mathbb{P}(X_{k+1}\mid X_{1:k})
\\
&= \biggl[\int 
      \underbrace{\mathbb{P}(X_{k+2:T}\mid \hidden_{k+1})}_{\text{Induction hypothesis}} 
      \mathbb{P}(\hidden_{k+1}\mid X_{1:k+1})
      \, d\hidden_{k+1} \biggr]\,
   \mathbb{P}(X_{k+1}\mid X_{1:k})
\\
&=  \mathbb{P}(X_{k+2:T}\mid X_{1:k+1}) \,
   \mathbb{P}(X_{k+1}\mid X_{1:k}) = \mathbb{P}(X_{k+1:T}\mid X_{1:k}).
\end{align*}
This proves that $\hidden_k$ is also a belief state.
\end{proof}

\section{More Details on NextLat Implementation}
\label{section:more_details_nextlat}
\begin{wrapfigure}{r}{0.25\textwidth}
    \centering
    \vspace{-1em}
    \includegraphics[width=0.65\linewidth]{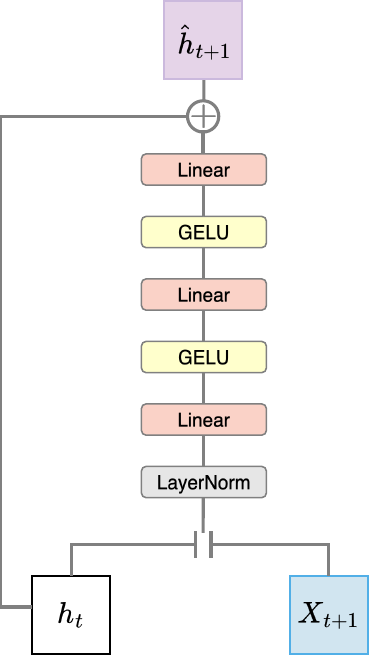}
    \caption{Illustration of the latent transition model $p_\psi$.}
    % \vspace{-1em}
    \label{fig:latent_transition_model}
\end{wrapfigure}
We parameterize the latent transition model $p_\psi$ with a three-layer MLP using GELU \citep{hendrycks2023gaussianerrorlinearunits} activations. The latent transition model takes as input the layer-normalized \citep{ba2016layernormalization} concatenation of the current hidden state $\hidden_t$ and next-token embedding $X_{t+1}$, and outputs a \emph{delta} update applied via residual connection:
\begin{align}
    \hat{\hidden}_{t+1} = p_\psi(\hidden_t, X_{t+1})= f_\psi(\hidden_t, X_{t+1}) + \hidden_t
\end{align}
where $f_\psi(\cdot)$ predicts the modification to $\hidden_t$ (see \cref{fig:latent_transition_model}). This paper aims to demonstrate that NextLat yields significant performance gains even with a \textbf{simple MLP latent transition}. We foresee even better performances with more sophiscated latent transition model architectures, but we leave that exploration to future work.

Following standard training, we mask token-level losses (i.e., $\mathcal{L}_\text{next-token}$ and $\mathcal{L}_\mathrm{KL}$) corresponding to context or prompt tokens. However, we do not apply masking for $\mathcal{L}_\text{next-h}$ on context tokens, ensuring that belief state representations develop even during context processing. When using sequence packing (e.g., for Manhattan, TinyStories, and FineWeb-Edu), we mask $\mathcal{L}_\mathrm{KL}$ and $\mathcal{L}_\text{next-h}$ terms that cross document boundaries.

\begin{algorithm}[htbp]
    \caption{Pseudocode for NextLat's loss function in PyTorch syntax. The loss functions \inlinecode{python}{cross_entropy_loss()}, \inlinecode{python}{smooth_l1_loss()}, and \inlinecode{python}{KL_loss()} are implemented externally.}
\label{alg:pytorch-code}
\footnotesize
\begin{lstlisting}
import torch.nn as nn 
from copy import deepcopy
    
def loss(batch, targets):
    # batch:(B,T), targets: (B,T)
    # embedding dimension = D, vocabulary size = V
    batch_size, seq_len = batch.shape

    # Encode sequences into latent states
    hidden_states = Transformer(batch) # (B,T,D)

    # Compute next-token loss
    logits_post = Output_Head(hidden_states) # (B,T,V)
    loss_next = cross_entropy_loss(logits_post, targets)

    next_tokens = batch
    next_states = hidden_states 
    current_states = hidden_states 
    loss_next_h = 0
    loss_kl = 0

    # Recursive multi-step predictions, default multi_step_horizon=1;
    # only 1-step prediction
    for _ in range(multi_step_horizon):
        # Shift hidden states back by 1
        current_states = current_states[:, :-1]
        # Shift next tokens and hidden state targets forward by 1
        next_tokens = next_tokens[:, 1:]
        next_states = next_states[:, 1:]
        logits_post = logits_post[:, 1:]
        # Predict next hidden state using latent dynamics model
        # i.e., (h_t, x_{t+1}) -> \hat{h}_{t+1}
        pred_next_states = Latent_Dynamics(current_states, next_tokens) #(B,T,D)
        # Compute next-hidden loss using detached next states as targets
        # i.e., ||h_{t+1} - \hat{h}_{t+1}||^2
        loss_next_h += smooth_l1_loss(pred_next_states, next_states.detach())
    
        # Compute KL loss using detached output head
        # i.e., KL[p(h_{t+1}) || p(\hat{h}_{t+1})]
        logits_prior = deepcopy(Output_Head)(pred_next_states) # (B,T,V)
        loss_kl += KL_loss(logits_post.detach(), logits_prior) # detach posterior

        current_states = pred_next_states

    loss_next_h = loss_next_h / multi_step_horizon
    loss_kl = loss_kl / multi_step_horizon
    # overall NextLat loss
    return loss_next + next_h_lambda * loss_next_h + kl_lambda * loss_kl
\end{lstlisting}
\end{algorithm}

\section{Ablation Studies}
\label{app:ablations}
In this section, we ablate the key design choices of NextLat. Specifically, we study the effects of the KL ($\mathcal{L}_\mathrm{KL}$) and Smooth L1 ($\mathcal{L}_\text{next-h}$) losses, as well as the use of stop-gradients on the target ($\stopgrad[\hidden_{t+i}]$) in the Smooth L1 loss (\cref{eq:loss_next_hidden}) and on the output head ($p_\theta^{\stopgrad}(\cdot)$) in the KL loss (\cref{eq:loss_kl}). We focus our investigations primarily on TinyStories, analyzing how these design choices affect the predictive quality of learned hidden states under linear probing and validation loss behavior. 

\begin{figure}[htbp]
     \centering
     \begin{subfigure}[b]{0.46\textwidth}
         \centering
         \includegraphics[width=1.0\textwidth, keepaspectratio]{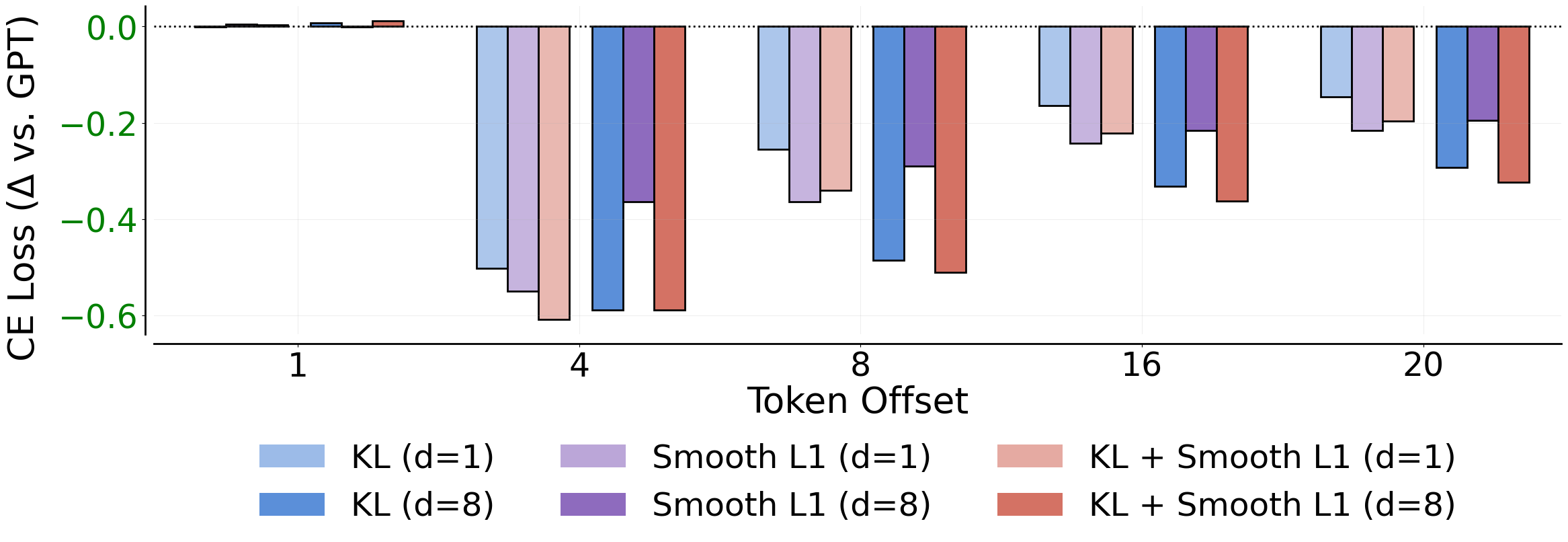}
         \caption{}
         \label{subfig:tinystories_probe_ablate_kl_vs_mse}
     \end{subfigure}
     \hfill
     \begin{subfigure}[b]{0.48\textwidth}
         \centering
         \includegraphics[width=1.0\textwidth, keepaspectratio]{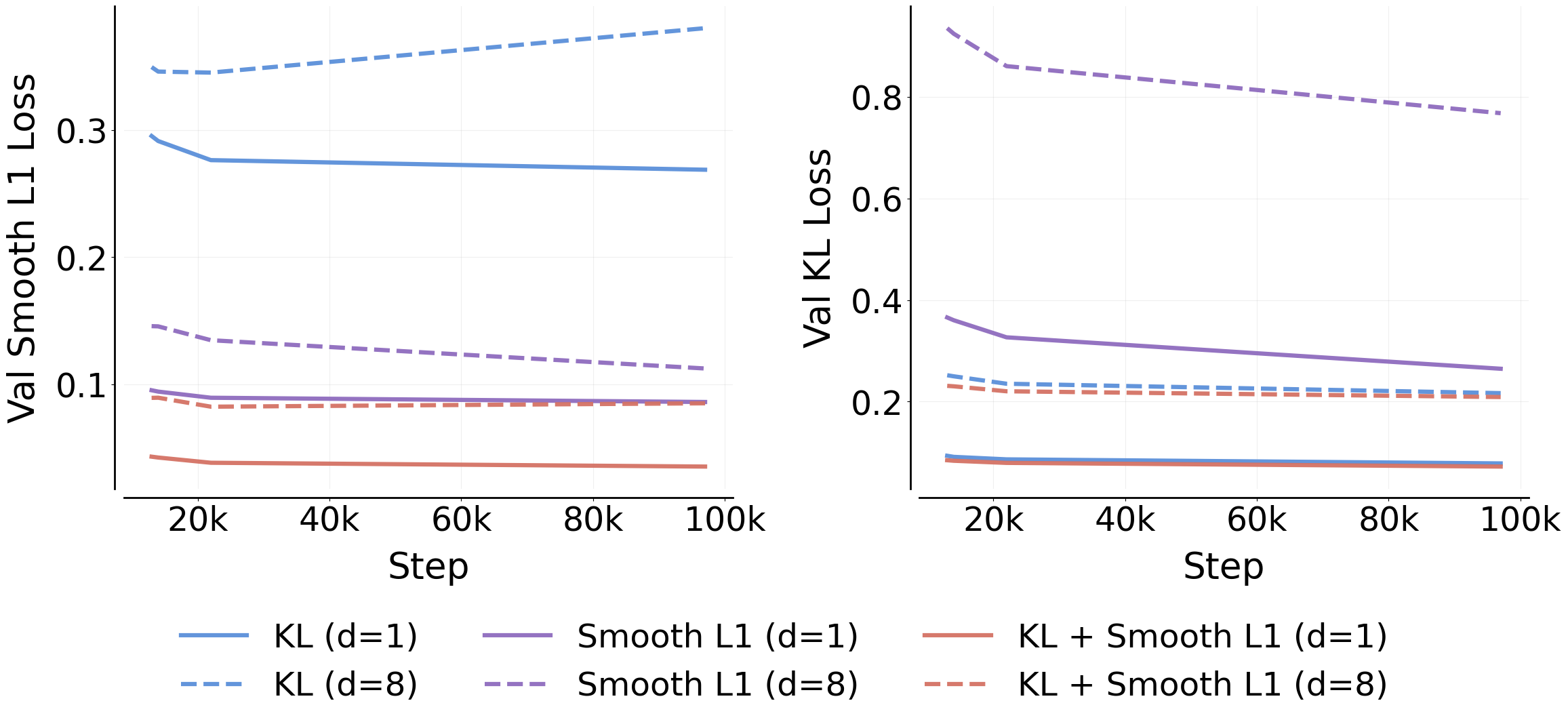}
         \caption{}
         \label{subfig:tinystories_ablate_kl_vs_mse_val_losses}
     \end{subfigure}
     \caption{\textbf{KL and Smooth L1 Loss Ablations:} (a) TinyStories probe performance under ablations of the KL and Smooth L1 losses. (b) TinyStories validation Smooth L1 and KL losses.}
     \label{fig:tinystories_ablate_kl_vs_mse}
\end{figure} 

We first isolate the effects of the KL and Smooth L1 losses in NextLat. Note that $\mathcal{L}_\text{next-h}$ is required for belief-state convergence, while $\mathcal{L}_\mathrm{KL}$ serves as a complementary supervision signal. In \cref{subfig:tinystories_probe_ablate_kl_vs_mse}, we observe that at $d=1$, using Smooth L1 loss alone achieves the strongest probe performance 20 tokens ahead relative to GPT. At $d=8$, however, the combined KL + Smooth L1 objective (i.e., the full NextLat design) performs best, suggesting that the Smooth L1 loss is critical for learning long-range predictive representations, while the KL loss becomes increasingly beneficial at larger horizons. Furthermore, \cref{subfig:tinystories_ablate_kl_vs_mse_val_losses} shows that the combined KL + Smooth L1 objective achieves lower validation KL and Smooth L1 losses than optimizing either objective alone. This confirms that the two losses are complementary and provide mutually beneficial supervision.

\begin{figure}[htbp]
     \centering
     \begin{subfigure}[b]{0.46\textwidth}
         \centering
         \includegraphics[width=1.0\textwidth, keepaspectratio]{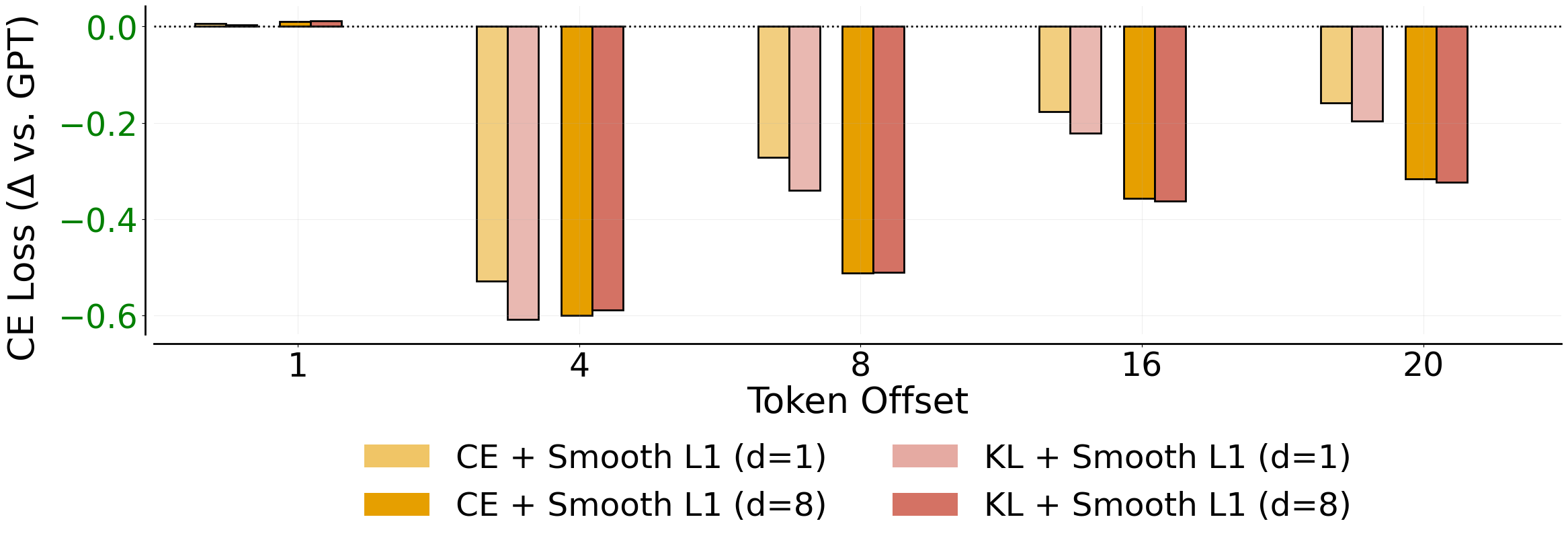}
         \caption{}
         \label{subfig:tinystories_probe_ablate_kl_vs_ce}
     \end{subfigure}
     \hfill
     \begin{subfigure}[b]{0.48\textwidth}
         \centering
         \includegraphics[width=1.0\textwidth, keepaspectratio]{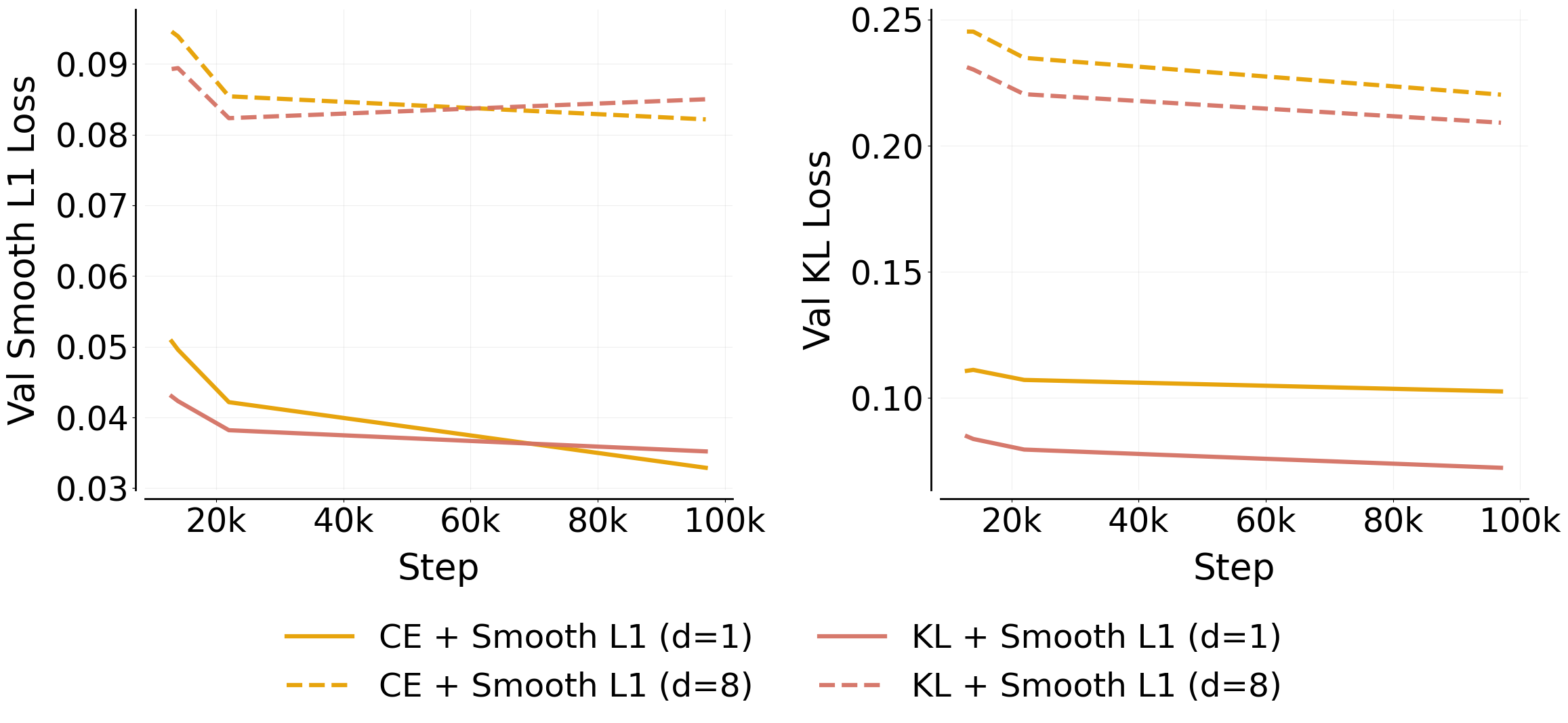}
         \caption{}
         \label{subfig:tinystories_ablate_kl_vs_ce_val_losses}
     \end{subfigure}
     \caption{\textbf{KL and CE Loss Ablations:} (a) TinyStories probe performance when replacing the KL loss in NextLat with a CE loss. (b) TinyStories validation Smooth L1 and KL losses.}
     \label{fig:tinystories_ablate_kl_vs_ce}
\end{figure} 

Next, we compare the KL loss against cross-entropy (CE) as the complementary token-space supervision signal. Note that the CE loss corresponds exactly to the multi-token prediction objective used in JTP and MTP. Unlike the KL loss, which performs self-distillation by matching the predicted token distribution to the model’s own softmax distribution under the target hidden state (see \cref{eq:loss_kl}), the CE loss directly supervises the predicted latent state using the ground-truth next token. As shown in \cref{subfig:tinystories_probe_ablate_kl_vs_ce}, the original NextLat objective (KL + Smooth L1) achieves substantially better probe performance than CE + Smooth L1 at $d=1$, though the gap becomes marginal at $d=8$. \cref{subfig:tinystories_ablate_kl_vs_ce_val_losses} shows that across both $d=1$ and $d=8$, KL + Smooth L1 achieves lower validation KL loss, but slightly worse Smooth L1 loss, than CE + Smooth L1. Overall, these results suggest that at shallow multi-step prediction horizons, the dense distribution-level supervision provided by KL matching induces more predictive representations than CE. However, this advantage appears to diminish at larger horizons. It is also important to note that the KL loss incurs higher memory overhead than CE loss. This is because KL requires materializing full logits and softmax distributions over the vocabulary, whereas modern fused CE implementations only need to materialize probabilities at the target token index. More systematic studies are needed, especially at larger model scales, to determine whether KL or CE is preferable as a complementary supervision signal. For the scope of this work, we stick to using KL loss in all experiments.

\begin{figure}[htbp]
     \centering
     \begin{subfigure}[b]{0.46\textwidth}
         \centering
         \includegraphics[width=1.0\textwidth, keepaspectratio]{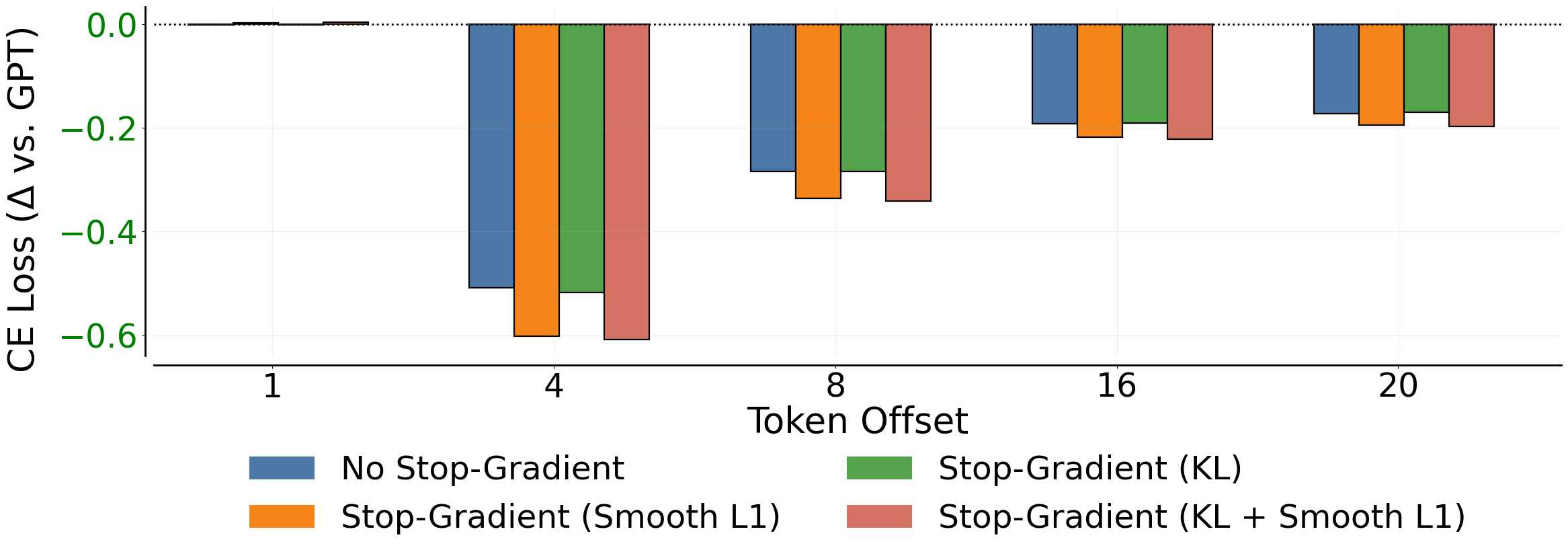}
         \caption{}
         \label{subfig:tinystories_probe_ablate_stop_gradient}
     \end{subfigure}
     \hfill
     \begin{subfigure}[b]{0.48\textwidth}
         \centering
         \includegraphics[width=1.0\textwidth, keepaspectratio]{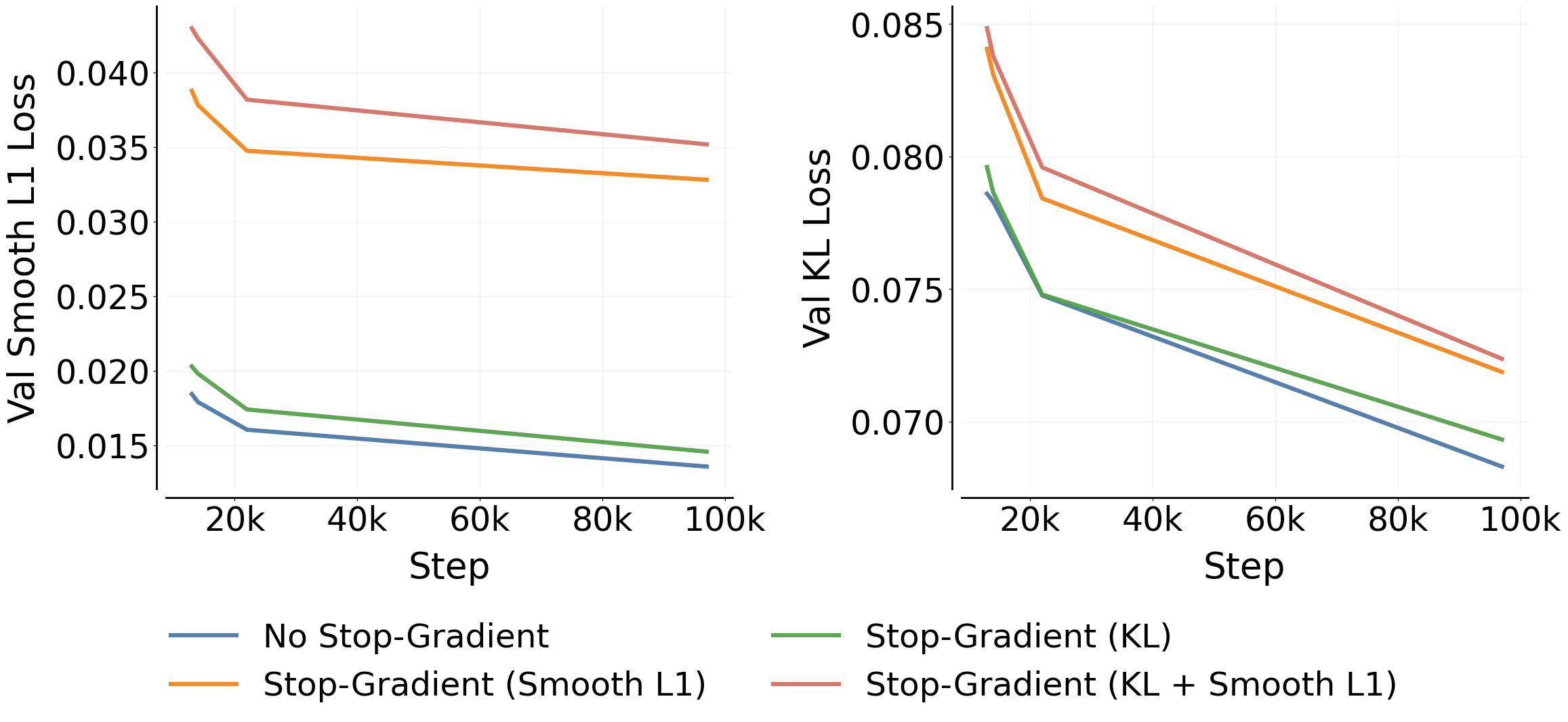}
         \caption{}
         \label{subfig:tinystories_ablate_stop_gradient}
     \end{subfigure}
     \caption{\textbf{Stop-Gradient Ablations:} (a) TinyStories probe performance under different stop-gradient configurations. (b) TinyStories validation Smooth L1 and KL losses. Note that all models here use both KL and Smooth L1 losses and differ only in where stop-gradients are applied in those loss components. ``Stop-Gradient (Smooth L1)'' applies stop-gradient only to the target hidden state in the Smooth L1 loss, ``Stop-Gradient (KL)'' applies stop-gradient only to the output head in the KL loss, and ``Stop-Gradient (KL + Smooth L1)'' applies stop-gradients to both components. All models use $d=1$ here.}
     \label{fig:tinystories_ablate_stop_gradient}
\end{figure} 

Next, we study the use of stop-gradients in \cref{fig:tinystories_ablate_stop_gradient} with $d=1$. While \cref{subfig:tinystories_ablate_stop_gradient} shows that removing the stop-gradient on the Smooth L1 loss improves validation Smooth L1 and KL losses slightly ($\sim 0.02$), it does not improve the predictive quality of the learned representations. As shown in \cref{subfig:tinystories_probe_ablate_stop_gradient}, applying stop-gradients to both the target in the Smooth L1 loss and the output head ($p_\theta^{\stopgrad}(\cdot)$) in the KL loss yields the best probing performance across all token offsets. Stop-gradients also improve efficiency by reducing the number of backward passes through the transformer. In particular, the KL loss introduces an additional forward and backward pass through the output head, which can be computationally expensive in language models due to large vocabulary sizes\footnote{Note that multi-token prediction models also incur an additional forward+backward pass through the output head for each extra token prediction.}. Removing this extra backward pass significantly improves both speed and memory efficiency. Empirically, on tasks like Countdown and Path-Star graph, we also observe that the stop-gradient (especially on the Smooth L1 loss) is essentially for high accuracy. 

Overall, these ablations suggest that NextLat’s effectiveness arises from the combination of Smooth L1 and KL losses together with carefully placed stop-gradients, all of which contribute to learning predictive belief-state representations in transformers.

\section{NextLat Pretraining Quirks}
\label{section:nextlat_quirks}
\begin{figure}[htbp]
    \centering
    % Left block
    \begin{subfigure}[t]{0.26\textwidth}
        \centering
        \includegraphics[width=\textwidth]{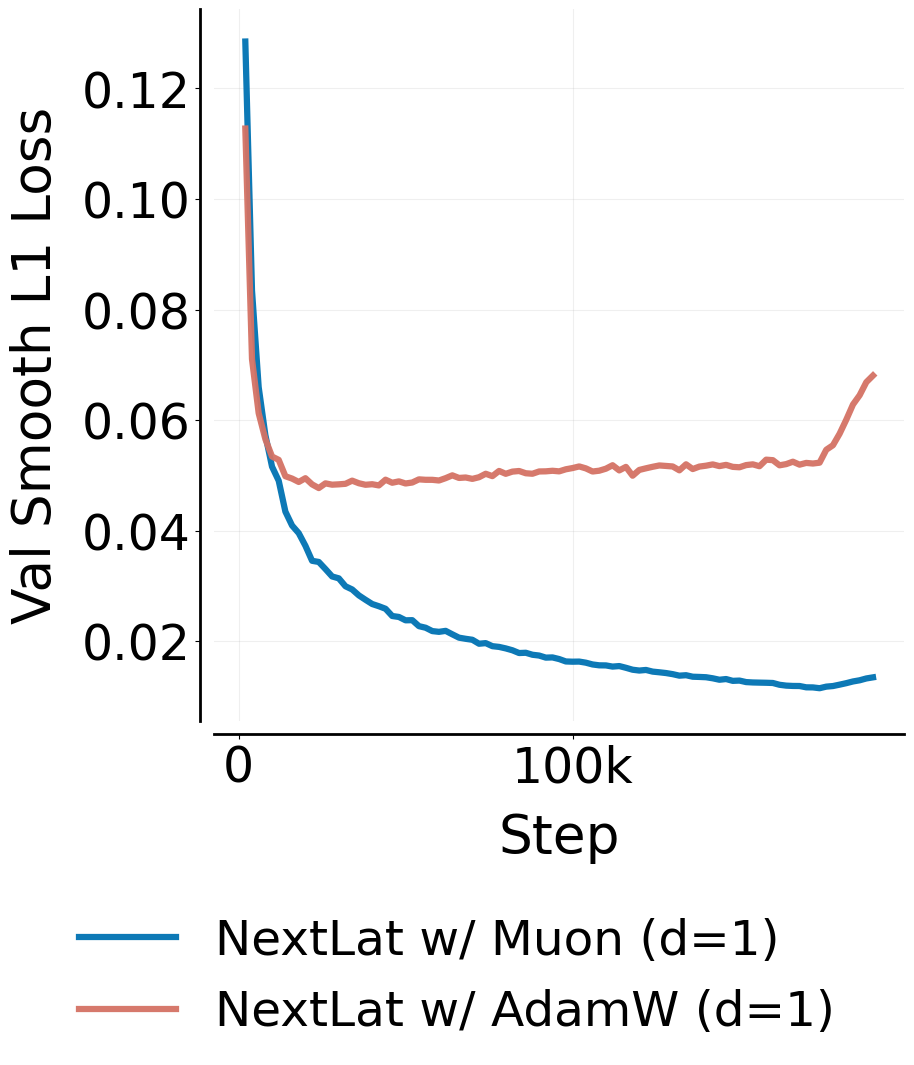}
        \caption{}
        \label{fig:MSE_adamw_vs_muon}
    \end{subfigure}%
    \hfill
    \begin{subfigure}[t]{0.26\textwidth}
        \centering
        \includegraphics[width=\textwidth]{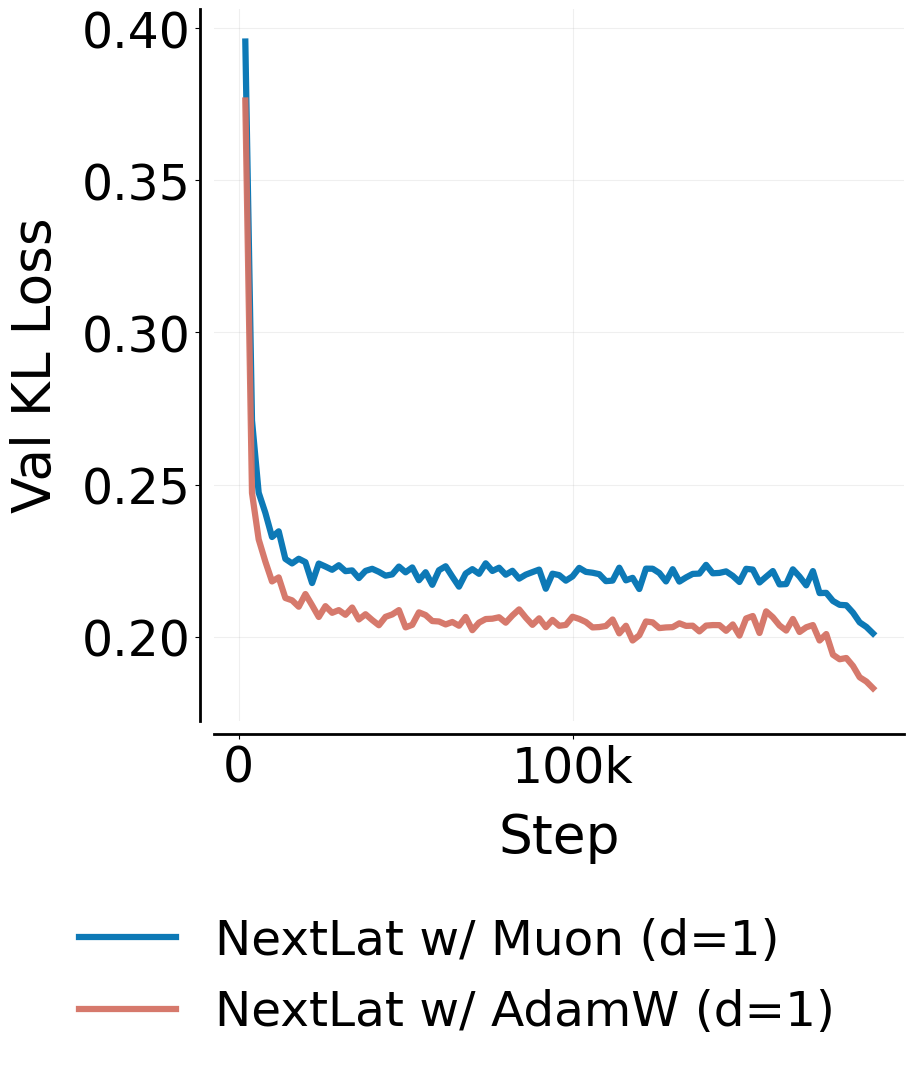}
        \caption{}
        \label{fig:KL_adamw_vs_muon}
    \end{subfigure}
    \hfill
    \begin{subfigure}[t]{0.26\textwidth}
        \centering
        \includegraphics[width=\textwidth]{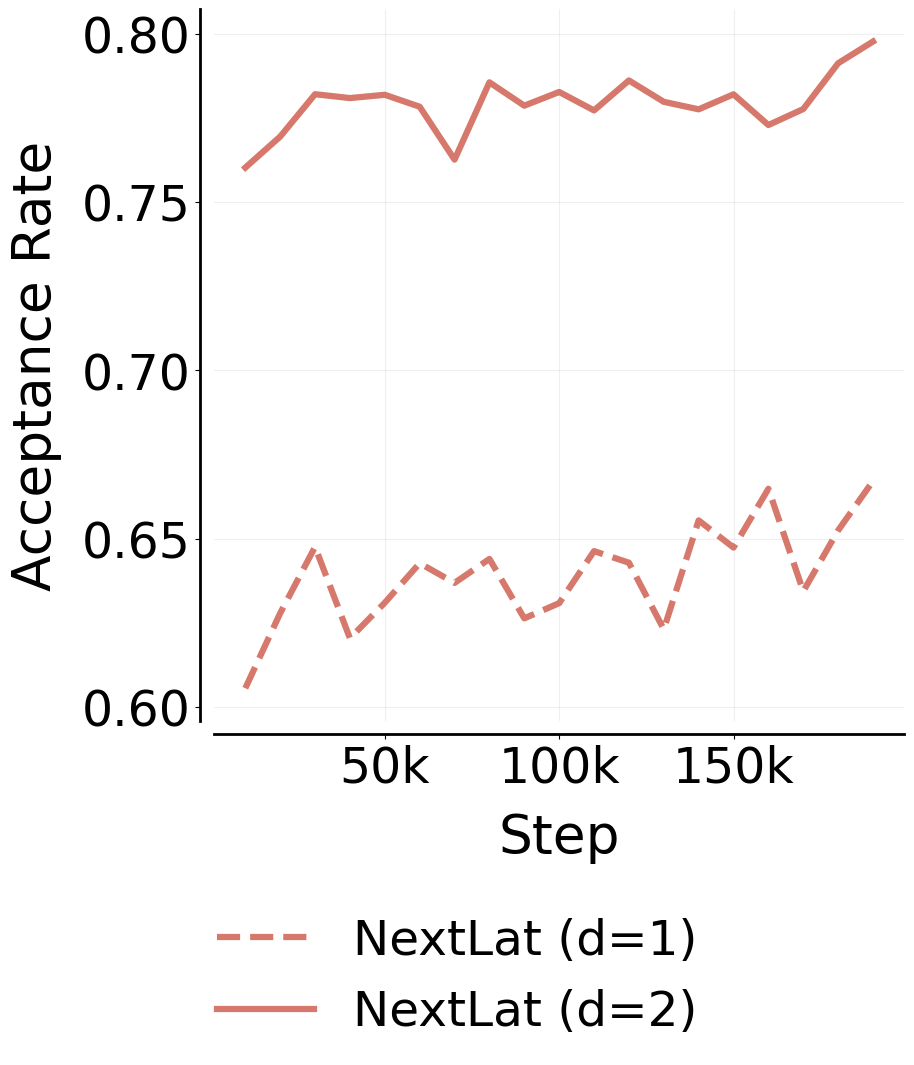}
        \caption{}
        \label{fig:spec_acceptance_rate_progress}
    \end{subfigure}
    \caption{\textbf{FineWeb-Edu Pretraining Curves:} (a) Validation smooth L1 loss during training. (b) Validation KL loss during training. (c) Speculative decoding acceptance rate over a 5-token lookahead horizon (excluding the next-token prediction).}
    \label{fig:nextlat_quirks}
    \vspace{-1em}
\end{figure}
In this section, we expose some optimization quirks that we faced during pretraining with NextLat. First, \cref{fig:MSE_adamw_vs_muon} shows vastly different Smooth L1 loss trajectories when using Muon \citep{jordan2024muon} versus AdamW \citep{loshchilov2019decoupledweightdecayregularization} optimizer under the same learning rate schedule (using the Muon update scaling rule proposed by \citet{liu2025muon}). One possible explanation is that Muon’s orthogonalized update rule enables more efficient latent representation shaping and next-latent prediction optimization. However, \cref{fig:KL_adamw_vs_muon} shows that AdamW achieves better KL loss optimization than Muon. 

Importantly, both optimizers exhibit \textbf{an increase in Smooth L1 loss near the end of training} corresponding to the learning rate cooldown stage, though the effect is substantially more pronounced for AdamW. The cause of this behavior remains unclear. We explored several potential fixes to no avail, including: 
\begin{enumerate}
    \item replacing Smooth L1 with MSE loss,
    \item removing stop-gradients from $\mathcal{L}_\text{next-h}$,
    \item tuning loss coefficients $\lambda_\text{next-h}$ and $\lambda_\mathrm{KL}$,
    \item disabling learning rate decay for the latent transition model, and
    \item disabling weight decay for the latent transition model.
\end{enumerate}
We leave a deeper investigation of this phenomenon to future work. To our reassurance, the increase in smooth L1 loss does not appear to degrade the coherence of the latent transition model. As shown in \cref{fig:spec_acceptance_rate_progress}, the speculative decoding acceptance rate over a lookahead horizon of 5 tokens continues to improve throughout training, even during the phase where the smooth L1 loss increases under AdamW. This suggests that the rise in smooth L1 loss may reflect changes in the scale or geometry of the latent states, rather than degradation in the token-level coherence of the latent transition dynamics. 

On a side note, we observed gradient norm spikes that led to training instability when training MTP with Muon, which unfortunately limited our pretraining analysis to AdamW (despite Muon showing a more favorable learning trajectory with NextLat).

\section{Extra Experiment Details}
\label{section:experiment_details}

In this section, we provide additional experimental setup details and supplementary results for the experiments in the main body. All experiments were run on NVIDIA RTX A5000, NVIDIA H100 NVL, and NVIDIA B200 GPUs. \cref{tab:exp_hyperparams} summarizes the training, model, and NextLat hyperparameters across all experimental domains.

\begin{table}[htbp]
  \centering
  \resizebox{\columnwidth}{!}{%
  \begin{tabular}{lcccccc}
    \toprule
    & \textbf{Manhattan} & \textbf{Countdown} & \textbf{Path-Star} & \textbf{TinyStories} & \textbf{$\mathbf{A_5}$} & \textbf{FineWeb-Edu} \\
    \midrule
    \multicolumn{6}{l}{\textbf{Training Parameters}} \\
    % \midrule
    Steps & 400k (6 epochs) & 100k (204 epochs) & 20k (51 epochs) & 100k (8.5 epochs) & 400k (512 epochs) & 191k (100B tokens) \\
    Batch Size & 256 & 1024 & 512 & 256 & 1024 & 512 \\
    Learning Rate & 1e-4 & 3e-4 & 5e-4 & 3e-4 & 1e-4 & 4e-4 \\
    Learning Rate Schedule & Constant & Constant & Constant & Constant & Constant & WSD \\
    Weight Decay & 0.01 & 0.1 & 0.1 & 0.1 & 0.01 & 0.1 \\
    Clip Gradient Norm & - & 1 & 100 & 1 & 1 & 1 \\
    Optimizer & \multicolumn{6}{c}{\emph{AdamW} \citep{loshchilov2019decoupledweightdecayregularization} with $\beta_1=0.9, \; \beta_2=0.95$} \\
    \midrule
    \multicolumn{6}{l}{\textbf{Model Parameters}} \\
    % \midrule
    Layers & 48 & 12 & 12 & 8 & 2 & 22 \\
    Heads & 8 & 12 & 6 & 8 & 8 & 16 \\
    Hidden Dimension & 384 & 768 & 384 & 768 & 512 & 2048 \\
    \midrule
    \multicolumn{6}{l}{\textbf{NextLat Parameters}} \\
    % \midrule
    $\lambda_\text{next-h}$ & 1.0 & 2.0 & 1.0 & 1.0 & 1.0 &1.0 \\
    $\lambda_\mathrm{KL}$ & 0.1 & 1.0 & 1.0 & 1.0 & 0.0 & 1.0 \\
    $p_\psi$ MLP Hidden Dimension & 1536 & 768 & 384 & 2048 & 1024 & 6528 \\
    $p_\psi$ MLP Layers & 3 & 3 & 3 & 3 & 3 & 3 \\
    \bottomrule
  \end{tabular}
  }
  \caption{Training, Model, and NextLat hyperparameters across all benchmarks.}
  \label{tab:exp_hyperparams}
\end{table}

The hyperparameters reported in \cref{tab:exp_hyperparams} were chosen through a small-scale search, exploring $\lambda_\text{next-h} \in \{1.0, 2.0\}$ and $\lambda_\mathrm{KL} \in \{0.1, 1.0\}$, guided by empirical observations and intuition. Encouragingly, we find that NextLat performs robustly over a wide range of settings. In particular, $\lambda_\mathrm{KL}$ requires minimal tuning; $\lambda_\mathrm{KL}=1.0$ works decently across all tasks. After all, it primarily serves as a complementary alignment objective. Likewise, $\lambda_\text{next-h}=1.0$ is effective in most cases, though slightly higher values (e.g., $\lambda_\text{next-h}=2.0$) was beneficial when the next-latent regression loss is of much smaller scale than the token-level losses (i.e., $\mathcal{L}_\text{next-token}$ and $\mathcal{L}_\mathrm{KL}$). 

For MTP and JTP, we sweep multi-token prediction loss weights $\lambda_\text{MTP} \in \{0.1, 0.2, 0.4, 0.6, 0.8, 1.0\}$ to ensure fair baseline comparisons. For FineWeb-Edu pretraining, extensive hyperparameter tuning is computationally prohibitive, so we use uniform loss weights across all methods (i.e., $\lambda_\text{next-h} = \lambda_\mathrm{KL} = \lambda_\text{MTP} = 1.0$) to provide the fairest comparison possible under our compute constraints.

\subsection{Manhattan Taxi Rides}
\label{subsection:manhattan_details}
Here, we provide additional details on our training and evaluation setups for the Manhattan Taxi Rides benchmark and clarify key differences from the original study of \citet{vafa2024evaluating}. 

\paragraph{Training.} Since this task inherently requires state tracking (i.e., tracking position within Manhattan), and increasing model depth is known to benefit transformers on such tasks \citep{merrill2025illusionstatestatespacemodels}, we employ 48-layer transformers with 384 hidden dimensions and 8 attention heads (88M parameters). This differs from \citet{vafa2024evaluating}, which used 12 layers, 768 hidden dimensions, and 12 heads for their smaller transformer variant. We found that increasing model depth yielded substantial performance gains, whereas increasing hidden dimensionality offered negligible improvement. As shown in \cref{tab:manhattan_results}, the effective latent rank of our models is substantially smaller than 384, suggesting that large hidden dimensions are unnecessary. 

Unlike the original study, which trained models for only one epoch, our models are trained for six epochs, as we observed that performances generally do not converge within a single epoch. This also helps rule out potential ``grokking'' phenomena \citep{power2022grokkinggeneralizationoverfittingsmall}, where generalization improves only after extended periods of overfitting. To enable longer training without substantially increasing runtime, we apply sequence packing, i.e., concatenating multiple sequences into longer ones while masking cross-sequence attention. This enables efficient utilization of GPU memory and computation. Most models complete six training epochs in under three days on a single NVIDIA H100 NVL GPU, except MTP, which has substantially more parameters than the others.

\paragraph{Evaluation.} Models are trained on random traversals of length 100 connected pickup and dropoff intersections. This task inherently requires not only state tracking but also planning, as models must reason over possible future paths to generate valid 100-step trajectories that reach the destination while avoiding dead ends, i.e., road segments disconnected from the goal due to the one-way streets. Random traversals rarely correspond to true shortest paths and do not provide an inductive bias toward learning the shortest path algorithm, which relies on dynamic programming. Consequently, unlike \citet{vafa2024evaluating}, who evaluated pairs with shortest paths of up to 100 steps, we limit evaluation pairs to paths of up to 50 steps. This adjustment ensures that evaluation pairs do not demand long-horizon planning beyond the training distribution, which might otherwise force the model to produce forced predictions to compensate for planning failure. This also ensures that inconsistencies in a model’s internal map are more reflective of world-model incoherence rather than artifacts of long-horizon planning limitations. These evaluation pairs are used to generate \cref{fig:manhattan_maps_full} and to compute the sequence compression and detour robustness metrics, following the procedure of \citet{vafa2024evaluating}.

For the effective latent rank, we pass a batch of 256 sequences (each of length 256) through the model to obtain the hidden state matrix. Singular values smaller than $1\mathrm{e}{-12}$ are discarded, and the effective rank is then computed following \citet{roy2007effective}. For GPT and NextLat, we use the final-layer hidden states. For JTP, we extract the hidden states immediately before the self-attention module in the Fetch head (see Equations 4–5 in \citet{ahn2025jtp}). For MTP, we use the output of the next-token prediction head to compute the effective rank.

\subsection{Countdown}

We largely follow \citet{gandhi2024stream} for the Countdown training and evaluation setup. Each problem consists of four input numbers and a solution sequence comprising three equations, consistent with prior work \citep{gandhi2024stream,ye2025beyond}. A training example is formatted as
\begin{equation*}
\underbrace{14,83,88,91}_{\text{inputs}},\overbrace{23}^{\text{target}}\:|\underbrace{\:83-14=69,\:91-88=3,\:69/3=23}_{\text{solution}}
\end{equation*}
where the first four numbers are the inputs, the fifth is the target number, and the pipe symbol ``\texttt{|}'' separates the input prompt from the solution. During training, loss values corresponding to input prompt are masked out.

Previous studies involving the Countdown benchmark used pretrained GPT-2 byte-pair encoding tokenizers, which do not necessarily tokenize multi-digit numbers as single units. In contrast, we construct a custom tokenizer that assigns each integer from 1 to 10,000 to a unique token, ensuring that every number in the sequence is represented atomically. The arithmetic operators and delimiters, i.e., $\{\; |\;,\;+\;,\;-\;,\;\times\;,\;\div\;\}$, are each assigned their own token indices. Due to the large branching factor of the Countdown problem, we insert eight pipe symbols (“\texttt{|}”) between the input and the solution as pause tokens \citep{goyal2023think}, allowing the model additional computation steps to plan before generating its answer.

\subsection{Path-Star}

Our Path-Star data preparation, training, and evaluation follow \citet{bachmann24a}, except that we increase the weight decay to 0.1, which we found helpful for stable convergence and higher solve rates in the multi-step prediction methods (i.e., MTP, JTP, and NextLat). We evaluate each model’s ability to generate the correct arm on 20k held-out test instances. Unlike \citet{hu2025the} and \citet{ahn2025jtp} which generate a fresh set of graphs every batch, we adopt the original, more challenging setup of \citet{bachmann24a}, which uses a fixed sample size of 200k and node values sampled from $N = 100$. This difference accounts for the performance gap observed in the BST and JTP baselines in \cref{fig:stargraph_result}. The Path-Star experiment is designed to expose the myopic behavior of teacher-forced next-token prediction, which can encourage models to exploit superficial regularities---an effect referred to as the \emph{Clever Hans cheat} \citep{bachmann24a}. Because the task’s sample space grows exponentially with graph size, identifying the correct algorithm that generalizes across all graph instances is highly nontrivial. While not conclusive, our results suggest that latent-space prediction and the inductive bias toward compressing history into belief states promote better discovery of generalizable solutions in data-constrained settings.

\subsection{TinyStories}
\label{subsection:tinystories_details}
\begin{figure}[htbp]
    \centering
    \includegraphics[width=1.0\linewidth]{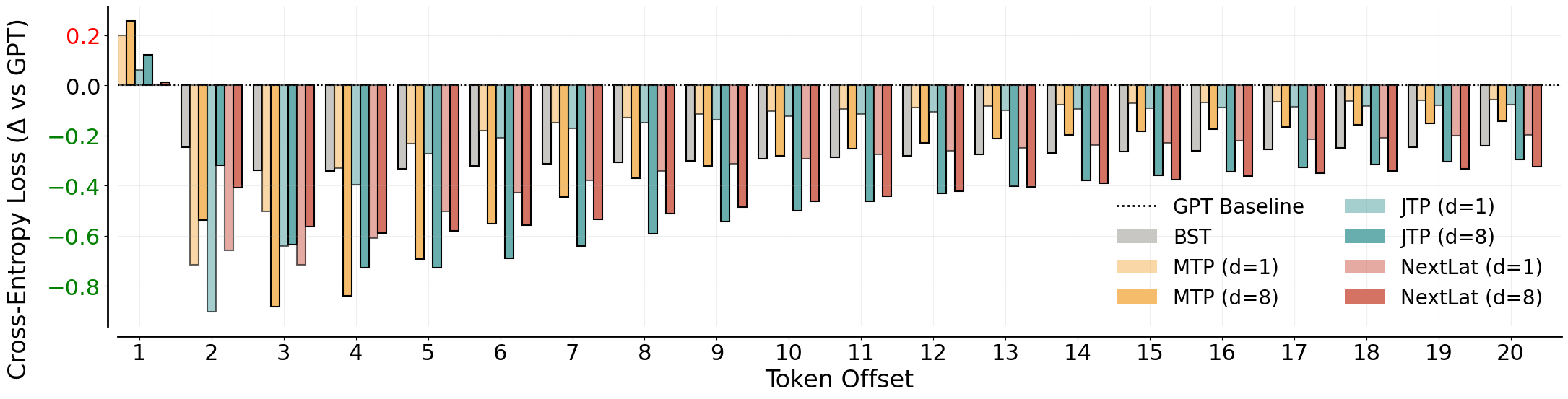}
    \caption{Full plot version of \cref{fig:tinystories_result} that shows probe performance across all 20 tokens offsets.}
    \label{fig:tinystories_results_full20}
\end{figure}

Our TinyStories setup follows exactly \citet{hu2025the}. After pretraining on TinyStories, we train linear (one-layer) probes on the hidden states of the frozen transformer models for an additional 20k steps on the same dataset. All probe training hyperparameters (e.g., learning rate, batch size) match those used during pretraining (see \cref{tab:exp_hyperparams}).

For GPT, JTP, and NextLat, the choice of hidden states follows the setup used for measuring effective rank in the Manhattan taxi rides task, as described in \autoref{subsection:manhattan_details}. For BST, we use the final-layer hidden states of the forward transformer encoder. For MTP \citep{gloeckle2024}, we use the output of the shared transformer trunk, i.e., the hidden state before it branches into separate transformer heads for multi-token prediction, as this final shared representation contains the most predictive information about future tokens.

\subsection{$\mathbf{A_5}$ State Tracking}
The group $A_5$ consists of the even permutations of 5 elements.  Intuitively, each token in the sequence represents a permutation operation that rearranges the five elements, and after each operation the model must output the resulting arrangement (i.e., state). The task is therefore fundamentally a \emph{state-tracking} problem: the model must maintain and update an internal representation of the current state as new permutation operations are sequentially composed.

Our training setup follows \citet{merrill2025illusionstatestatespacemodels}.  We trained the models on 1 million unique 12-token sequences over the group $A_5$. We then evaluated their length generalization capabilities on approximately 100k 36-token sequences. For NextLat training, we used \textbf{one-step supervision} ($d=1$) and optimized only the regression objective ($\lambda_\text{next-h} = 1$), without token-level supervision ($\lambda_\mathrm{KL} = 0$). This ensures that any observed length generalization of the latent dynamics model (RNN) arises purely from faithful next-latent prediction rather than auxiliary multi-token prediction signals.

We also experimented with transformers without positional embeddings (NoPE), motivated by prior work suggesting improved length generalization \citep{kazemnejad2023the}. However, we found that NoPE degraded performance in this setting, so all experiments use Rotary Position Embeddings (RoPE; \citet{su2023roformerenhancedtransformerrotary}). Unlike RNNs, whose hidden states evolve sequentially and therefore implicitly encode token order, transformers process sequences in parallel and rely heavily on positional embeddings to represent token order information during training. Interestingly, despite positional embeddings being explicitly included in the transformer hidden states, the latent dynamics model trained on one-step latent transitions derived from the transformer still generalizes beyond the training sequence length. This suggests that the learned transition dynamics is robust to the transformer's positional encodings.

\subsection{FineWeb-Edu Pretraining}
Following standard practices, all models are optimized with AdamW using a peak learning rate of $4e{-4}$, weight decay of $0.1$, and gradient clipping at $1.0$. We use a global batch size of 500M tokens and a sequence length of 1024 tokens. The learning rate follows a Warmup-Stable-Decay (WSD) schedule \citep{hu2024minicpm}, consisting of a 1B-token linear warmup followed by a 10B-token linear decay phase. All models use the GPT-2 tokenizer with a vocabulary size of 50,257. 

We evaluated the zero-shot accuracy of the pretrained models on nine standard language modeling benchmarks: Wikitext (Wiki; \citet{merity2016pointer}), LAMBADA (LAMB.; \citet{paperno2016lambada}) (standard version), PIQA \citep{bisk2020piqa}, HellaSwag (HellaS.; \citet{zellers2019hellaswag}), WinoGrande (Wino.; \citet{sakaguchi2021winogrande}), ARC-easy (ARC-e) and ARC-challenge (ARC-c) \citep{clark2018think}, Social IQa (SIQA; \citet{sap2019social}), and SciQ \citep{welbl2017crowdsourcing}. We also evaluate the self-speculative decoding performance of JTP, MTP, and NextLat across Wikipedia \citep{wikidump}, Books (BookCorpusOpen; \citet{bookcorpusopencard}), Code (Stack-Edu; \citet{allal2025smollm2smolgoesbig}), and Math (OpenWebMath; \citet{paster2023openwebmath}) domains. To do so, we sample 1024 prompts of length 512 tokens from each dataset and generate 512-token continuations using the speculative sampling algorithm of \citet{leviathan2022fast}. We then measured the speedup in inference of each model using self-speculative decoding compared to naive autoregressive sampling from the transformer, measured on $8\times$NVIDIA B200 GPUs. 

\subsubsection{Results with MTP/JTP (d=4)}
\label{subsubsection:d4_results}

\begin{table}[htbp]
\centering
\resizebox{\columnwidth}{!}{%
\begin{tabular}{l|ccc|cccccccccc}
\toprule
\textbf{Model} 
& \begin{tabular}{@{}c@{}}\textbf{FW-Edu} \\ ppl $\downarrow$\end{tabular} 
& \begin{tabular}{@{}c@{}}\textbf{Wiki.} \\ ppl $\downarrow$\end{tabular} 
& \begin{tabular}{@{}c@{}}\textbf{LAMB.} \\ ppl $\downarrow$\end{tabular}  
& \begin{tabular}{@{}c@{}}\textbf{LAMB.} \\ acc $\uparrow$\end{tabular} 
& \begin{tabular}{@{}c@{}}\textbf{PIQA} \\ acc $\uparrow$\end{tabular}
& \begin{tabular}{@{}c@{}}\textbf{HellaS.} \\ acc $\uparrow$\end{tabular}
& \begin{tabular}{@{}c@{}}\textbf{Wino.} \\ acc $\uparrow$\end{tabular}
& \begin{tabular}{@{}c@{}}\textbf{ARC-e} \\ acc $\uparrow$\end{tabular}
& \begin{tabular}{@{}c@{}}\textbf{ARC-c} \\ acc $\uparrow$\end{tabular}
& \begin{tabular}{@{}c@{}}\textbf{SIQA} \\ acc $\uparrow$\end{tabular}
& \begin{tabular}{@{}c@{}}\textbf{SciQ} \\ acc $\uparrow$\end{tabular}
& \textbf{Avg.} \\
\midrule

GPT 
& \textbf{10.52} & \textbf{17.93} & 20.26 & 42.07 & 73.45 & \textbf{58.79} & \underline{60.46}
& 68.18 & 39.16 & 42.32 
& 86.10
& 58.82 \\

JTP (d=1) 
& 11.08 & 19.28 & 21.88 & 41.35 & \textbf{74.92} & 57.43 & 58.64
& 68.73 & 39.25 & 42.99 
& 87.30
& \underline{58.83} \\

JTP (d=2) 
& 11.18 & 19.60 & 22.11 & 41.37 & 73.34 & 56.84 & 59.98
& 68.86 & 38.57 & \textbf{43.35}
& 86.70
& 58.63 \\

JTP (d=4) 
& 11.29 & 20.45 & 20.65 & 41.90 & 73.45 & 56.58 & 57.70
& 69.73 & \underline{39.68} & 42.53
& \textbf{88.50}
& 58.76 \\

MTP (d=1) 
& 10.90 & 18.82 & 20.23 & 41.26 & \underline{74.32} & 58.05 & \textbf{60.54}
& 68.52 & 38.91 & 42.84 
& 85.40
& 58.76 \\

MTP (d=2) 
& 11.00 & 18.61 & \underline{18.34} & \underline{43.43} & 72.80 & 57.92 & 59.35
& 68.35 & 39.08 & 41.97 
& 86.60
& 58.69 \\

MTP (d=4) 
& 11.10 & 18.97 & 22.75 & 40.69 & 73.72 & 57.39 & 58.33
& \textbf{70.20} & 39.51 & 42.12 
& 85.90
& 58.48 \\

\midrule
\textbf{NextLat (d=1)} 
& \underline{10.83} & \underline{18.39} & 19.77 & 41.08 & 73.07 & \underline{58.35} & 59.27
& 69.65 & \underline{39.68} & \underline{43.24} 
& 86.00
& 58.79  \\

\textbf{NextLat (d=2)} 
& 10.88 & 18.44 & \textbf{17.83} & \textbf{43.86} & 73.61 & 57.79 & 59.20
& \underline{69.74} & \textbf{40.10} & 41.91 
& \underline{87.50}
& \textbf{59.21} \\

\bottomrule
\end{tabular}
}
\caption{Downstream language modeling evaluation on 1.3B-parameter models trained on 100B FineWeb-Edu tokens. Best scores are in bold and second-best are underlined.}
\label{tab:lm_eval_results_withd4}
\end{table}
\begin{wrapfigure}{r}{0.25\textwidth}
  \centering
  \vspace{-1em}
  \includegraphics[width=\linewidth]{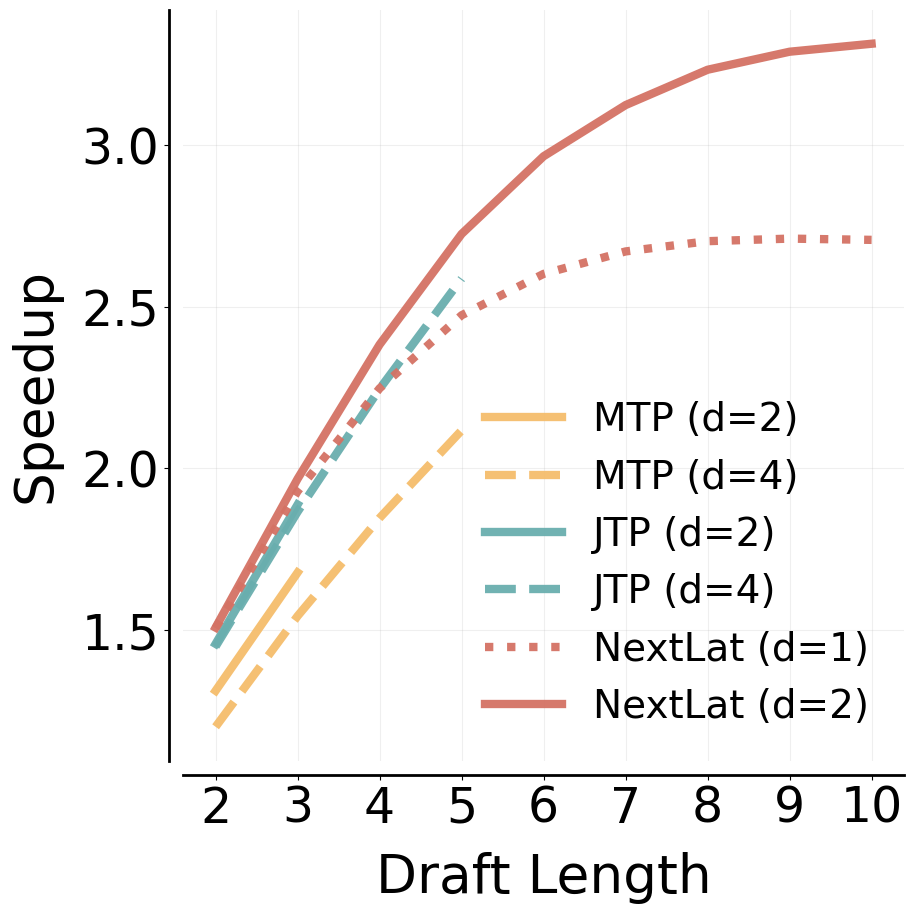}
  \caption{Inference speedup ratio on the FineWeb-Edu validation set.}
  \label{fig:fineweb_spec_results_withd4}
  \vspace{-1em}
\end{wrapfigure}
In this section, we extend our comparisons to include JTP and MTP trained with larger horizons ($d=4$). Note that the training cost of JTP and MTP increases substantially with larger $d$ (see \cref{tab:fineweb_compute}), making JTP ($d=4$) and MTP ($d=4$) significantly more expensive to train than NextLat ($d=1,2$). 

\cref{tab:lm_eval_results_withd4} shows that increasing the multi-token prediction horizon to $d=4$ still does not yield meaningful improvements on multiple-choice benchmarks; NextLat ($d=2$) continues to achieve the best average accuracy overall. \cref{tab:spec_eval_results_withd4} shows that even when JTP and MTP are trained to be able to draft more tokens (i.e., 4 tokens ahead), they still fail to surpass the variable-length speculative decoding performance of NextLat across most domains, with the exception of the Code domain. Finally, \cref{fig:fineweb_spec_results_withd4} shows that, on the FineWeb-Edu validation set, increasing the training horizon to $d=4$ yields only modest speedup gains for JTP and MTP. Their speedup curves saturate earlier and remain below that of NextLat ($d=2$), which continues to improve with longer draft lengths and achieves the highest overall speedup.

%Still, even with longer multi-token supervision, these baselines still fail to surpass NextLat in speculative decoding performance. This highlights a key advantage of NextLat's variable-length speculative decoding: it enables long speculative drafts while requiring training only at shallow, computationally efficient horizons.
\begin{table}[htbp]
\centering
\resizebox{\columnwidth}{!}{%
\begin{tabular}{lcc|cc|cc|cc}
\toprule
 & \multicolumn{2}{c}{\textbf{Wikipedia}} 
 & \multicolumn{2}{c}{\textbf{Books}} 
 & \multicolumn{2}{c}{\textbf{Code}}
 & \multicolumn{2}{c}{\textbf{Math}} \\
\cmidrule(lr){2-3} \cmidrule(lr){4-5} \cmidrule(lr){6-7} \cmidrule(lr){8-9}
\textbf{Model} 
& Speedup & Accepted Tokens 
& Speedup & Accepted Tokens 
& Speedup & Accepted Tokens
& Speedup & Accepted Tokens \\
\midrule
JTP (d=1) & 1.46 & 0.96 & 1.47 & 0.97 & 1.47 & 0.98 & 1.46 & 0.97  \\
JTP (d=2) & 1.88 & 1.84 & 1.90 & 1.89 & 1.88 & 1.85 & 1.89 & 1.86 \\
JTP (d=4) & 2.58 & 3.32 & 2.62 & 3.43 & \textbf{2.61} & \underline{3.43} & \underline{2.42} & 3.02 \\
MTP (d=1) & 1.38 & 0.91 & 1.39 & 0.95 & 1.40 & 0.97 & 1.39 & 0.95 \\
MTP (d=2) & 1.68 & 1.72 & 1.72 & 1.83 & 1.75 & 1.91 & 1.72 & 1.84 \\
MTP (d=4) & 2.10 & 3.04 & 2.25 & 3.44 & 2.32 & \textbf{3.68} & 2.25 & \underline{3.46} \\
\midrule
\textbf{NextLat (d=1)} & \underline{2.68} & \underline{3.52} & \underline{2.72} & \underline{3.64} & 2.29 & 2.66 & 2.30 & 2.72 \\
\textbf{NextLat (d=2)} & \textbf{3.21} & \textbf{4.59} & \textbf{3.32} & \textbf{4.86} & \underline{2.38} & 2.83 & \textbf{2.87} & \textbf{3.94} \\
\bottomrule
\end{tabular}%
}
\caption{Relative speedup and average accepted tokens per drafting steps over diverse domains. Note that ``Accepted Tokens" excludes the next-token prediction which is always accepted.}
\label{tab:spec_eval_results_withd4}
\end{table}

\end{document}